\def\ci{\!\perp\!}
\def\nci{\!\not\perp\!}
\def\ra{\rightarrow}
\def\la{\leftarrow}
\def\aa{\leftrightarrow}
\def\ao{\leftarrow\!\!\!\!\!\multimap}
\def\oa{\mathrel{\reflectbox{\ensuremath{\ao}}}}
\newcommand{\comments}[1]{}
\newcommand{\wh}[1]{\widehat{#1}}
\tikzset{tt/.style={decoration={
  markings,
  mark=at position .485 with {\arrow{>}},
  mark=at position .515 with {\arrow{<}}},postaction={decorate}}}
\tikzstyle{var} = [node distance = 5mm]
\tikzstyle{unobserved}=[var, dashed, circle, draw=black!80]
\tikzstyle{intervention}=[circle, double, draw =black!80, node distance = 5mm, inner sep=0.5mm]
\tikzstyle{chance}=[circle, minimum size = 10mm, thick, draw =black!80, node distance = 6mm, align=center, fill=white]
\tikzstyle{utility}=[diamond, aspect=1, minimum size = 10mm, thick, draw =black!80, node distance = 6mm,align=center, fill=white]
\tikzstyle{decision}=[rectangle, aspect=1, minimum size = 10mm, thick, draw =black!80, node distance = 6mm,align=center, fill=white]
\tikzstyle{gate}=[node distance = 10mm, align=center]
\tikzstyle{box}=[rectangle, inner sep=2.0mm,draw=black!100, ]
\tikzstyle{connect}=[-latex]
\tikzstyle{dconnect}=[latex-latex]
\tikzstyle{textnode} = []
\begin{document}

\title{Causal Effect Identification in Acyclic Directed Mixed Graphs and Gated Models}
\author{\name Jose M. Pe\~{n}a \email jose.m.pena@liu.se\\
\name Marcus Bendtsen \email marcus.bendtsen@liu.se\\
\addr Department of Computer and Information Science\\
Link{\"o}ping University (Sweden)}
\maketitle


\begin{abstract}
We introduce a new family of graphical models that consists of graphs with possibly directed, undirected and bidirected edges but without directed cycles. We show that these models are suitable for representing causal models with additive error terms. We provide a set of sufficient graphical criteria for the identification of arbitrary causal effects when the new models contain directed and undirected edges but no bidirected edge. We also provide a necessary and sufficient graphical criterion for the identification of the causal effect of a single variable on the rest of the variables. Moreover, we develop an exact algorithm for learning the new models from observational and interventional data via answer set programming. Finally, we introduce gated models for causal effect identification, a new family of graphical models that exploits context specific independences to identify additional causal effects.
\end{abstract}

\begin{keywords}
Acyclic directed mixed graphs; causal models; answer set programming.
\end{keywords}

\section{Introduction}\label{sec:introduction}

Undirected graphs (UGs), bidirected graphs (BGs), and directed and acyclic graphs (DAGs) have extensively been studied as representations of independence models. DAGs have also been studied as representation of causal models, because they can model asymmetric relationships between random variables. DAGs and UGs (respectively BGs) have been extended into chain graphs (CGs), which are graphs with possibly directed and undirected (respectively bidirected) edges but without semidirected cycles. Therefore, CGs can model both symmetric and asymmetric relationships between random variables. CGs with possibly directed and undirected edges may represent a different independence model depending on whether the Lauritzen-Wermuth-Frydenberg (LWF) or the Andersson-Madigan-Perlman (AMP) interpretation is considered \citep{Lauritzen1996,Anderssonetal.2001}. CGs with possibly directed and bidirected edges have a unique interpretation, the so-called multivariate regression (MVR) interpretation \citep{CoxandWermuth1996}. MVR CGs have been extended by (i) relaxing the semidirected acyclity constraint so that only directed cycles are forbidden, and (ii) allowing up to two edges between any pair of nodes. The resulting models are called original acyclic directed mixed graphs (oADMGs) \citep{Richardson2003}. AMP CGs have also been extended similarly \citep{Penna2016}. The resulting models are called alternative acyclic directed mixed graphs (aADMGs).

In this paper, we combine oADMGs and aADMGs into what we simply call ADMGs. These are graphs with possibly directed, undirected and bidirected edges but without directed cycles. Moreover, there can be up to three edges between any pair of nodes. This work complements the existing works for the following reasons. To our knowledge, the only mixed graphical models in the literature that subsume AMP CGs are the already mentioned aADMGs and the so-called marginal AMP CGs \citep{Penna2014}. However, marginal AMP CGs are simple graphs with possibly directed, undirected and bidirected edges but without semidirected cycles and, moreover, some constellations of edges are forbidden. Therefore, marginal AMP CGs do not subsume ADMGs. Likewise, no other family of mixed graphical models that we know of (e.g. oADMGs, summary graphs \citep{CoxandWermuth1996}, ancestral graphs \citep{RichardsonandSpirtes2002}, MC graphs \citep{Koster2002} or loopless mixed graphs \citep{SadeghiandLauritzen2014}) subsume AMP CGs and hence ADMGs. To see it, we refer the reader to the works by \citet[p. 1025]{RichardsonandSpirtes2002} and \citet[Section 4.1]{SadeghiandLauritzen2014}.

\begin{figure}
\begin{center}
\begin{tabular}{|c|c|c|}
\hline
DAG&oADMG&aADMG\\
\hline
\begin{tikzpicture}[inner sep=1mm]
\node at (0,0) (A) {$A$};
\node at (2,0) (B) {$B$};
\node at (1,1) (C) {$C$};
\node at (0,1) (UA) {$U_A$};
\node at (2,1) (UB) {$U_B$};
\node at (1,2) (UC) {$U_C$};
\path[->] (A) edge (B);
\path[->] (UA) edge (A);
\path[->] (UB) edge (B);
\path[->] (UC) edge (C);
\path[->] (UA) edge (UC);
\path[->] (UC) edge (UB);
\end{tikzpicture}
&
\begin{tikzpicture}[inner sep=1mm]
\node at (0,0) (A) {$A$};
\node at (1.5,0) (B) {$B$};
\node at (0.75,1) (C) {$C$};
\path[->] (A) edge (B);
\path[<->] (A) edge [bend left] (B);
\path[<->] (A) edge (C);
\path[<->] (B) edge (C);
\end{tikzpicture}
&
\begin{tikzpicture}[inner sep=1mm]
\node at (0,0) (A) {$A$};
\node at (1.5,0) (B) {$B$};
\node at (0.75,1) (C) {$C$};
\path[->] (A) edge (B);
\path[-] (A) edge (C);
\path[-] (B) edge (C);
\end{tikzpicture}\\
\hline
\end{tabular}
\end{center}\caption{Example where $p(B | \wh{A})$ is identifiable from the aADMG but not from the oADMG.}\label{fig:example1}
\end{figure}

\begin{figure}
\begin{center}
\begin{tabular}{|c|c|c|}
\hline
DAG&oADMG&aADMG\\
\hline
\begin{tikzpicture}[inner sep=1mm]
\node at (0,0) (A) {$A$};
\node at (2,0) (B) {$B$};
\node at (1,1) (C) {$C$};
\node at (0,1) (UA) {$U_A$};
\node at (2,1) (UB) {$U_B$};
\node at (1,2) (UC) {$U_C$};
\path[->] (A) edge (B);
\path[->] (UA) edge (A);
\path[->] (UB) edge (B);
\path[->] (UC) edge (C);
\path[->] (UA) edge (UC);
\path[<-] (UC) edge (UB);
\end{tikzpicture}
&
\begin{tikzpicture}[inner sep=1mm]
\node at (0,0) (A) {$A$};
\node at (1.5,0) (B) {$B$};
\node at (0.75,1) (C) {$C$};
\path[->] (A) edge (B);
\path[<->] (A) edge (C);
\path[<->] (B) edge (C);
\end{tikzpicture}
&
\begin{tikzpicture}[inner sep=1mm]
\node at (0,0) (A) {$A$};
\node at (1.5,0) (B) {$B$};
\node at (0.75,1) (C) {$C$};
\path[->] (A) edge (B);
\path[-] (A) edge [bend left] (B);
\path[-] (A) edge (C);
\path[-] (B) edge (C);
\end{tikzpicture}\\
\hline
\end{tabular}
\end{center}\caption{Example where $p(B | \wh{A})$ is identifiable from the oADMG but not from the aADMG.}\label{fig:example1b}
\end{figure}
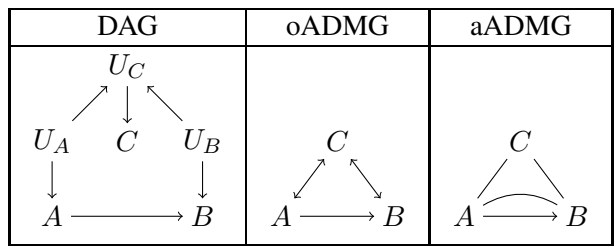

In addition to represent independence models, some of the families of graphical models mentioned above have been used for causal effect identification, i.e. to determine if the causal effect of an intervention is computable from observed quantities. For instance, Pearl's approach to causal effect identification makes use of oADMGs to represent causal models over the observed variables \citep{Pearl2009}. The directed edges represent potential causal relationships, whereas the bidirected edges represent potential confounding, i.e. a latent common cause. A key feature of Pearl's approach is that no assumption is made about the functional form of the causal relationships. That is, each variable $A$ is an unconstrained function of its observed causes $Pa(A)$ and its unobserved causes $U_A$, i.e. $A = f(Pa(A), U_A)$. In this paper, we study causal effect identification under the assumption that $A = f(Pa(A)) + U_A$, i.e. under the assumption of additive errors. This is a rather common assumption in causal discovery, e.g. see \citep{BuhPetErn14,Janzingetal.2009,Petersetal.2014}. Specifically, we show that ADMGs are suitable for representing such causal models: An undirected edge between two nodes represents potential dependence between their error terms given the rest of the error terms, as opposed to a bidirected edge that represents potential marginal dependence due to confounding. The reason for studying ADMGs for causal effect identification is that we may identify more causal effects from them than from oADMGs, since the former are tailored to the additive error assumption. We illustrate this question with the example in Figure \ref{fig:example1}, which is borrowed from \citet{Penna2016}. The ADMGs in the figure represent the causal model over the observed variables represented by the DAG. The oADMG is derived from the DAG by keeping the directed edges between observed variables, and adding a bidirected edge between two observed variables if and only if they have a confounder \citep[Section 5]{TianandPearl2002b}. The aADMG is derived from the DAG by keeping the directed edges between observed variables, and adding an undirected edge between two observed variables if and only if their unobserved causes are not separated in the DAG given the unobserved causes of the rest of the observed variables. Clearly, the effect on $B$ of an intervention on $A$, i.e. $p(B | \wh{A})$, is not identifiable from the oADMG \citep[p. 94]{Pearl2009}, but it is identifiable from the aADMG and is given by
\[
p(B | \wh{A}) = \sum_c p(B | A, c) p(c).
\]
To see it, recall that we assume additive noise. This implies that $C$ determines $U_C$, which blocks the path $A \la U_A \ra U_C \ra U_B \ra B$ in the DAG. This can also be seen directly in the aADMG, as $C$ blocks the path $A - C - B$. Therefore, we can identify the desired causal effect by just adjusting for $C$, since $C$ blocks all non-causal paths from $A$ to $B$. It is worth mentioning that there are also cases where the oADMG allows for causal effect identification whereas the aADMG does not. One such case is shown in Figure \ref{fig:example1b}, where we have just replaced the edge $U_C \ra U_B$ in Figure \ref{fig:example1} with the edge $U_C \la U_B$. Specifically, $p(B | \wh{A})$ is not identifiable from the aADMG by Theorem \ref{the:children2} in this article, whereas it is identifiable from the oADMG, i.e. $p(B | \wh{A}) = p(B | A)$. Therefore, oADMGs and aADMGs are more complementary than competing causal models. To further illustrate our point, we make the example in Figure \ref{fig:example1} more concrete by turning it into the following invented gambling game:
\begin{align} \nonumber \label{eq:game}
U_A &\sim N(0,\sigma)\\ \nonumber
U_C &\sim N(U_A,\sigma)\\ \nonumber
U_B &\sim N(U_C,\sigma)\\ \nonumber
A &= U_A\\ \nonumber
C &= U_C\\
B &= A + U_B.
\end{align}
In other words, $U_A$, $U_B$ and $U_C$ represent three unobserved randomly chosen numbers that determine three observed numbers represented by $A$, $B$ and $C$. Now, we are told that $A$ has been set to value $a$ independently of the value of $U_A$. We are asked to bet on the value of $B$. To make an informed bet, we would like to know $p(B | \wh{A}=a)$. However, all we are provided with is $p(A,B,C)$ and the oADMG or the aADMG in Figure \ref{fig:example1}. As discussed above, $p(A,B,C)$ and the aADMG are enough for our purpose, whereas $p(A,B,C)$ and the oADMG are not. As also discussed above, this situation can be reversed if we modify the game as follows:
\begin{align} \nonumber \label{eq:game2}
U_C &\sim N(U_A + U_B,\sigma)\\
U_B &\sim N(0,\sigma)
\end{align}
that is, $p(A,B,C)$ and the oADMG are now enough for our purpose, whereas $p(A,B,C)$ and the aADMG are not.

As mentioned, aADGMs were proposed by \citet{Penna2016}, who mainly studied them as representation of statistical independence models. In particular, their global, local and pairwise Markov properties were studied. Their usage to represent causal models was also discussed, but no formal criteria or calculus for causal effect identification from them were given. This paper is a first step to fill that gap. In particular, we present a calculus similar to the {\it do}-calculus by \citet{Pearl2009}, and a decomposition of the distribution over the observed random variables similar to the Q-decomposition by \citet{TianandPearl2002a,TianandPearl2002b}. From this calculus and decomposition, we derive a set of sufficient graphical criteria for the identification of arbitrary causal effects from aADMGs. We also provide a necessary and sufficient graphical criterion for the identification of the causal effect of a single variable on the rest of the variables. Our ambition is to extend these results to ADMGs in the future.

\begin{figure}
\begin{center}
\begin{tikzpicture}[inner sep=1mm]
\begin{scope}
\node at (0,0) (A) {$A$};
\node at (1.5,0) (B) {$B$};
\node at (0.75,1) (C) {$C$};
\path[->] (A) edge (B);
\path[-] (A) edge (C);
\path[-] (B) edge (C);
\node[box, inner sep=8pt, fit= (A) (B) (C), label=left:$$] (O1) {};
\end{scope}
\begin{scope}[xshift=5.0cm]
\node at (0,0) (A) {$A$};
\node at (1.5,0) (B) {$B$};
\node at (0.75,1) (C) {$C$};
\path[->] (A) edge (B);
\path[<->] (A) edge (C);
\path[<->] (B) edge (C);
\node[box,  inner sep=8pt, fit= (A) (B) (C), label=left:$$] (O2) {};
\end{scope}
\node[gate,right = of O1,xshift=-0.5cm,yshift=0.7cm] (G1) {$\wh{A}  \leq 0$};
\path (O1) edge [connect] (G1);
\path (G1) edge [connect] (O2);
\node[gate,left=of O2,xshift=0.5cm,yshift=-0.7cm] (G2) {$\wh{A} > 0$};
\path (O2) edge [connect] (G2);
\path (G2) edge [connect] (O1);
\end{tikzpicture}
\end{center}
\caption{Gated model with two contexts. The aADMG is used in the context $\wh{A}>0$, whereas the oADMG is used in the context $\wh{A} \leq 0$. This ensures that $p(B | \wh{A})$ is identifiable.}\label{fig:gm}
\end{figure}
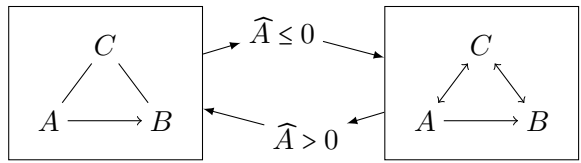

We have seen above that choosing an appropriate causal model is crucial for identifying the causal effect of interest. However, the domain being modeled may consist of different regimes or contexts, and the appropriate causal model for one regime may not be quite so for another. For instance, consider again the gambling games described above. Assume that we are told that the original gambling game (Equation \ref{eq:game}) applies when $\wh{A} > 0$, whereas the modified one (Equation \ref{eq:game2}) applies when $\wh{A} \leq 0$. Therefore, the aADMG in Figure \ref{fig:example1} should be preferred in the context $\wh{A}>0$ since it allows identifying $p(B | \wh{A})$. For the same reason, the oADMG in Figure \ref{fig:example1b} should be preferred in the context $\wh{A} \leq 0$. One may hope that combining aADMGs and oADMGs into ADMGs will help to identify a single model that is appropriate for both contexts. Whether this will be the case is not clear to us as of today (as mentioned above, we do not know yet how to perform causal effect identification in ADMGs). In the meantime, we propose to solve this problem with the help of the gated model in Figure \ref{fig:gm}. In this model, we have the aADMG and the oADMG connected with gates such that when $\wh{A}>0$ we shall use the aADMG to the left, and when $\wh{A} \leq 0$ then we shall use the oADMG to the right. Gated models have previously been used to model independence models induced by dynamical systems with recurrent regimes \citep{bendtsen-ijar,bendtsen-online,bendtsen-regime-baseball}. However, this is the first work to use them for causal effect identification. We will moreover show how gated models allow us to represent causal phenomena such as unstable interventions, non-deterministic outcomes of interventions, and mechanism dependent outcome of interventions.

The rest of the paper is organized as follows. Section \ref{sec:preliminaries} introduces the notation and some preliminary concepts. Section \ref{sec:separation} introduces two equivalent separation criteria for ADMGs, which define their semantics as a formalism to represent independence models. Section \ref{sec:interpretationandlearning} provides an intuitive causal interpretation of ADMGs as systems of structural equations with additive and correlated errors. Section \ref{sec:interpretationandlearning} also describes an exact algorithm for learning ADMGs from observational and interventional data via answer set programming \citep{gelfond_1988,DBLP:journals/amai/Niemela99,DBLP:journals/ai/SimonsNS02}. Section \ref{sec:identification} presents graphical criteria for causal effect identification in aADMGs. Section \ref{sec:csi} is devoted to gated models for causal effect identification. Finally, Section \ref{sec:conclusions} closes the paper with a summary and future lines of research.

\begin{figure}
\centering
\begin{tabular}{|c|c|c|}
\hline
\begin{tikzpicture}[inner sep=1mm]
\node at (0,0) (A) {$A$};
\node at (1,0) (B) {$B$};
\node at (2,0) (D) {$E$};
\node at (2,-1) (F) {};
\path[->] (A) edge (B);
\path[->] (B) edge (D);
\path[-] (B) edge [bend left] (D);
\path[<->] (B) edge [bend right] (D);
\end{tikzpicture}
&
\begin{tikzpicture}[inner sep=1mm]
\node at (0,0) (A) {$A$};
\node at (1,0) (B) {$B$};
\node at (2,0) (C) {$C$};
\node at (3,0) (D) {$E$};
\node at (2,-1) (F) {};
\path[->] (A) edge (B);
\path[->] (B) edge (C);
\path[->] (C) edge (D);
\path[-] (A) edge [bend left] (C);
\path[<->] (B) edge [bend left] (D);
\end{tikzpicture}
&
\begin{tikzpicture}[inner sep=1mm]
\node at (0,0) (A) {$A$};
\node at (1,0) (B) {$B$};
\node at (2,0) (C) {$C$};
\node at (3,0) (D) {$D$};
\node at (4,0) (E) {$E$};
\node at (2,-1) (F) {$F$};
\path[-] (A) edge (B);
\path[-] (B) edge (C);
\path[-] (C) edge (D);
\path[-] (D) edge (E);
\path[<->] (A) edge [bend left] (D);
\path[<->] (B) edge [bend left] (E);
\path[<->] (C) edge (F);
\end{tikzpicture}
\\
\hline
\end{tabular}\caption{Examples of ADMGs.}\label{fig:example}
\end{figure}
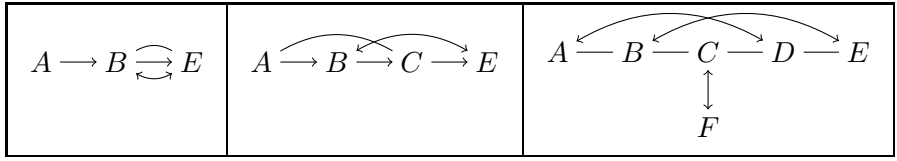

\section{Preliminaries}\label{sec:preliminaries}

In this section, we recall some concepts about graphical models. Unless otherwise stated, all the graphs and probability distributions in this paper are defined over a finite set $V$. The elements of $V$ are not distinguished from singletons. An ADMG $G$ is a graph with possibly directed, undirected and bidirected edges but without directed cycles, i.e. $G$ cannot have a subgraph $A \ra \ldots \ra A$. There may be up to three edges between any pair of nodes, but the edges must be different. Edges between a node and itself are not allowed. See Figure \ref{fig:example} for examples of ADMGs. Note that oADMGs are ADMGs without undirected edges, and aADMGs are ADMGs without bidirected edges.

Given an ADMG $G$, we represent with $A \oa B$ that $A \ra B$ or $A \aa B$ (or both) is in $G$. The parents of $X \subseteq V$ in $G$ are $Pa_G(X) = \{A | A \ra B$ is in $G$ with $B \in X \}$. The children of $X \subseteq V$ in $G$ are $Ch_G(X) = \{B | A \ra B$ is in $G$ with $A \in X \}$. The neighbors of $X \subseteq V$ in $G$ are $Ne_G(X) = \{A | A - B$ is in $G$ with $B \in X \}$. The spouses of $X \subseteq V$ in $G$ are $Sp_G(X) = \{A | A \aa B$ is in $G$ with $B \in X \}$. The descendants of $X \subseteq V$ in $G$ are $De_G(X) = \{B | A \ra \ldots \ra B$ is in $G$ with $A \in X$ or $B \in X \}$. The ancestors of $X \subseteq V$ in $G$ are $An_G(X) = \{A | A \ra \ldots \ra B$ is in $G$ with $B \in X$ or $A \in X \}$. Moreover, if $An_G(X) = X$ then we say that $X$ is an ancestral set. A route between a node $V_{1}$ and a node $V_{n}$ on $G$ is a sequence of (not necessarily distinct) nodes $V_{1}, \ldots, V_{n}$ such that $V_i$ and $V_{i+1}$ are adjacent in $G$ for all $1 \leq i < n$. We do not distinguish between the sequences $V_{1}, \ldots, V_{n}$ and $V_{n}, \ldots, V_{1}$, i.e. they represent the same route. If the nodes in the route are all distinct, then the route is called a path. The subgraph of $G$ induced by $X \subseteq V$, denoted as $G_X$, is the graph over $X$ that has all and only the edges in $G$ whose both ends are in $X$. Similarly, let $G^X$ denote the graph over $X \subseteq V$ constructed as follows: $A \oa B$ is in $G^X$ if and only if $A \oa B$ is in $G$, whereas $A - B$ is in $G^X$ if and only if $A - B$ is in $G$ or $A - V_1 - \ldots - V_n - B$ is in $G$ with $V_1, \ldots, V_n \notin X$.

\section{Separation Criteria}\label{sec:separation}

In this section, we introduce two equivalent separation criteria for ADMGs that define their semantics as a formalism to represent independence models. Specifically, a node $C$ on a path in an ADMG $G$ is said to be a collider on the path if $A \oa C \ao B$ or $A \oa C - B$ is a subpath. Moreover, the path is said to be connecting given $Z \subseteq V$ when
\begin{itemize}
\item every collider on the path is in $An_G(Z)$, and

\item every non-collider $C$ on the path is outside $Z$ unless $A - C - B$ is a subpath and $Pa_G(C) \setminus Z \neq \emptyset$ or $Sp_G(C) \neq \emptyset$.
\end{itemize}

Let $X$, $Y$ and $Z$ denote three disjoint subsets of $V$. When there is no path in $G$ connecting a node in $X$ and a node in $Y$ given $Z$, we say that $X$ is separated from $Y$ given $Z$ in $G$ and denote it as $X \ci_G Y | Z$. Note that this separation criterion generalizes the existing separation criteria for UGs, BGs, DAGs, AMP and MVR CGs, oADMGs and aADMGs. In other words, we can use the criterion above on all these families of graphical models.

Unlike in UGs, BGs, DAGs, and AMP and MVR CGs, two non-adjacent nodes in an ADMG are not necessarily separated. For example, $A \ci_G E | Z$ does not hold for any $Z$ in the ADMGs in Figure \ref{fig:example}. This drawback is shared by oADMGs \citep[p. 752]{EvansandRichardson2013}, summary graphs and MC graphs \citep[p. 1023]{RichardsonandSpirtes2002}, and ancestral graphs \citep[Section 3.7]{RichardsonandSpirtes2002}. For ancestral graphs, the problem can be solved by adding edges to the graph without altering the separations represented until every missing edge corresponds to a separation \citep[Section 5.1]{RichardsonandSpirtes2002}. A similar solution does not exist for ADMGs (we omit the details).

Finally, we present an alternative to the separation criterion introduced above. The alternative is easier to work with in some cases. The theorem below proves that both criteria are equivalent. Specifically, a node $C$ on a route in an ADMG $G$ is said to be a collider on the route if $A \oa C \ao B$ or $A \oa C - B$ is a subroute. Note that maybe $A = B$. Moreover, the route is said to be connecting given $Z \subseteq V$ when
\begin{itemize}
\item every collider on the route is in $Z$, and

\item every non-collider $C$ on the route is outside $Z$ unless $A - C - B$ is a subroute and $Sp_G(C) \neq \emptyset$.
\end{itemize}

Let $X$, $Y$ and $Z$ denote three disjoint subsets of $V$. When there is no route in $G$ connecting a node in $X$ and a node in $Y$ given $Z$, we say that $X$ is separated from $Y$ given $Z$ in $G$ and denote it as $X \ci_G Y | Z$.

\begin{theorem}\label{the:2}
Given $\alpha, \beta \in V$ and $Z \subseteq V \setminus ( \alpha \cup \beta )$, there is a path in an ADMG $G$ connecting $\alpha$ and $\beta$ given $Z$ if and only if there is a route in $G$ connecting $\alpha$ and $\beta$ given $Z$.
\end{theorem}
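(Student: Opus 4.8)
The plan is to prove the two implications separately, using throughout a simple comparison of the two criteria read node by node: the route conditions are strictly stronger than the path conditions. Indeed, a collider forced to lie in $Z$ lies a fortiori in $An_G(Z)$, and the spouse exception for an undirected non-collider in $Z$ is a special case of the ``$Sp_G(C)\neq\emptyset$ or $Pa_G(C)\setminus Z\neq\emptyset$'' exception. Consequently the genuine content is only (i) manufacturing a connecting route out of a connecting path, and (ii) extracting a connecting path out of a connecting route.

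For direction (i), I would start from a path connecting $\alpha$ and $\beta$ given $Z$ and repair, by local surgery, the two features that are legal under the path criterion but illegal under the route criterion. First, a collider $C$ with $C\in An_G(Z)\setminus Z$: choose a shortest directed path $C\ra D_1\ra\cdots\ra D_k$ with $D_k\in Z$, so that $D_1,\ldots,D_{k-1}\notin Z$, and splice the spike $C\ra D_1\ra\cdots\ra D_k\la\cdots\la D_1\la C$ in place of $C$; then the apex $D_k$ is a collider in $Z$, each intermediate $D_\ell$ is a non-collider outside $Z$, and the two copies of $C$ become non-colliders outside $Z$. Second, an undirected non-collider $C\in Z$ activated only through a parent $P\in Pa_G(C)\setminus Z$: splice $C\la P\ra C$ in place of $C$, turning the two copies of $C$ into colliders in $Z$ and inserting $P$ as a non-collider outside $Z$. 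Every untouched node of the original path already meets the route conditions (non-colliders outside $Z$, colliders in $Z$, or undirected non-colliders in $Z$ with a spouse), so the resulting walk is a connecting route.

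For direction (ii), the stricter-to-weaker remark says that a route connecting $\alpha$ and $\beta$ under the route criterion satisfies, node for node, the weaker path conditions as well; so it suffices to shorten such a (path-criterion-connecting) route to a \emph{path} while preserving those path conditions. I would induct on the number of edges: if the route is already a path we are done, and otherwise pick a repeated node $V_i=V_j=C$ with $i<j$ and the loop $V_i,\ldots,V_j$ minimal, and delete $V_{i+1},\ldots,V_j$. Every surviving node keeps both of its neighbours except $C$, whose new marks are those of the outer edges $V_{i-1}-C$ and $C-V_{j+1}$; so only $C$'s status can change, and repeats at $\alpha$ or $\beta$ are handled by truncation. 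If I can show the merged $C$ is never blocking, the shortened route is still path-criterion-connecting and the induction closes.

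The verification that this excision never creates a blocked node is the heart of the argument and the step I expect to be the main obstacle. If the merge makes $C$ a collider, I must show $C\in An_G(Z)$: assuming $C\notin Z$, at least one outer mark, say at $V_{i-1}$, is an arrowhead into $C$; if $V_i$ is already a collider of the route then $C\in An_G(Z)$ directly, and otherwise $V_i$ is forced to be a chain, producing a directed edge $C\ra V_{i+1}$ into the loop. Following this edge gives a directed path inside the loop which, since $G$ has no directed cycle, cannot return to $C$ and must terminate at a collider of the original route, a node of $An_G(Z)$; hence $C\in An_G(Z)$. If instead the merge makes $C$ a non-collider in $Z$, I rule out a violation in two steps. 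No tail can appear at $C$, for a tailed mark would already have made $V_i$ or $V_j$ a tailed non-collider in $Z$ on the original route, which is blocked; thus the only non-collider merge is the undirected $A-C-B$ type. And this type with neither a spouse nor a parent outside $Z$ is impossible: each loop-edge at $C$ would then have to be an arrowhead into $C$ that is a directed edge from a node of $Z$ (a bidirected edge would give a spouse, a directed edge from outside $Z$ a parent outside $Z$), and such a source node, carrying a tail towards $C$, would itself be a tailed non-collider in $Z$ on the original route, a contradiction. Assembling these two local checks completes the induction, and with direction (i) the theorem follows; the careful enumeration of edge marks in these checks is where essentially all of the work lies.
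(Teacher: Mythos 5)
Your argument is correct, but the nontrivial half is proved quite differently from the paper. For the direction ``connecting path $\Rightarrow$ connecting route'' the paper simply declares the claim trivial; your spike-and-splice construction is exactly what is being alluded to, and your verification of it is sound. For the converse, the paper does not argue inside $G$ at all: it replaces every bidirected edge $A \aa B$ by a latent fork $A \la \lambda_{AB} \ra B$, observes that the resulting graph is an aADMG, checks that connecting routes and paths transfer back and forth across this substitution (the only delicate points being the undirected non-colliders in $Z$ whose activation switches between the spouse clause in $G$ and the parent clause in the enlarged graph), and then invokes Theorem 2 of Pe\~{n}a (2016) to convert the route into a path there before translating back. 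Your loop-excision induction replaces that external citation with a self-contained combinatorial argument, and your two local checks are right: the merged-collider case is rescued by following the forced directed chain into the excised loop until acyclicity makes it stop at a collider of the current route (hence a node of $An_G(Z)$ by transitivity of ancestry), and the merged undirected non-collider in $Z$ is rescued because any loop edge at $C$ must carry an arrowhead into $C$ whose source is a spouse, a parent outside $Z$, or a parent inside $Z$ that would itself have blocked the route. What your approach buys is independence from the aADMG literature and an explicit display of where acyclicity and each clause of the two criteria are used; what it costs is the longer mark-by-mark case analysis that the paper's reduction hides inside the cited theorem. One cosmetic remark: the minimality of the excised loop is never actually needed in your argument.
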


\begin{proof}
The only if part is trivial. To prove the if part, first replace every edge $A \aa B$ in $G$ with the subgraph $A \la \lambda_{AB} \ra B$, where $\lambda_{AB}$ is a newly created node. The result is an aADMG $G'$ over $V \cup \lambda$, where $\lambda$ denotes all the newly created nodes. Then, note that the route $\varrho$ in $G$ connecting $\alpha$ and $\beta$ given $Z$ can be transformed into a route $\varrho'$ in $G'$ connecting $\alpha$ and $\beta$ given $Z$ by simply replacing every edge $A \aa B$ in $\varrho$ with the subgraph $A \la \lambda_{AB} \ra B$. To see that $\varrho'$ is connecting, it may be worth noting that if $A - C - B$ is a subroute of $\varrho$ with $C \in Z$ and $Pa_{G}(C) \setminus Z = \emptyset$, then $Sp_G(C) \neq \emptyset$ for $\varrho$ to be connecting and, thus, $Pa_{G'}(C) \setminus Z \neq \emptyset$ since $\lambda_{CD} \in Pa_{G'}(C)$ for any $D \in Sp_G(C)$, and $\lambda_{CD} \notin Z$ since $Z \subseteq V$. Finally, note that $\varrho'$ can be transformed into a path $\rho'$ in $G'$ connecting $\alpha$ and $\beta$ given $Z$ \citep[Theorem 2]{Penna2016}, which can be transformed into a path $\rho$ in $G$ connecting $\alpha$ and $\beta$ given $Z$ by simply replacing every subpath $A \la \lambda_{AB} \ra B$ of $\rho'$ with the edge $A \aa B$. To see that $\rho$ is connecting, it may be worth noting that if $A - C - B$ is a subpath of $\rho'$ with $C \in Z$ and $Pa_{G'}(C) \setminus Z \neq \emptyset$, then $A - C - B$ is a subpath of $\rho$ with $Pa_{G}(C) \setminus Z \neq \emptyset$ or $Sp_{G}(C) \neq \emptyset$.
\end{proof}

\begin{table}
\caption{Magnification of an ADMG.}\label{tab:magnification}
\begin{center}
\begin{tabular}{|ll|}
\hline
1 & Set $G'=G$\\
2 & For each edge $A \aa B$ in $G$\\
3 & \hspace{0.3cm} Add the node $\lambda_{AB}$ to $G'$\\
4 & \hspace{0.3cm} Replace $A \aa B$ in $G'$ with the subgraph $A \la \lambda_{AB} \ra B$\\
5 & For each node $A$ in $G$\\
6 & \hspace{0.3cm} Add the node $\epsilon_A$ and the edge $\epsilon_A \ra A$ to $G'$\\
7 & For each edge $A - B$ in $G$\\
8 & \hspace{0.3cm} Replace $A - B$ in $G'$ with the edge $\epsilon_A - \epsilon_B$\\
\hline
\end{tabular}
\end{center}
\end{table}

\section{Causal Interpretation and Learning Algorithm}\label{sec:interpretationandlearning}

The contribution of this section is two-fold. First, it provides an intuitive causal interpretation of ADMGs as systems of structural equations with additive and correlated errors. Second, it describes an exact algorithm for learning ADMGs from observational and interventional data via answer set programming \citep{gelfond_1988,DBLP:journals/amai/Niemela99,DBLP:journals/ai/SimonsNS02}.

\subsection{Causal Interpretation}\label{sec:causal}

Let us assume that $V$ is normally distributed. In this section, we show that an ADMG $G$ can be interpreted as a system of structural equations with correlated errors. Specifically, the system includes an equation for each $A \in V$, which is of the form
\begin{equation}\label{eq:equation}
A = \sum_{B \in Pa_G(A)} \alpha_{AB} B + \sum_{B \in Sp_G(A)} \beta_{AB} \lambda_{AB} + \epsilon_A
\end{equation}
where $\alpha_{AB}$ and $\beta_{AB}$ denote linear coefficients, and $\lambda_{AB}$ and $\epsilon_A$ denote unobserved terms due to latent causes and errors, respectively. In other words, we divide the unobserved causes of $A$ into those shared with other observed variables (latent causes or confounders) and those exclusive of $A$ (errors). The undirected edges in $G$ indicate potential correlation between error terms. The latent causes and errors are represented implicitly in $G$. They can be represented explicitly by magnifying $G$ into the ADMG $G'$ as shown in Table \ref{tab:magnification}. The magnification basically consists in adding nodes for the unobserved terms $\lambda_{AB}$ and $\epsilon_A$ to $G$ and, then, connecting them appropriately. Figure \ref{fig:example2} shows an example. Note that Equation \ref{eq:equation} implies that every node $A \in V$ is determined by $Pa_{G'}(A)$. Likewise, $\epsilon_A$ is determined by $A \cup Pa_{G'}(A) \setminus \epsilon_A$, and $\lambda_{AB}$ is determined by $A \cup Pa_{G'}(A) \setminus \lambda_{AB}$. Let $\epsilon$ denote all the error nodes $\epsilon_A$ in $G'$, and let $\lambda$ denote all the latent causes $\lambda_{AB}$ in $G'$. Formally, we say that $A \in V \cup \lambda \cup \epsilon$ is determined by $Z \subseteq V \cup \lambda \cup \epsilon$ when $A \in Z$ or $A$ is a function of $Z$. We use $Dt(Z)$ to denote all the nodes that are determined by $Z$. From the point of view of the separations, that a node outside the conditioning set of a separation is determined by the conditioning set has the same effect as if the node were actually in the conditioning set. Bearing this in mind, it is not difficult to see that, as desired, $G$ and $G'$ represent the same separations over $V$. The following theorem formalizes this result.

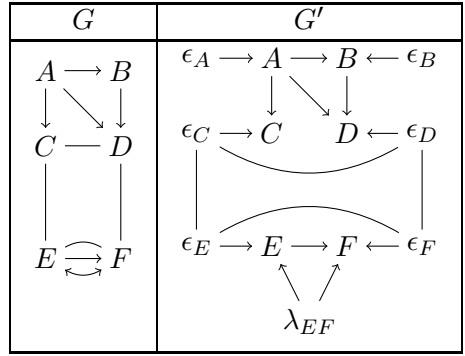
\begin{figure}
\centering
\begin{tabular}{|c|c|}
\hline
$G$&$G'$\\
\hline
\begin{tikzpicture}[inner sep=1mm]
\node at (0,0) (A) {$A$};
\node at (1,0) (B) {$B$};
\node at (0,-1) (C) {$C$};
\node at (1,-1) (D) {$D$};
\node at (0,-2.5) (E) {$E$};
\node at (1,-2.5) (F) {$F$};
\node at (0.5,-3.5) (LEF) {};
\path[->] (A) edge (B);
\path[->] (A) edge (C);
\path[->] (A) edge (D);
\path[->] (B) edge (D);
\path[-] (C) edge (D);
\path[-] (C) edge (E);
\path[-] (D) edge (F);
\path[-] (E) edge [bend left] (F);
\path[->] (E) edge (F);
\path[<->] (E) edge [bend right] (F);
\end{tikzpicture}
&
\begin{tikzpicture}[inner sep=1mm]
\node at (0,0) (A) {$A$};
\node at (1,0) (B) {$B$};
\node at (0,-1) (C) {$C$};
\node at (1,-1) (D) {$D$};
\node at (0,-2.5) (E) {$E$};
\node at (1,-2.5) (F) {$F$};
\node at (-1,0) (EA) {$\epsilon_A$};
\node at (2,0) (EB) {$\epsilon_B$};
\node at (-1,-1) (EC) {$\epsilon_C$};
\node at (2,-1) (ED) {$\epsilon_D$};
\node at (-1,-2.5) (EE) {$\epsilon_E$};
\node at (2,-2.5) (EF) {$\epsilon_F$};
\node at (0.5,-3.5) (LEF) {$\lambda_{EF}$};
\path[->] (EA) edge (A);
\path[->] (EB) edge (B);
\path[->] (EC) edge (C);
\path[->] (ED) edge (D);
\path[->] (EE) edge (E);
\path[->] (EF) edge (F);
\path[->] (A) edge (B);
\path[->] (A) edge (C);
\path[->] (A) edge (D);
\path[->] (B) edge (D);
\path[-] (EC) edge [bend right] (ED);
\path[-] (EC) edge (EE);
\path[-] (ED) edge (EF);
\path[-] (EE) edge [bend left] (EF);
\path[->] (E) edge (F);
\path[->] (LEF) edge (E);
\path[->] (LEF) edge (F);
\end{tikzpicture}\\
\hline
\end{tabular}\caption{Example of the magnification of an ADMG.}\label{fig:example2}
\end{figure}

\begin{theorem}\label{the:GG'}
Let $G$ denote an ADMG. Then, $X \ci_G Y | Z$ if and only if $X \ci_{G'} Y | Z$ for all $X$, $Y$ and $Z$ disjoint subsets of $V$.
\end{theorem}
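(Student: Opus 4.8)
The plan is to prove the contrapositive, in connectivity form: for fixed $\alpha \in X$ and $\beta \in Y$, there is a connecting path between $\alpha$ and $\beta$ given $Z$ in $G$ if and only if there is one in $G'$, where separation in $G'$ is read with the determinism convention described before the theorem (a node in $Dt(Z)$ blocks as if it were in $Z$). Since $X \ci Y \mid Z$ means no node of $X$ is connected to a node of $Y$, establishing this node‑pair statement suffices. Two features make the bookkeeping manageable. First, by Theorem \ref{the:2} I may pass freely between paths and routes in either graph, choosing whichever is convenient; this is what reconciles the two equivalent forms of the non‑collider exception. Second, $G'$ has no bidirected edges, so $Sp_{G'}(\cdot)=\emptyset$ and the exception is vacuous in $G'$: a route is connecting given $Z$ in $G'$ exactly when every collider lies in $An_{G'}(Dt(Z))$ and every non‑collider lies outside $Dt(Z)$.

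The technical heart is a precise description of $Dt(Z)$ for $Z \subseteq V$, which I would isolate as a lemma. Because every $A \in V$ carries its own free error $\epsilon_A$, no observed variable outside $Z$ is a function of $Z$, so $Dt(Z) \cap V = Z$; and no latent cause $\lambda_{AB}$ can be disentangled from the errors, so $Dt(Z) \cap \lambda = \emptyset$. Only the errors are interesting: from $\epsilon_A = A - \sum_{B \in Pa_G(A)} \alpha_{AB} B - \sum_{B \in Sp_G(A)} \beta_{AB}\lambda_{AB}$ one reads off
\[
\epsilon_A \in Dt(Z) \iff A \in Z,\ Pa_G(A) \subseteq Z,\ \text{and}\ Sp_G(A)=\emptyset .
\]
Equivalently, $\epsilon_A \notin Dt(Z)$ iff $A \notin Z$, or $Pa_G(A)\setminus Z \neq \emptyset$, or $Sp_G(A) \neq \emptyset$ — which is exactly the disjunction appearing in the non‑collider exception of the path criterion for $G$. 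This coincidence drives the whole argument.

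With the lemma I set up an edge‑wise correspondence between routes. From $G$ to $G'$, a connecting path is lifted by leaving directed edges untouched, replacing each $A \aa B$ by $A \la \lambda_{AB} \ra B$ (as in the proof of Theorem \ref{the:2}), and replacing each undirected edge $A - B$ by the error‑layer detour $A \la \epsilon_A - \epsilon_B \ra B$; one checks this lift is again a path. I then verify connecting status node by node. The cases are: (i) a node incident only to directed or bidirected edges keeps its collider status, and its membership/ancestor condition transfers because directed paths among $V$‑nodes are identical in $G$ and $G'$, whence $An_{G'}(Dt(Z)) \cap V = An_G(Z)$; (ii) a boundary node $A$ where an undirected edge meets an arrowhead becomes a collider in $G'$ via $\epsilon_A \ra A$, but it was already a collider in $G$, and $A \in Dt(Z) \iff A \in Z$ makes the two conditions agree; (iii) an undirected middle non‑collider $C$ on a subpath $A - C - B$ is replaced by the non‑collider $\epsilon_C$, and the displayed equivalence gives $\epsilon_C \notin Dt(Z)$ precisely when the $G$‑exception holds or $C \notin Z$. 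Crucially, every freshly introduced $\lambda$‑ or $\epsilon$‑node is always a non‑collider (each has no incoming arrowhead), so no new collider ever arises and I never have to audit ancestry for added nodes. The converse projects a connecting path of $G'$ back by contracting each $A \la \lambda_{AB} \ra B$ to $A \aa B$ and each maximal error‑layer excursion $A_0 \la \epsilon_{A_0} - \cdots - \epsilon_{A_k} \ra A_k$ to the undirected segment $A_0 - \cdots - A_k$ (passing to a path by Theorem \ref{the:2} if a $V$‑node repeats), and the same node checks run in reverse.

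I expect the main obstacle to be case (iii) together with the fact that the two readings of the exception must line up across the determinism convention. A connecting path of $G$ whose middle undirected non‑collider $C$ lies in $Z$ only because $Pa_G(C)\setminus Z \neq \emptyset$ is not obviously matched, and here the route formulation rescues the argument: such a $C$ is traversed by the detour $A - C \la P \ra C - B$ through a parent $P \notin Z$, turning $C$ into two admissible colliders, which is exactly why Theorem \ref{the:2} is invoked. Getting the $Dt(Z)$ lemma exactly right — so that $\epsilon_C \notin Dt(Z)$ coincides with the $G$‑exception — and confirming that collider ancestry in $G'$ never reaches beyond $An_G(Z)$ through the added error and latent nodes is the part that demands the most care.
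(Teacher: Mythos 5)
Your proposal is correct, and at its core it performs the same magnification--and--path-correspondence argument as the paper, but it is organized quite differently: the paper factors the equivalence through the intermediate graph $G'_4$ (the aADMG over $V \cup \lambda$ obtained after line 4 of Table \ref{tab:magnification}), proves the correspondence between $G$ and $G'_4$ by the bidirected-edge replacement exactly as in Theorem \ref{the:2}, and then simply cites \citet[Theorem 9]{Penna2016} for the remaining step from $G'_4$ to $G'$ --- that is, for the entire error-node layer and the determinism convention. You instead prove that layer yourself, and your characterization of $Dt(Z)$ (namely $\epsilon_A \in Dt(Z)$ iff $A \in Z$, $Pa_G(A) \subseteq Z$ and $Sp_G(A) = \emptyset$, with $Dt(Z) \cap V = Z$ and $Dt(Z) \cap \lambda = \emptyset$) is precisely the fact that makes the non-collider exception ``$Pa_G(C) \setminus Z \neq \emptyset$ or $Sp_G(C) \neq \emptyset$'' in the path criterion for $G$ line up with ordinary blocking by $Dt(Z)$ in $G'$; this is essentially the content of the cited result, so your argument is more self-contained at the price of re-deriving known material. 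Two small points to tighten: the exception clause is vacuous in $G'$ not only because $Sp_{G'}(\cdot) = \emptyset$ but also because the only nodes of $G'$ incident to undirected edges are the $\epsilon$-nodes, which have no parents either (otherwise the disjunct $Pa_{G'}(C) \setminus Dt(Z) \neq \emptyset$ would survive); and your case enumeration omits the boundary configuration $B - A \ra D$, in which $A$ stays a non-collider and the lift $B \la \epsilon_B - \epsilon_A \ra A \ra D$ additionally requires $\epsilon_A \notin Dt(Z)$, which follows from $A \notin Z$ by your lemma, so no new difficulty arises.
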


\begin{proof}
Let $G'_4$ denote the graph $G'$ in Table \ref{tab:magnification} immediately after line 4. Note that $G'_4$ is an aADMG over $V \cup \lambda$. We know that $X \ci_{G'_4} Y | Z$ if and only if $X \ci_{G'} Y | Z$ \citep[Theorem 9]{Penna2016}. Therefore, it suffices to show that every path in $G$ connecting $\alpha$ and $\beta$ given $Z$ can be transformed into a path in $G'_4$ connecting $\alpha$ and $\beta$ given $Z$ and vice versa, with $\alpha, \beta \in V$ and $Z \subseteq V \setminus ( \alpha \cup \beta )$. This can be proven in much the same way as Theorem \ref{the:2}. Specifically, a path $\rho$ in $G$ connecting $\alpha$ and $\beta$ given $Z$ can be transformed into a path $\rho'$ in $G'_4$ connecting $\alpha$ and $\beta$ given $Z$ by simply replacing every edge $A \aa B$ in $\rho$ with the subgraph $A \la \lambda_{AB} \ra B$. Finally, a path $\rho'$ in $G'_4$ connecting $\alpha$ and $\beta$ given $Z$ can be transformed into a path $\rho$ in $G$ connecting $\alpha$ and $\beta$ given $Z$ by simply reversing the previous transformation.
\end{proof}

Let $\lambda \sim \mathcal{N}(0, \Lambda)$ such that $\Lambda$ is diagonal, and $\epsilon \sim \mathcal{N}(0, \Sigma)$ such that $(\Sigma^{-1})_{\epsilon_A,\epsilon_B} = 0$ if $\epsilon_A - \epsilon_B$ is not in $G'$. Then, $G$ can be interpreted as a system of structural equations of the form of Equation \ref{eq:equation} whose errors are correlated as follows
\begin{equation}\label{eq:equation2}
covariance(\epsilon_A, \epsilon_B) = \Sigma_{\epsilon_A,\epsilon_B}
\end{equation}
for all $A, B \in V$. The next two theorems confirm that this causal interpretation of ADMGs works as intended. Let $X$, $Y$ and $Z$ denote three disjoint subsets of $V$. Hereinafter, we represent by $X \ci_p Y | Z$ that $X$ and $Y$ are conditionally independent given $Z$ in a probability distribution $p$.

\begin{theorem}\label{the:gaussian}
Let $G$ and $p$ denote an ADMG and a probability distribution over $V$. If $p$ is specified by Equations \ref{eq:equation} and \ref{eq:equation2}, then it is Gaussian.
\end{theorem}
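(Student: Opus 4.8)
The plan is to reduce the claim to the elementary fact that a linear image of a jointly Gaussian vector is again Gaussian. First I would collect the structural equations of Equation \ref{eq:equation} into matrix form. Writing $v$ for the vector of the observed variables in $V$, I would define the coefficient matrix $M$ by $M_{AB} = \alpha_{AB}$ when $B \in Pa_G(A)$ and $M_{AB} = 0$ otherwise, and similarly a matrix $N$ carrying the coefficients $\beta_{AB}$ that map the latent-cause vector $\lambda$ into each equation. Then Equation \ref{eq:equation} reads compactly as $v = M v + N \lambda + \epsilon$.

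The key structural step is to solve this system for $v$, and this is exactly where the acyclicity of $G$ enters. Since $G$ has no directed cycle, its directed edges admit a topological ordering of $V$; with the nodes listed in this order, $M$ is strictly triangular because $M_{AB} \neq 0$ only when $B \in Pa_G(A)$. Consequently $I - M$ is unit triangular and hence invertible, and the structural equations have the unique solution $v = (I - M)^{-1}(N \lambda + \epsilon)$. Thus $v$ is a fixed linear function of the stacked latent vector $(\lambda, \epsilon)$.

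It then remains to observe that $(\lambda, \epsilon)$ is jointly Gaussian. By hypothesis $\lambda \sim \mathcal{N}(0, \Lambda)$ and $\epsilon \sim \mathcal{N}(0, \Sigma)$ (the latter encoding Equation \ref{eq:equation2}), and the two families of unobserved terms are mutually independent, so their concatenation is a zero-mean Gaussian vector with block-diagonal covariance. Applying the linear map $(I - M)^{-1}[\,N \mid I\,]$ to this vector produces $v$, and a linear image of a Gaussian vector is Gaussian; hence $p$, the marginal distribution over $V$, is Gaussian. I would finish by recording the induced covariance $\mathrm{Cov}(v) = (I - M)^{-1}(N \Lambda N^{\top} + \Sigma)(I - M)^{-\top}$, although this explicit form is not needed for the statement.

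I expect no serious obstacle here: the only points requiring care are the invertibility of $I - M$, which must be justified from the absence of directed cycles rather than merely assumed, and the mild but necessary observation that $\lambda$ and $\epsilon$ are jointly (not merely marginally) Gaussian, so that the combination $N\lambda + \epsilon$ is itself Gaussian.
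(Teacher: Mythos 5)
Your proof is correct, but it takes a different route from the paper's. The paper does not argue from first principles: it first replaces every bidirected edge $A \leftrightarrow B$ of $G$ by a latent common cause, $A \leftarrow \lambda_{AB} \rightarrow B$, adds the trivial structural equation $\lambda_{AB} = \epsilon'_{AB}$ with $\epsilon' \sim \mathcal{N}(0,\Lambda)$, observes that the result is an aADMG over $V \cup \lambda$, and then invokes an earlier result (Theorem 10 of Pe\~{n}a, 2016) stating that any distribution generated by such a system over an aADMG is Gaussian; the claim for $p(V)$ follows by marginalization. You instead prove the underlying fact directly: stacking Equation \ref{eq:equation} as $v = Mv + N\lambda + \epsilon$, using acyclicity to get a topological order under which $M$ is strictly triangular so that $I-M$ is invertible, and concluding that $v$ is a linear image of the jointly Gaussian vector $(\lambda,\epsilon)$. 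The paper's reduction is shorter and reuses a graph transformation that also drives Theorems \ref{the:GG'} and \ref{the:markovian}; your argument is self-contained, makes the mechanism (the triangular solve) explicit, and yields the covariance $(I-M)^{-1}(N\Lambda N^{\top}+\Sigma)(I-M)^{-\top}$ as a byproduct. One point you rightly flag, which the paper also leaves implicit: the joint Gaussianity of $(\lambda,\epsilon)$ requires that the latent causes and the error terms be mutually independent (or at least jointly normal), which is part of the intended model specification even though the text only gives the marginal laws $\lambda \sim \mathcal{N}(0,\Lambda)$ and $\epsilon \sim \mathcal{N}(0,\Sigma)$; the same assumption is needed for the paper's appeal to the aADMG result to go through.
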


\begin{proof}
For each edge $A \aa B$ in $G$, add the node $\lambda_{AB}$ to $G$. Then, replace every edge $A \aa B$ in $G$ with the subgraph $A \la \lambda_{AB} \ra B$. Note that $G$ is now an aADMG over $V \cup \lambda$. Moreover, recall that $\lambda \sim \mathcal{N}(0, \Lambda)$. Then, add the equation 
\begin{equation}\label{eq:equation3}
\lambda_{AB} = \epsilon'_{AB}
\end{equation}
and let $\epsilon' \sim \mathcal{N}(0, \Lambda)$, where $\epsilon'$ denotes all the newly created error terms $\epsilon'_{AB}$. Then, every probability distribution $p(V \cup \lambda)$ specified by Equations \ref{eq:equation}-\ref{eq:equation3} is Gaussian \citep[Theorem 10]{Penna2016}, which implies the desired result.
\end{proof}

It is worth mentioning that the opposite of the theorem above is not true. This negative result is inherited from oADMGs, for which there are Gaussian probability distributions over $V$ that cannot be specified by Equations \ref{eq:equation} and \ref{eq:equation2} \citep[p. 1019]{RichardsonandSpirtes2002}.

\begin{theorem}\label{the:markovian}
Let $G$ and $p$ denote an ADMG and a probability distribution over $V$. If $p$ is specified by Equations \ref{eq:equation} and \ref{eq:equation2}, then $X \ci_G Y | Z$ implies that $X \ci_p Y | Z$ for all $X$, $Y$ and $Z$ disjoint subsets of $V$.
\end{theorem}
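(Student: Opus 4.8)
The plan is to reduce the claim to the already-established global Markov property of aADMGs, following the same magnification strategy used in the proofs of Theorems \ref{the:GG'} and \ref{the:gaussian}. Concretely, I would replace every bidirected edge $A \aa B$ of $G$ by the subgraph $A \la \lambda_{AB} \ra B$ with a fresh node $\lambda_{AB}$, obtaining an aADMG $\tilde{G}$ over $V \cup \lambda$ (this is exactly the graph $G'_4$ appearing in the proof of Theorem \ref{the:GG'}). On the probabilistic side I would augment Equation \ref{eq:equation} with the equations $\lambda_{AB} = \epsilon'_{AB}$ and $\epsilon' \sim \mathcal{N}(0,\Lambda)$, as in the proof of Theorem \ref{the:gaussian}, obtaining a distribution $\tilde{p}$ over $V \cup \lambda$. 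This $\tilde{p}$ is a structural equation model over the aADMG $\tilde{G}$ whose additive errors are correlated exactly along the undirected edges of $\tilde{G}$ (the undirected edges of $\tilde{G}$ are those of $G$, between $V$-nodes, and the newly added $\lambda$-nodes are independent roots). By construction the marginal of $\tilde{p}$ over $V$ is $p$.

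Next I would line up the separations and reduce the goal. The path-transformation argument in the proof of Theorem \ref{the:GG'} already shows that replacing the bidirected edges by latent common causes preserves separations over $V$; hence $X \ci_G Y \mid Z$ if and only if $X \ci_{\tilde{G}} Y \mid Z$ for all disjoint $X, Y, Z \subseteq V$. It therefore suffices to prove $X \ci_{\tilde{G}} Y \mid Z \Rightarrow X \ci_{\tilde{p}} Y \mid Z$ and then marginalize out $\lambda$. The marginalization step is immediate: since $X, Y, Z \subseteq V$ and $\lambda$ is disjoint from them, any conditional independence holding in $\tilde{p}$ over $V \cup \lambda$ also holds in its $V$-marginal $p$.

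The heart of the argument is thus the global Markov property for the aADMG $\tilde{G}$, namely that separation in $\tilde{G}$ implies conditional independence in $\tilde{p}$. I would obtain this by invoking the corresponding result for aADMGs from \citet{Penna2016}, whose structural equation model with correlated Gaussian errors is precisely $\tilde{p}$ (note that $\tilde{p}$ is Gaussian by Theorem \ref{the:gaussian}). Since $\tilde{G}$ has no bidirected edges, the separation criterion of Section \ref{sec:separation} specializes to the aADMG criterion of \citet{Penna2016} on $\tilde{G}$, so the two criteria agree there and the cited Markov property transfers directly.

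I expect the delicate point to be the interaction of determinism with the correlated errors, rather than the reduction itself. In $\tilde{G}$ every $A \in V$ is a deterministic function of its parents, and the undirected edges encode a Gaussian Markov random field on the error terms whose precision matrix vanishes off the undirected graph. The special clause of the separation criterion --- that a non-collider $C$ on a subpath $A - C - B$ fails to block even when $C \in Z$, provided $Pa_G(C) \setminus Z \neq \emptyset$ or $Sp_G(C) \neq \emptyset$ --- is exactly what accounts for this determinism propagating through the error structure. The main thing to verify is therefore that the aADMG Markov property being cited is stated for this deterministic, correlated-error construction, which it is, since \citet{Penna2016} introduced aADMGs for precisely this purpose. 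If one instead wished to argue from scratch, the obstacle would be establishing soundness for the undirected, AMP-style part; this could be done by passing to the full magnification $G'$ of Table \ref{tab:magnification} and checking that separations among the $\epsilon$ nodes correspond to the prescribed zeros of the precision matrix $\Sigma^{-1}$, a standard fact for Gaussian graphical models.
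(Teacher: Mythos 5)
Your proposal is correct and follows essentially the same route as the paper: magnify the bidirected edges into latent common causes to obtain an aADMG over $V \cup \lambda$ with the auxiliary equations $\lambda_{AB} = \epsilon'_{AB}$, invoke the global Markov property for aADMGs from \citet[Theorem 11]{Penna2016}, and marginalize out $\lambda$. The paper's proof is just a terser version of this, leaving implicit the separation-preservation and marginalization steps that you spell out.
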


\begin{proof}
Transform $G$ into an aADMG over $V \cup \lambda$ as shown in the proof of Theorem \ref{the:gaussian}. Then, $X \ci_G Y | Z$ implies that $X \ci_{p(V \cup \lambda)} Y | Z$ \citep[Theorem 11]{Penna2016}, which implies the desired result.
\end{proof}

A more intuitive account of the causal interpretation of ADMGs introduced above is as follows. We interpret the edge $A \ra B$ as $A$ being a potential cause of $B$. We interpret the edge $A \aa B$ as $A$ and $B$ being potentially marginally dependent due to an unobserved common cause $\lambda_{AB}$, i.e. a confounder. The unobserved causes of the node $A$ that are not shared with any other node are grouped into an error term $\epsilon_A$. We interpret the edge $A - B$ as $\epsilon_A$ and $\epsilon_B$ being potentially conditionally dependent given the rest of the error terms. This causal interpretation of ADMGs generalizes that of the oADMGs and aADMGs. Recall however that the noise in the oADMGs is not necessarily additive normal.

\begin{table}
\caption{Intervention of $X$ on an ADMG.}\label{tab:intervention}
\begin{center}
\begin{tabular}{|ll|}
\hline
1 & Delete from $G$ all the edges $A \oa B$ with $B \in X$\\
2 & For each path $A - V_1 - \ldots - V_n - B$ in $G$ with $A, B \notin X$ and $V_1, \ldots, V_n \in X$\\
3 & \hspace{0.3cm} Add the edge $A - B$ to $G$\\
4 & Delete from $G$ all the edges $A - B$ with $B \in X$\\
\hline
\end{tabular}
\end{center}
\end{table}

Given the above causal interpretation of an ADMG $G$ and assuming autonomous causal relationships (i.e. external changes to one does not affect the others), intervening on $X \subseteq V$ so that $X$ is no longer under the influence of its usual causes amounts to replacing the right-hand side of the equations for the random variables in $X$ with expressions that do not involve their usual causes \citep[Section 3.2]{Pearl2009}. For simplicity, we only consider interventions that set $X$ to a fixed value $x$, which then corresponds to modifying the system of structural equations by replacing the equation for every $X_i \in X$ with the equation $X_i=x_i$, where $x_i$ is the value of $X_i$ that is consistent with $x$. Graphically, it amounts to modifying $G$ as shown in Table \ref{tab:intervention}. Line 1 is shared with an intervention on an oADMG. Lines 2-4 are best understood in terms of the magnified ADMG $G'$: They correspond to marginalizing the error nodes associated with the nodes in $X$ out of $G'_\epsilon$, the UG that represents the correlation structure of the error nodes. In other words, lines 2-4 replace $G'_\epsilon$ with $(G'_\epsilon)^{\epsilon \setminus \epsilon_X}$, the marginal graph of $G'_\epsilon$ over $\epsilon \setminus \epsilon_X$. This makes sense since $\epsilon_X$ is no longer associated with $X$ due to the intervention and, thus, we may want to marginalize it out because it is unobserved. This is exactly what lines 2-4 imply. Note that the ADMG after the intervention and the magnified ADMG after the intervention represent the same separations over $V$, by Theorem \ref{the:GG'}. This treatment of interventions on ADMGs generalizes the treatment for oADMGs and aADMGs \citep{Pearl2009,Penna2016}.

We can also extend the separation criteria for ADMGs to account for interventions. Specifically, let $X \ci_{G_{\wh{W}}} Y | Z, W$ denote that $X$ is separated from $Y$ given $Z$ in an ADMG $G$ after having intervened on $W$, where $X$, $Y$, $Z$ and $W$ are disjoint subsets of $V$. Likewise, let $X \ci_{p_{\wh{W}}} Y | Z, W$ represent that $X$ and $Y$ are conditionally independent given $Z$ in a probability distribution $p$ after having intervened on $W$. The corollary below follows from Theorem \ref{the:markovian}, and provides further evidence that the causal interpretation of ADMGs introduced above works as intended.

\begin{corollary}\label{cor:markovian2}
Let $G$ and $p$ denote an ADMG and a probability distribution over $V$. If $p$ is specified by Equations \ref{eq:equation} and \ref{eq:equation2}. Then, $X \ci_{G_{\wh{W}}} Y | Z, W$ implies that $X \ci_{p_{\wh{W}}} Y | Z, W$ for all $X$, $Y$, $Z$ and $W$ disjoint subsets of $V$.
\end{corollary}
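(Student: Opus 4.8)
The plan is to reduce the intervened statement to the non-intervened Markov property already proved in Theorem~\ref{the:markovian}. Concretely, I would show that the post-intervention distribution $p_{\wh{W}}$ is itself specified by Equations~\ref{eq:equation} and~\ref{eq:equation2} relative to the post-intervention ADMG $G_{\wh{W}}$ produced by Table~\ref{tab:intervention}. Once that is in hand, Theorem~\ref{the:markovian} applied verbatim to the pair $(G_{\wh{W}}, p_{\wh{W}})$ gives $X \ci_{G_{\wh{W}}} Y \mid Z'$ $\Rightarrow$ $X \ci_{p_{\wh{W}}} Y \mid Z'$ for any disjoint $X, Y, Z'$; taking $Z' = Z \cup W$ is exactly the corollary. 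So the whole argument hinges on reading the intervention, which was described before the corollary only as an operation on separations, as an operation on the generating equations.

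The substance is therefore checking that $(G_{\wh{W}}, p_{\wh{W}})$ is again an instance of Equations~\ref{eq:equation}--\ref{eq:equation2}. The intervention replaces the equation for each $W_i \in W$ by $W_i = w_i$ and leaves every other equation alone. For $A \in V \setminus W$, line~1 deletes no directed edge \emph{into} $A$, so $Pa_{G_{\wh{W}}}(A) = Pa_G(A)$, while it does delete the bidirected edges from $A$ to $W$, so $Sp_{G_{\wh{W}}}(A) = Sp_G(A) \setminus W$; hence the equation for $A$ still has the form of Equation~\ref{eq:equation} with respect to $G_{\wh{W}}$. It remains to match the error covariance~\ref{eq:equation2}. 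Here I would use the reading already given for lines~2--4, namely that they replace $G'_\epsilon$ by its marginal $(G'_\epsilon)^{\epsilon \setminus \epsilon_W}$, together with the standard fact that the marginal of a Gaussian Markov field over a subset of its variables is Markov with respect to the marginal undirected graph produced by exactly the $G^X$ fill-in operation. This yields that the errors of $p_{\wh{W}}$ over $\epsilon \setminus \epsilon_W$ satisfy the zero-pattern prescribed by~\ref{eq:equation2} relative to $G_{\wh{W}}$.

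The main obstacle I anticipate is the degeneracy the intervention introduces: after setting $W = w$ the variables in $W$ are constants, so $p_{\wh{W}}$ is a degenerate Gaussian over $V$ and the equations $W_i = w_i$ do not literally fit Equation~\ref{eq:equation} (they carry no error term), whereas Theorem~\ref{the:markovian} and the results it relies on are phrased for the non-degenerate case. I would resolve this by conditioning the constants out. Since every $W_i$ is fixed, $X \ci_{p_{\wh{W}}} Y \mid Z, W$ is equivalent to conditional independence of $X$ and $Y$ given $Z$ in the non-degenerate distribution $p_{\wh{W}}(V \setminus W \mid W = w)$, whose equations are of the form~\ref{eq:equation} once the constant contributions $\alpha_{A W_i} w_i$ of the intervened parents are absorbed as intercepts, which shift means but leave every conditional independence intact. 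On the graph side I would correspondingly verify that $X \ci_{G_{\wh{W}}} Y \mid Z, W$ holds if and only if $X$ and $Y$ are separated given $Z$ in the induced subgraph $(G_{\wh{W}})_{V \setminus W}$: lines~1 and~4 strip each $W_i$ of every incoming edge, so a $W_i$ can occur on a route only as a fork $A \la W_i \ra B$, a non-collider that conditioning on $W$ blocks and that opens no collider. Applying Theorem~\ref{the:markovian} to the non-degenerate pair over $V \setminus W$ then closes the argument. The delicate point to get fully right is precisely this equivalence of separations under deletion of the source nodes $W$, together with confirming that absorbing the intervened parents as intercepts leaves the covariance structure of~\ref{eq:equation2} unchanged.
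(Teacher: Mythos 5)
Your proposal takes exactly the route the paper intends: the paper offers no argument beyond the single remark that the corollary ``follows from Theorem \ref{the:markovian}'', and your reduction --- showing that the post-intervention pair $(G_{\wh{W}}, p_{\wh{W}})$ is again an instance of Equations \ref{eq:equation} and \ref{eq:equation2} and then invoking that theorem --- is precisely the omitted justification, including the care you take with the degenerate (constant) intervened coordinates and with checking that separation given $Z \cup W$ reduces to separation in the induced subgraph over $V \setminus W$. The only detail you leave implicit is where the latent terms $\beta_{AB}\lambda_{AB}$ for spouses $B \in W$ of a non-intervened $A$ go after the bidirected edge is deleted, but this is a gap the paper's own one-line proof does not address either.
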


Recall from Section \ref{sec:separation} that two non-adjacent nodes in an ADMG $G$ are not necessarily separated. This is not true when interventions are considered, because $A \ci_{G_{\wh{W}}} B | W$ with $W = V \setminus \{A, B\}$ for all non-adjacent nodes $A$ and $B$ of $G$. Therefore, some missing edges in $G$ convey information about the observational regime, and some others about the interventional regime.

Finally, note that Equations \ref{eq:equation} and \ref{eq:equation2} specify each node as a linear function of its parents with additive normal noise. The equations can be generalized to nonlinear or nonparametric functions as long as the noise remains additive normal. That is, for any $A \in V$
\[
A = f(Pa_{G'}(A) \setminus \epsilon_A) + \epsilon_A
\]
with $\epsilon \sim \mathcal{N}(0, \Sigma)$ such that $(\Sigma^{-1})_{\epsilon_A,\epsilon_B} = 0$ if $\epsilon_A - \epsilon_B$ is not in $G'$. That the noise is additive normal ensures that $\epsilon_A$ is determined by $A \cup Pa_{G'}(A) \setminus \epsilon_A$, which is needed for Theorem \ref{the:GG'} to remain valid which, in turn, is needed for Theorem \ref{the:markovian} and Corollary \ref{cor:markovian2} to remain valid.

\begin{table}
\caption{ASP encoding of the learning algorithm.}\label{tab:asp}
\begin{center}
\tiny
\fbox{
\begin{minipage}{0.7\textwidth}
\begin{verbatim}
% input predicates
% nodes(N): N is the number of nodes 
% set(X): X is the index of a set of nodes
% dep(X,Y,C,I,W) (resp. indep(X,Y,C,I,W)): the nodes X and Y are dependent (resp.
%                                          independent) given the set of nodes C
%                                          after having intervened on the node I

% nodes
node(X) :- nodes(N), X=1..N.                                               % rule 1

% edges
{ line(X,Y,0) } :- node(X), node(Y), X != Y.                               %      2
{ arrow(X,Y,0) } :- node(X), node(Y), X != Y.
{ biarrow(X,Y,0) } :- node(X), node(Y), X != Y.                            %      4
line(X,Y,I) :- line(X,Y,0), node(I), X != I, Y != I, I > 0.                %      5
line(X,Y,I) :- line(X,I,0), line(I,Y,0), node(I), X != Y, I > 0.
arrow(X,Y,I) :- arrow(X,Y,0), node(I), Y != I, I > 0.
biarrow(X,Y,I) :- biarrow(X,Y,0), node(I), X != I, Y != I, I > 0.          %      8
line(X,Y,I) :- line(Y,X,I).                                                %      9
:- arrow(X,Y,I), arrow(Y,X,I).
biarrow(X,Y,I) :- biarrow(Y,X,I).                                          %      11

% directed acyclity
ancestor(X,Y) :- arrow(X,Y,0).                                             %      12
ancestor(X,Y) :- ancestor(X,Z), ancestor(Z,Y).
:- ancestor(X,Y), arrow(Y,X,0).                                            %      14

% set membership
inside_set(X,C) :- node(X), set(C), 2**(X-1) & C != 0.                     %      15
outside_set(X,C) :- node(X), set(C), 2**(X-1) & C = 0.                     %      16

% end_line/head/tail(X,Y,C,I) means that there is a connecting route 
% from X to Y given C that ends with a line/arrowhead/arrowtail

% single edge route
end_line(X,Y,C,I) :- line(X,Y,I), outside_set(X,C).                        %      17
end_head(X,Y,C,I) :- arrow(X,Y,I), outside_set(X,C).
end_head(X,Y,C,I) :- biarrow(X,Y,I), outside_set(X,C).
end_tail(X,Y,C,I) :- arrow(Y,X,I), outside_set(X,C).

% connection through non-collider
end_line(X,Y,C,I) :- end_line(X,Z,C,I), line(Z,Y,I), outside_set(Z,C).
end_line(X,Y,C,I) :- end_line(X,Z,C,I), line(Z,Y,I), biarrow(Z,W,I).
end_line(X,Y,C,I) :- end_tail(X,Z,C,I), line(Z,Y,I), outside_set(Z,C).
end_head(X,Y,C,I) :- end_line(X,Z,C,I), arrow(Z,Y,I), outside_set(Z,C).
end_head(X,Y,C,I) :- end_head(X,Z,C,I), arrow(Z,Y,I), outside_set(Z,C).
end_head(X,Y,C,I) :- end_tail(X,Z,C,I), arrow(Z,Y,I), outside_set(Z,C).
end_head(X,Y,C,I) :- end_tail(X,Z,C,I), biarrow(Z,Y,I), outside_set(Z,C).
end_tail(X,Y,C,I) :- end_tail(X,Z,C,I), arrow(Y,Z,I), outside_set(Z,C).

% connection through collider
end_line(X,Y,C,I) :- end_head(X,Z,C,I), line(Z,Y,I), inside_set(Z,C).
end_head(X,Y,C,I) :- end_line(X,Z,C,I), biarrow(Z,Y,I), inside_set(Z,C).
end_head(X,Y,C,I) :- end_head(X,Z,C,I), biarrow(Z,Y,I), inside_set(Z,C).
end_tail(X,Y,C,I) :- end_line(X,Z,C,I), arrow(Y,Z,I), inside_set(Z,C).
end_tail(X,Y,C,I) :- end_head(X,Z,C,I), arrow(Y,Z,I), inside_set(Z,C).     %      33

% derived non-separations
con(X,Y,C,I) :- end_line(X,Y,C,I), X != Y, outside_set(Y,C).               %      34
con(X,Y,C,I) :- end_head(X,Y,C,I), X != Y, outside_set(Y,C).
con(X,Y,C,I) :- end_tail(X,Y,C,I), X != Y, outside_set(Y,C).             
con(X,Y,C,I) :- con(Y,X,C,I).                                              %      37

% satisfy all dependences
:- dep(X,Y,C,I,W), not con(X,Y,C,I).                                       %      38

% maximize the number of satisfied independences
:~ indep(X,Y,C,I,W), con(X,Y,C,I). [W,X,Y,C,I]                             %      39

% minimize the number of lines/arrows
:~ line(X,Y,0), X < Y. [1,X,Y,1]                                           %      40
:~ arrow(X,Y,0). [1,X,Y,2]
:~ biarrow(X,Y,0), X < Y. [1,X,Y,3]                                        %      42

% show results
#show. 
#show line(X,Y) : line(X,Y,0), X < Y.
#show arrow(X,Y) : arrow(X,Y,0).
#show biarrow(X,Y) : biarrow(X,Y,0), X < Y.
\end{verbatim}
\end{minipage}
}
\end{center}
\end{table}

\subsection{Learning Algorithm}\label{sec:learning}

In this section, we introduce an exact algorithm for learning ADMGs from observational and interventional data via answer set programming (ASP), which is a declarative constraint satisfaction paradigm that is well-suited for solving computationally hard combinatorial problems \citep{gelfond_1988,DBLP:journals/amai/Niemela99,DBLP:journals/ai/SimonsNS02}. ASP represents constraints in terms of first-order logical rules. Therefore, when using ASP, the first task is to model the problem at hand in terms of rules so that the set of solutions implicitly represented by the rules corresponds to the solutions of the original problem. One or multiple solutions to the original problem can then be obtained by invoking an off-the-shelf ASP solver on the constraint declaration. Each rule in the constraint declaration is of the form \verb|head :- body|. The head contains an atom, i.e. a fact. The body may contain several literals, i.e. negated and non-negated atoms. Intuitively, the rule is a justification to derive the head if the body is true. The body is true if its non-negated atoms can be derived, and its negated atoms cannot. A rule with only the head is an atom. A rule without the head is a hard-constraint, meaning that satisfying the body results in a contradiction. Soft-constraints are encoded as rules of the form \verb|:~ body. [W]|, meaning that satisfying the body results in a penalty of $W$ units. The ASP solver returns the solutions that meet the hard-constraints and minimize the total penalty due to the soft-constraints. In this work, we use the ASP solver \verb|clingo| \citep{DBLP:journals/aicom/GebserKKOSS11}, whose underlying algorithms are based on state-of-the-art Boolean satisfiability solving techniques \citep{DBLP:series/faia/2009-185}.

\begin{figure}
\caption{ASP encoding of the (in)dependences in the domain.}\label{tab:asp2}
\begin{center}
\tiny
\fbox{
\begin{minipage}{0.35\textwidth}
\input{data.tex}
\end{minipage}
}
\end{center}
\end{figure}

Table \ref{tab:asp} shows the ASP encoding of our learning algorithm. The predicate \verb|node(X)| in rule 1 represents that $X$ is a node. The predicates \verb|line(X,Y,I)|, \verb|arrow(X,Y,I)| and \linebreak \verb|biarrow(X,Y,I)| represent that there is an undirected, directed and bidirected edge from the node $X$ to the node $Y$ after having intervened on the node $I$. The observational regime corresponds to $I=0$. The rules 2-4 encode a non-deterministic guess of the edges for the observational regime, which means that the ASP solver will implicitly consider all possible graphs during the search, hence the exactness of the search. The edges under the observational regime are used in the rules 5-8 to define the edges in the graph after having intervened on $I$, following the description in Section \ref{sec:interpretationandlearning}. Therefore, the algorithm assumes continuous random variables and additive normal noise when the input contains interventions. The random variables do not need to be normally distributed though, as discussed at the end of Section \ref{sec:causal}. The algorithm makes no such assumption when the input consists of just observations. The rules 9-11 enforce the fact that bidirected and undirected edges are symmetric and that there is at most one directed edge between two nodes. The predicate \verb|ancestor(X,Y)| represents that the node $X$ is an ancestor of the node $Y$. The rules 12-14 enforce that the graph has no directed cycles. The predicates in the rules 15-16 represent whether a node $X$ is or is not in a set of nodes $C$. The rules 17-33 encode the alternative separation criterion introduced in Section \ref{sec:separation}. The predicate \verb|con(X,Y,C,I)| in rules 34-37 represents that there is a connecting route between the node $X$ and the node $Y$ given the set of nodes $C$ after having intervened on the node $I$. The rule 38 enforces that each dependence in the input must correspond to a connecting route. The rule 39 represents that each independence in the input that is not represented implies a penalty of $W$ units. The rules 40-42 represent a penalty of 1 unit per edge. Other penalty rules can be added similarly.

Table \ref{tab:asp2} illustrates with an example how to encode the (in)dependences in the probability distribution at hand, e.g. as determined from some available data. Specifically, the predicate \verb|nodes(3)| represents that there are three nodes in the domain at hand, and the predicate \verb|set(0..7)| represents that there are eight sets of nodes, indexed from 0 (empty set) to 7 (full set). The predicate \verb|indep(X,Y,C,I,W)| (respectively \verb|dep(X,Y,C,I,W)|) represents that the nodes $X$ and $Y$ are conditionally independent (respectively dependent) given the node set index $C$ after having intervened on the node $I$. Observations correspond to $I=0$. The penalty for failing to represent an (in)dependence is $W$. The penalty for failing to represent a dependence is actually superfluous in our algorithm, since rule 38 in Table \ref{tab:asp} enforces that all the dependences in the input are represented by the graph in the output. Note also that it suffices to specify all the (in)dependences between pair of nodes, because these identify uniquely the rest of the independences in the probability distribution \citep[Lemma 2.2]{Studeny2005}. Note also that we do not assume that the probability distribution at hand is faithful to some ADMG or that it satisfies the composition property, as it is the case in most heuristic learning algorithms.

By calling the ASP solver with the encodings of the learning algorithm and the (in)depen- dences in the domain, the solver will essentially perform an exhaustive search over the space of graphs, and will output the graphs with the smallest penalty. Specifically, when only the observations are used (i.e. the last 15 lines of Table \ref{tab:asp2} are removed), the learning algorithm finds 104 optimal models, including one UG, one BG, six DAGs, 13 AMP CGs, 13 MVR CGs, 37 original ADMGs, and 37 alternative ADMGs. When all the observations and interventions available are used, the learning algorithm finds two optimal models. These are the models on the left and center of Figure \ref{fig:example3b}. This is the expected result given the last 15 lines in Table \ref{tab:asp2}. The rightmost model in Figure \ref{fig:example3b} is not in the output because, although it is indistinguishable from the other two given the observations and interventions in the input, it has more edges and thus receives a larger penalty, which makes it suboptimal.

It is worth mentioning that the example above is just illustrative and, thus, we have made use of an oracle to detect (in)dependencies in the domain at hand. In reality, (in)dependencies are detected on the basis of a finite sample and, thus, conflicts between them may exist. A solution for conflict resolution within the ASP framework has been proposed by \cite{Hyttinenetal.2014}. This solution has moreover been included in the ASP algorithm for learning LWF CGs proposed by \cite{Sonntagetal.2015}. Since the latter has been the basis for our learning algorithm in Table \ref{tab:asp}, we expect that the conflict resolution of \cite{Hyttinenetal.2014} can also be adapted to our algorithm for enhanced performance in practice.

\begin{figure}
\centering
\begin{tabular}{|c|c|c|}
\hline
\begin{tikzpicture}[inner sep=1mm]
\node at (0,0) (A) {$1$};
\node at (1,0) (B) {$2$};
\node at (2,0) (D) {$3$};
\path[->] (A) edge (B);
\path[->] (B) edge (D);
\path[-] (B) edge [bend left] (D);
\end{tikzpicture}
&
\begin{tikzpicture}[inner sep=1mm]
\node at (0,0) (A) {$1$};
\node at (1,0) (B) {$2$};
\node at (2,0) (D) {$3$};
\path[->] (A) edge (B);
\path[->] (B) edge (D);
\path[<->] (B) edge [bend right] (D);
\end{tikzpicture}
&
\begin{tikzpicture}[inner sep=1mm]
\node at (0,0) (A) {$1$};
\node at (1,0) (B) {$2$};
\node at (2,0) (D) {$3$};
\path[->] (A) edge (B);
\path[->] (B) edge (D);
\path[-] (B) edge [bend left] (D);
\path[<->] (B) edge [bend right] (D);
\end{tikzpicture}
\\
\hline
\end{tabular}\caption{ADMGs that represent the (in)dependences in the domain.}\label{fig:example3b}
\end{figure}
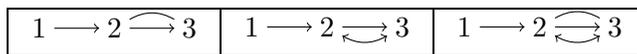

Finally, the ASP code in Table \ref{tab:asp} can easily be modified to learn some subfamilies of ADMGs such as
\begin{itemize}
\item original ADMGs by adding \verb|:- line(X,Y,0).|

\item alternative ADMGs by adding \verb|:- biarrow(X,Y,0).|

\item AMP CGs by adding \verb|:- biarrow(X,Y,0).|, \verb|:- line(X,Y,0), arrow(X,Y,0).| and \verb|ancestor(X,Y) :- line(X,Y,0).|

\item MVR CGs by adding \verb|:- line(X,Y,0).|, \verb|:- biarrow(X,Y,0), arrow(X,Y,0).| and \verb|ancestor(X,Y) :- biarrow(X,Y,0).|

\item DAGs by adding \verb|:- line(X,Y,0).| and \verb|:- biarrow(X,Y,0).|

\item UGs by adding \verb|:- arrow(X,Y,0).| and \verb|:- biarrow(X,Y,0).|

\item BGs by adding \verb|:- arrow(X,Y,0).| and \verb|:- line(X,Y,0).|
\end{itemize}

\section{Causal Effect Identification}\label{sec:identification}

This section presents graphical criteria for causal effect identification in aADMGs. Some criteria are based on a calculus similar to the {\it do}-calculus by \citet{Pearl2009}, and some on a decomposition of the distribution over $V$ similar to the Q-decomposition by \citet{TianandPearl2002a,TianandPearl2002b}.

\subsection{Calculus-Based Causal Effect Identification}\label{sec:calculus}

In this section, we present a new calculus for causal effect identification from aADMGs. The calculus consists of three rules that may transform a causal effect expression into an expression that can be computed from observed quantities. 

\begin{theorem}\label{the:calculus}
Let $G$ be an aADMG. Let $X$, $Y$, $Z$ and $W$ be disjoint subsets of variables. Then, we have the following rules.
\begin{itemize}
\item Rule 1 (insertion/deletion of observations):
\[
p(Y| \wh{X}, Z, W) = p(Y| \wh{X}, W) \text{ if } Y \ci_{G_{\wh{X}}} Z | X, W.
\]
\item Rule 2 (intervention/observation exchange): 
\[
p(Y| \wh{X}, \wh{Z}, W) = p(Y| \wh{X}, Z, W) \text{ if } Y \ci_{G_{\wh{X} \underrightarrow{Z}}} Z | X, W
\]
where $G_{\wh{X} \underrightarrow{Z}}$ denotes the graph obtained from $G_{\wh{X}}$ by deleting all directed edges out of $Z$.
\item Rule 3 (insertion/deletion of interventions): 
\[
p(Y| \wh{X}, \wh{Z}, W) = p(Y| \wh{X}, W) \text{ if } Y \ci_{G_{\wh{X} \overline{\overrightarrow{Z(W)}}}} Z | X, W
\]
where $Z(W)$ denotes the nodes in $Z$ that are not ancestors of $W$ in $G_{\wh{X}}$, and $G_{\wh{X} \overline{\overrightarrow{Z(W)}}}$ denotes the graph obtained from $G_{\wh{X}}$ by deleting all directed and undirected edges into $Z(W)$.
\end{itemize}
\end{theorem}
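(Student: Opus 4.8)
The plan is to derive all three rules from one master fact: the distribution $p_{\wh W}$ obtained after intervening on a set $W$ is Markov with respect to the mutilated graph $G_{\wh W}$ of Table~\ref{tab:intervention}, which is exactly the content of Corollary~\ref{cor:markovian2}. Since a joint intervention on $X$ and $Z$ is a single intervention on $X \cup Z$, this property applies directly to every interventional distribution appearing in the statement. I would work throughout in the magnified aADMG $G'$, where the only added nodes are the error terms $\epsilon_A$ (there are no bidirected edges, so no $\lambda$ nodes), and lean on the additive-noise property that $\epsilon_A$ is determined by $A$ together with its remaining $G'$-parents; this is precisely the feature that makes \emph{observing} a variable behave, on the error side, like \emph{fixing} it, and that justifies the rerouting of undirected edges under intervention in lines 2--4 of Table~\ref{tab:intervention}. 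Rule~1 reads off a single independence, whereas Rules~2 and~3 compare two distinct interventional distributions, which is where the real work lies.

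Rule~1 is immediate: the hypothesis $Y \ci_{G_{\wh X}} Z \mid X, W$ is a separation in $G_{\wh X}$, so by Corollary~\ref{cor:markovian2} we obtain $Y \ci_{p_{\wh X}} Z \mid X, W$, i.e. $p(Y \mid \wh X, Z, W) = p(Y \mid \wh X, W)$, since $X$ is held fixed by the intervention.

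For Rule~2, the crux, I would make precise the intuition that setting $Z = z$ and observing $Z = z$ induce the same conditional law of $Y$ when no information can travel from $Z$ to $Y$ except through the directed edges out of $Z$ that the intervention removes. The clean device is to augment $G_{\wh X}$ with a regime indicator $\sigma_Z$ pointing into each node of $Z$ that toggles its mechanism between ``natural'' and ``set to $z$''; the two sides of Rule~2 are then the conditional laws of $Y$ given $Z = z, W$ under the two settings of $\sigma_Z$, so they agree as soon as $Y \ci \sigma_Z \mid X, Z, W$ in the augmented graph. Translating back, $\sigma_Z$ enters exactly where the directed edges out of $Z$ sit, and blocking every route from $\sigma_Z$ to $Y$ given $X, Z, W$ is equivalent to the stated separation $Y \ci_{G_{\wh X \underrightarrow{Z}}} Z \mid X, W$ in the graph with the out-of-$Z$ directed edges deleted. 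The step that genuinely uses additive noise is checking that the undirected (correlated-error) edges incident to $Z$ open no extra channel: conditioning on $Z$ together with its parents determines $\epsilon_Z$, so the error side behaves exactly as the interventional rerouting of Table~\ref{tab:intervention} prescribes.

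Rule~3 I would obtain by the same augmentation, now comparing ``intervene on $Z$'' against ``leave $Z$ alone'', which amounts to deleting the edges into $Z$ versus keeping them; equality holds iff $Y \ci \sigma_Z \mid X, W$ in the augmented graph. The subtlety is the ancestral split $Z(W)$: for the part of $Z$ ancestral to $W$ in $G_{\wh X}$, conditioning on $W$ already constrains those nodes, so their incoming edges must be retained in the test graph, whereas for the non-ancestral part $Z(W)$ the intervention is free and its incoming directed and undirected edges are cut, yielding $G_{\wh X \overline{\overrightarrow{Z(W)}}}$. The main obstacle, as in Pearl's development, is exactly this bookkeeping for Rules~2 and~3: one must verify that the mutilations in the statement are precisely the graphs in which the augmented-graph separations reduce to the stated conditions, and that the intervention-induced rerouting of undirected edges is faithfully mirrored on the error nodes of $G'$. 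Once these correspondences are in place, each rule closes by invoking Corollary~\ref{cor:markovian2} to pass from separation to conditional independence.
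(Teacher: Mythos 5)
Your proposal matches the paper's proof in both structure and substance: Rule~1 is read off directly from Corollary~\ref{cor:markovian2}, and Rules~2 and~3 are handled by augmenting $G_{\wh{X}}$ with Pearl-style intervention indicators (the paper's $F_C$ nodes, your $\sigma_Z$) and showing that the stated separations force every path from the indicators to $Y$ to be blocked --- through the children of $Z$ and hence by conditioning on $Z$ for Rule~2, and outright given $W \cup X$ for Rule~3 via the $Z(W)$ split. The level of detail at which you leave the graph-mutilation bookkeeping to be verified is comparable to the paper's own treatment, so this is essentially the same argument.
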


\begin{proof}
Rule 1 follows from Corollary \ref{cor:markovian2}. The antecedent of rule 2 implies that the only paths between $Y$ and $Z$ in $G_{\wh{X}}$ that are not blocked by $W \cup X$ are those that reach $Z$ through its children. Following \citet[p. 686]{Pearl1995}, we transform $G_{\wh{X}}$ into $G'_{\wh{X}}$ by adding a variable $F_C$ and an edge $F_C \ra C$ for all $C \in Z$. The domain of $F_C$ is the same as that of $C$ plus a state labeled {\it idle}: $F_C = c$ corresponds an intervention that sets $C = c$, whereas $F_C = {\it idle}$ represents that $C$ is observed rather than intervened upon. Then, the only paths between $Y$ and $F_Z$ in $G'_{\wh{X}}$ that are not blocked by $W \cup X$ are those that reach $F_Z$ through the children of $Z$ and, thus, they are blocked by $Z$. Then, observing $Z$ cannot be distinguished from intervening on $Z$ and, thus, the consequent of rule 2 holds.

The antecedent of rule 3 implies that the only paths between $Y$ and $Z$ in $G_{\wh{X}}$ that are not blocked by $W \cup X$ are those that reach $Z$ through the parents or neighbors of $Z(W)$. Then, there is no path between $Y$ and $F_Z$ in $G'_{\wh{X}}$ that is not blocked by $W \cup X$. Then, intervening on $Z$ is irrelevant and, thus, the consequent of rule 3 holds.
\end{proof}

We illustrate the application of the previous theorem with the aADMG in Figure \ref{fig:example1}. Specifically,
\[
p(B | \wh{A}) = \sum_c p(B | \wh{A}, c) p(c | \wh{A}) = \sum_c p(B | \wh{A}, c) p(c) = \sum_c p(B | A, c) p(c)
\]
where the first equality is due to marginalization, the second due to rule 3, and the third due to rule 2.

Since producing $G_{\wh{X}}$ may be a bit involved, the antecedents of the rules can be simplified as follows.

\begin{theorem}
Let $G$ be an aADMG. Let $X$, $Y$, $Z$ and $W$ be disjoint subsets of variables. Then, we have the following rules.
\begin{itemize}
\item Rule 1 (insertion/deletion of observations):
\[
p(Y| \wh{X}, Z, W) = p(Y| \wh{X}, W) \text{ if } Y \ci_{G_{\overrightarrow{\underrightarrow{X}}}} Z | W
\]
where $G_{\overrightarrow{\underrightarrow{X}}}$ denotes the graph obtained from $G$ by deleting all directed edges in and out of $X$.
\item Rule 2 (intervention/observation exchange): 
\[
p(Y| \wh{X}, \wh{Z}, W) = p(Y| \wh{X}, Z, W) \text{ if } Y \ci_{G_{\overrightarrow{\underrightarrow{X}} \underrightarrow{Z}}} Z | W
\]
where $G_{\overrightarrow{\underrightarrow{X}} \underrightarrow{Z}}$ denotes the graph obtained from $G$ by deleting all directed edges in and out of $X$ and out of $Z$.
\item Rule 3 (insertion/deletion of interventions): 
\[
p(Y| \wh{X}, \wh{Z}, W) = p(Y| \wh{X}, W) \text{ if } Y \ci_{G_{\overrightarrow{\underrightarrow{X}} \overline{\overrightarrow{Z(W)}}}} Z | W
\]
where $Z(W)$ denotes the nodes in $Z$ that are not ancestors of $W$ in $G_{\overrightarrow{\underrightarrow{X}}}$, and $G_{\overrightarrow{\underrightarrow{X}} \overline{\overrightarrow{Z(W)}}}$ denotes the graph obtained from $G_{\overrightarrow{\underrightarrow{X}}}$ by deleting all undirected and directed edges into $Z(W)$.
\end{itemize}
\end{theorem}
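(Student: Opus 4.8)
The plan is to reduce everything to Theorem~\ref{the:calculus}: since the two theorems have \emph{identical} consequents, it suffices to prove that each simplified antecedent is equivalent to the corresponding antecedent of Theorem~\ref{the:calculus}. Concretely, for Rule~1 I would show
\[
Y \ci_{G_{\wh{X}}} Z \mid X, W \iff Y \ci_{G_{\overrightarrow{\underrightarrow{X}}}} Z \mid W,
\]
and analogously for Rules~2 and~3 after applying the extra surgeries. The only differences between $G_{\wh{X}}$ and $G_{\overrightarrow{\underrightarrow{X}}}$ are that $G_{\wh{X}}$ retains the directed edges out of $X$ and replaces every undirected path through $X$ by a single undirected edge (lines~2--4 of Table~\ref{tab:intervention}), whereas $G_{\overrightarrow{\underrightarrow{X}}}$ drops the out-of-$X$ directed edges and keeps the original undirected edges; correspondingly, the conditioning sets differ only by $X$. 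My strategy is to exhibit a connectedness-preserving correspondence between routes in the two graphs, switching freely between routes and paths via Theorem~\ref{the:2}.

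First I would record two structural facts. In $G_{\wh{X}}$ each node of $X$ has only outgoing directed edges (its incoming directed edges and all its undirected edges were deleted), so on any route it is a fork $A \la X_i \ra B$; being in the conditioning set $X \cup W$ and admitting no undirected incident edge, it necessarily blocks. Hence every connecting route in $G_{\wh{X}}$ avoids $X$. In $G_{\overrightarrow{\underrightarrow{X}}}$ each node of $X$ carries only undirected edges, so it can occur on a route only as a non-collider $A - X_i - B$, and since $X \cap W = \emptyset$ it is always active. The correspondence then collapses each maximal run $A - X_{i_1} - \cdots - X_{i_k} - B$ of $X$-nodes (with $A, B \notin X$) in a $G_{\overrightarrow{\underrightarrow{X}}}$-route into the single edge $A - B$, which is present in $G_{\wh{X}}$ by lines~2--3 of Table~\ref{tab:intervention}, and expands marginalized edges in the reverse direction. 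I would then check that connectedness is preserved: the collider/non-collider status at every boundary node $A, B$ is unchanged because the run-side edge is undirected in both graphs and the other incident edge is literally the same; the non-collider (undirected) exception agrees because $Pa_{G_{\wh{X}}}(C) \setminus (X \cup W) = Pa_{G_{\overrightarrow{\underrightarrow{X}}}}(C) \setminus W = Pa_G(C) \setminus (X \cup W)$ for every non-collider $C \notin X$; and the collider ancestor condition agrees because $X$ has no incoming directed edge in $G_{\wh{X}}$, whence every directed route to $W$ avoids $X$ and $An_{G_{\wh{X}}}(X \cup W) \setminus X = An_{G_{\overrightarrow{\underrightarrow{X}}}}(W) \setminus X$, while all colliders lie outside $X$.

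For Rules~2 and~3 the extra surgery acts only on $Z$ (deleting directed edges out of $Z$) or on $Z(W)$ (deleting directed and undirected edges into $Z(W)$). Since $Z$, and hence $Z(W) \subseteq Z$, is disjoint from $X$, these deletions commute with the $X$-surgery and are applied verbatim to both $G_{\wh{X}}$ and $G_{\overrightarrow{\underrightarrow{X}}}$, so the collapse/expand map of the previous paragraph transfers unchanged. For Rule~3 I would also note that $Z(W)$ is the \emph{same} set under both definitions, because $Z \cap An_{G_{\wh{X}}}(W) = Z \cap An_{G_{\overrightarrow{\underrightarrow{X}}}}(W)$ by the ancestor identity just established. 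I expect the main obstacle to be exactly the interaction, in Rule~3, between the marginalization of undirected edges through $X$ and the subsequent deletion of undirected edges into $Z(W)$: one must verify that removing a marginalized edge $A - Z_j$ in $G_{\wh{X}}$ corresponds precisely to breaking the expanded route at its $Z(W)$-end in $G_{\overrightarrow{\underrightarrow{X}}}$, and keep careful track of the undirected non-collider exception at the boundaries of each $X$-run. Degenerate routes that revisit a boundary node can be discarded by passing to paths through Theorem~\ref{the:2}.
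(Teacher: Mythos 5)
Your proposal is correct and follows essentially the same route as the paper: prove the antecedents equivalent to those of Theorem~\ref{the:calculus} by noting that connecting paths in $G_{\wh{X}}$ given $X \cup W$ must avoid $X$ (nodes of $X$ have only outgoing directed edges there and block), that nodes of $X$ occur in $G_{\overrightarrow{\underrightarrow{X}}}$ only on undirected subpaths $A - X_1 - \cdots - X_k - B$, and then collapsing/expanding these runs via lines 2--3 of Table~\ref{tab:intervention}, with the observation that $Z(W)$ coincides in both graphs for Rule~3. Your extra checks on the non-collider parent exception and the ancestor identity are details the paper leaves implicit, but they are correct and do not change the argument.
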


\begin{proof}
We prove that the antecedents of rule 1 in this theorem and in Theorem \ref{the:calculus} are equivalent. If $Y \nci_{G_{\wh{X}}} Z | X, W$ then there is a connecting path given $W \cup X$ that contains no node in $X$, since the nodes in X can only have outgoing directed edges in $G_{\wh{X}}$ and thus the path would be blocked. The path can be transformed into a path in $G_{\overrightarrow{\underrightarrow{X}}}$ by simply undoing lines 2 and 3 in Table \ref{tab:intervention}. The resulting path is clearly connecting given $W$. 

If $Y \nci_{G_{\overrightarrow{\underrightarrow{X}}}} Z | W$ then there is a connecting path given $W$. Note that the nodes in $X$ only participate in undirected edges in the path, since they do not participate in any directed edge in $G_{\overrightarrow{\underrightarrow{X}}}$. Then, the nodes in $X$ only appear in subpaths of the form $A - X_1 - \ldots - X_i - \ldots - X_k - B$ with $X_1, \ldots, X_k \in X$ and $A, B \notin X$. Then, the path clearly results in a path in $G_{\wh{X}}$ that is connecting given $W \cup X$, by lines 2 and 3 in Table \ref{tab:intervention}.

Proving the equivalence of the antecedents for rules 2 and 3 can be done similarly. In rule 3, note that a node in $Z$ is an ancestor of $W$ in $G_{\wh{X}}$ if and only if it is an ancestor of $W$ in $G_{\overrightarrow{\underrightarrow{X}}}$.
\end{proof}

We do not currently have a systematic way of deciding whether there exists a sequence of rules for identifying a given causal effect. The following theorems characterize graphically three cases where such a sequence exists.

\begin{theorem}\label{the:backdoor}
A set of variables $W \cup Z$ satisfies the back-door criterion relative to an ordered pair of variables $(X, Y)$ in an aADMG $G$ if
\begin{enumerate}
\item $W \cup Z$ contains no descendant of $X$ in $G$, and

\item $W \cup Z$ blocks all non-directed paths in $G$ from $X$ to $Y$. 
\end{enumerate}
Moreover, 
\[
p(Y | \wh{X}, W) = \sum_z p(Y | X, W, z) p(z | W).
\]
\end{theorem}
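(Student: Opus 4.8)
The plan is to follow Pearl's classical derivation of the back-door adjustment, replacing every appeal to $d$-separation and the $do$-calculus by the aADMG separation criterion of Section~\ref{sec:separation} and the rules of Theorem~\ref{the:calculus} (in the simplified form stated just above). The starting point is the law of total probability applied to the post-intervention distribution,
\[
p(Y | \wh{X}, W) = \sum_z p(Y | \wh{X}, W, z)\, p(z | \wh{X}, W),
\]
which is legitimate because $p(\cdot | \wh{X})$ is a genuine distribution under the causal semantics of Section~\ref{sec:causal}. It then suffices to establish two exchanges, namely $p(z | \wh{X}, W) = p(z | W)$ via Rule~3 and $p(Y | \wh{X}, W, z) = p(Y | X, W, z)$ via Rule~2; substituting both yields exactly the claimed formula.

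For the Rule~2 step I would take the exchanged set to be $X$ and the background observations to be $W \cup Z$, so the required antecedent becomes $Y \ci_{G_{\underrightarrow{X}}} X | W, Z$, where $G_{\underrightarrow{X}}$ deletes the directed edges out of $X$. In $G_{\underrightarrow{X}}$ every path leaving $X$ must begin with an undirected edge or with an arrowhead at $X$, i.e.\ it is precisely a \emph{non-directed} path from $X$ in $G$, and condition~2 asserts that all such paths are blocked by $W \cup Z$. The point to verify is that deleting the out-edges of $X$ cannot \emph{open} a previously blocked path. Since $W \cup Z$ contains no descendant of $X$ (condition~1), no directed path into $W \cup Z$ can pass through an out-edge of $X$, so the ancestor sets governing the collider test are unchanged; and removing edges can only shrink $Pa_G(C) \setminus Z$, hence only destroy (never create) the non-collider exception. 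In both cases paths can only become blocked, so the antecedent holds.

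For the Rule~3 step, condition~1 first forces $X(W) = X$: no node of $X$ is an ancestor of $W$, for otherwise $W$ would contain a descendant of $X$. The required antecedent is therefore $Z \ci_{G_{\overline{\overrightarrow{X}}}} X | W$, where $G_{\overline{\overrightarrow{X}}}$ deletes all directed and undirected edges into $X$. The key structural observation is that in $G_{\overline{\overrightarrow{X}}}$ any path out of $X$ must begin $X \ra C_1$ and can only continue as a directed chain of non-colliders; as soon as the chain stops being a pure directed descent (the next edge pointing back into the current node, or being undirected, which already makes that node a collider via the subpath $A \oa C - B$), it reaches a collider that is a descendant of $X$. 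Such a collider cannot lie in $An_G(W)$, again by condition~1, so the path is blocked there; and the chain cannot terminate inside $Z$ either, since $Z$ contains no descendant of $X$. This gives $Z \ci_{G_{\overline{\overrightarrow{X}}}} X | W$ and completes the Rule~3 step.

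The main obstacle, and the place where this departs from the DAG proof, is exactly the aADMG treatment of undirected edges: the collider rule $A \oa C - B$ and the non-collider exception involving $Pa_G(C) \setminus Z$ allow conditioning on a node on an undirected stretch to open a path. I expect the delicate bookkeeping to be (i) checking in the Rule~2 step that the edge deletions create no such opening, and (ii) checking in the Rule~3 step that an outgoing directed edge from $X$ turns the next node into a collider the instant the path stops being a directed descent, so that condition~1 can be invoked to block it. Once these two separation facts are secured, the rest of the derivation is routine.
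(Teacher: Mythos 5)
Your proposal is correct and follows essentially the same route as the paper: decompose $p(Y|\wh{X},W)$ by total probability over $Z$, apply Rule~3 (using condition~1, which forces $X(W)=X$) to get $p(z|\wh{X},W)=p(z|W)$, and apply Rule~2 (using condition~2) to get $p(Y|\wh{X},W,z)=p(Y|X,W,z)$. The paper states the two separation antecedents only informally ("the only unblocked paths reach $X$ through its children/parents or neighbors"), whereas you verify them explicitly, including the monotonicity of blocking under edge deletion; this is a faithful elaboration, not a different argument.
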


\begin{proof}
By condition 2, the only paths between $X$ and $Y$ that are not blocked given $W \cup Z$ are those that reach $X$ through its children. Then, $p(Y | \wh{X}, W, z) = p(Y | X, W, z)$ by rule 2. By condition 1, the only paths between $X$ and $Z$ that are not blocked given $W$ are those that reach $X$ through its parents or neighbors. Then, $p(z | \wh{X}, W) = p(z | W)$ by rule 3. Finally, simply replace the previous expressions in
\[
p(Y | \wh{X}, W) = \sum_z p(Y | \wh{X}, W, z) p(z | \wh{X}, W).
\]
\end{proof}

Note that $p(B | \wh{A})$ in the example in Figure \ref{fig:example1} can be identified with the help of the previous theorem ($X=A$, $Y=B$, $W = \emptyset$ and $Z=C$).

\begin{theorem}
A set of variables $W \cup Z$ satisfies the front-door criterion relative to an ordered pair of variables $(X, Y)$ in an aADMG $G$ if 
\begin{enumerate}
\item $W$ contains no descendant of $X$ in $G$,

\item $Z$ blocks all directed paths in $G$ from $X$ to $Y$,

\item $W$ satisfies the back-door criterion in $G$ relative to $(X,Z)$, and

\item $W \cup X$ satisfies the back-door criterion in $G$ relative to $(Z,Y)$.
\end{enumerate}
Moreover,
\[
p(Y | \wh{X}, W) = \sum_z p(z | X, W) \sum_{x'} p(Y | z, W, x') p(x' | W)
\]
\end{theorem}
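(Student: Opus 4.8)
The plan is to prove the front-door formula by mimicking the structure of the classical front-door proof, decomposing the effect of $X$ on $Y$ through the intermediate set $Z$ and applying the back-door theorem (Theorem \ref{the:backdoor}) twice together with the calculus rules of Theorem \ref{the:calculus}. The starting point is the identity
\[
p(Y | \wh{X}, W) = \sum_z p(Y | \wh{X}, W, z) \, p(z | \wh{X}, W),
\]
which holds by marginalization over $Z$. The goal is then to reduce each of the two factors to an observational (or already-identified) expression using conditions 1--4.

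First I would handle the factor $p(z | \wh{X}, W)$. Condition 1 says $W$ contains no descendant of $X$, and condition 3 says $W$ satisfies the back-door criterion relative to $(X, Z)$. Applying Theorem \ref{the:backdoor} with the pair $(X, Z)$ and adjustment set $W$ (so the "$Z$" of that theorem is empty), I expect to obtain $p(z | \wh{X}, W) = p(z | X, W)$; intuitively, conditioning on $W$ blocks all non-directed paths from $X$ to $Z$, so intervening on $X$ is the same as observing it. This yields the factor $p(z | X, W)$ appearing in the claimed formula. Next I would handle $p(Y | \wh{X}, W, z)$. The key observation is that, by condition 2, $Z$ blocks all directed paths from $X$ to $Y$, so after conditioning on $Z$ the intervention $\wh{X}$ should have no further effect on $Y$ beyond what passes through $Z$; more precisely, I would argue via rule 3 (insertion/deletion of interventions) that $p(Y | \wh{X}, W, z) = p(Y | \wh{Z}, W, \wh{X})$-type manipulations let me replace the intervention on $X$ by an intervention on $Z$, and then apply Theorem \ref{the:backdoor} to the pair $(Z, Y)$ with adjustment set $W \cup X$ (condition 4) to obtain
\[
p(Y | \wh{Z}, W, X) = \sum_{x'} p(Y | z, W, x') \, p(x' | W, X).
\]
Here condition 4 is exactly what licenses the back-door adjustment for the $Z \to Y$ effect, and the summation variable $x'$ is the dummy ranging over the values of $X$ in the adjustment.

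The main obstacle I anticipate is making rigorous the step that converts $p(Y | \wh{X}, W, z)$ into a $Z$-intervention expression: one must verify, using the intervened graph $G_{\wh{X}}$ and the separation criterion for aADMGs, that $Y \ci_{(G_{\wh{X}})_{\ldots}} X \mid Z, W$ under the appropriate edge deletions, so that rule 3 (or rule 2) applies. This requires checking that every path from $Y$ back to $X$ in the relevant mutilated graph is blocked by $Z \cup W$, which is precisely what condition 2 (together with conditions 1 and 3 controlling the non-directed paths) is designed to guarantee; the subtlety is that undirected edges in the aADMG can create connecting paths through colliders or through neighbors with spouses, so the blocking argument must invoke the aADMG separation rules rather than ordinary $d$-separation. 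Once these two factor reductions are established and the simplification $p(x' | W, X) \to p(x' | W)$ is justified (since in the back-door formula for $(Z,Y)$ the conditioning already incorporates $X$, and the nested structure collapses the explicit $X$ into the dummy $x'$), I would substitute both reduced factors back into the marginalization identity and rearrange the double sum to match
\[
p(Y | \wh{X}, W) = \sum_z p(z | X, W) \sum_{x'} p(Y | z, W, x') \, p(x' | W),
\]
completing the proof. The remaining work is routine bookkeeping of the summations and a careful statement of which separation holds in which mutilated graph.
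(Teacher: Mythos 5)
Your outline follows the paper's route — marginalize over $Z$, identify $p(z \mid \wh{X}, W)$ by the back-door theorem for $(X,Z)$ with adjustment set $W$, and identify $p(Y \mid \wh{X}, W, z)$ by converting it to a $Z$-intervention and applying the back-door theorem for $(Z,Y)$ with adjustment set $W \cup X$. The first factor is handled correctly. But the second factor contains a genuine gap, and it sits exactly where the front-door argument does its real work. You arrive at $p(Y \mid \wh{Z}, W, X) = \sum_{x'} p(Y \mid z, W, x')\, p(x' \mid W, X)$ and then propose to ``simplify'' $p(x' \mid W, X)$ to $p(x' \mid W)$. That simplification is not valid: with $X$ in the conditioning set, $p(x' \mid W, X)$ is a point mass at the observed value of $X$, so your expression collapses to $p(Y \mid z, W, X)$ and the final formula degenerates to $\sum_z p(z \mid X, W)\, p(Y \mid z, W, X)$, which is just the observational conditional, not the front-door adjustment. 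The paper avoids this by first removing $X$ from the picture entirely: $p(Y \mid \wh{X}, W, z) = p(Y \mid \wh{X}, W, \wh{z})$ by rule 2 (licensed by condition 4), and then $p(Y \mid \wh{X}, W, \wh{z}) = p(Y \mid \wh{z}, W)$ by rule 3 (licensed by conditions 1 and 2) — note the intervention $\wh{X}$ is deleted, not converted to an observation. Only then is the back-door theorem for $(Z,Y)$ applied, with $X$ playing the role of the \emph{summed-over} part of the adjustment set, which is what produces $p(x' \mid W)$ rather than $p(x' \mid W, X)$.

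A second, smaller issue: you attribute the blocking of the residual paths in the rule-3 step to conditions 1, 2 and 3. The delicate case is a path that leaves $X$ through a child and reaches some $Z_i \in Z$ as a collider, i.e.\ $X \ra \ldots \ra Z_i \ldots Y$; such a path is not excluded by conditions 1--3. The paper rules it out by observing that the subpath from $Z_i$ to $Y$ would contradict condition 4 (the back-door criterion for $(Z,Y)$ given $W \cup X$). So condition 4 is doing double duty — it justifies both the rule-2 exchange and the rule-3 deletion — and your sketch does not supply this argument.
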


\begin{proof}
By condition 4, the only paths between $Z$ and $Y$ that are not blocked by $W \cup X$ are those that reach $Z$ through its children. Then, $p(Y | \wh{X}, W, z) = p(Y | \wh{X}, W, \wh{z})$ by rule 2. By conditions 1 and 2, the only paths between $X$ and $Y$ that are not blocked by $W \cup Z$ are those that reach $X$ through its parents or neighbors and, thus, $p(Y | \wh{X}, W, \wh{z}) = p(Y | \wh{z}, W)$ by rule 3. To see it, note that any path reaching $X$ through its children must be of the form $X \ra \ldots \ra Z_i \ldots Y$ where $Z_i \in Z$ is a collider on the path. However, the subpath between $Z_i$ and $Y$ contradicts condition 4. Moreover, note that conditions 3 and 4 together with Theorem \ref{the:backdoor} imply that
\[
p(Z | \wh{X}, W) = p(Z | X, W)
\]
and
\[
p(Y | \wh{Z}, W) = \sum_{x'} p(Y | Z, W, x') p(x' | W).
\]
Finally, replace the previous expressions in
\[
p(Y | \wh{X}, W) = \sum_z p(Y | \wh{X}, W, z) p(z | \wh{X}, W) = \sum_z p(Y | \wh{X}, W, \wh{z}) p(z | \wh{X}, W)
\]
\[
= \sum_z p(Y | \wh{z}, W) p(z | \wh{X}, W).
\]
\end{proof}

Figure \ref{fig:example3} shows a DAG that induces an aADMG from which $p(D | \wh{A}, C)$ can be identified with the help of the previous theorem ($X=A$, $Y=D$, $W=C$ and $Z=B$). Note that Theorem \ref{the:backdoor} is not applicable to this example. Note also that the DAG induces an oADMG from which the causal effect is not identifiable \citep[p. 94]{Pearl2009}.

\begin{figure}
\begin{center}
\begin{tabular}{|c|c|c|}
\hline
DAG&oADMG&aADMG\\
\hline
\begin{tikzpicture}[inner sep=1mm]
\node at (0,0) (A) {$A$};
\node at (2,0) (B) {$B$};
\node at (1,1) (C) {$C$};
\node at (3,0) (D) {$D$};
\node at (0,3) (UA) {$U_A$};
\node at (2,1) (UB) {$U_B$};
\node at (1,2) (UC) {$U_C$};
\node at (3,1) (UD) {$U_D$};
\path[->] (A) edge (B);
\path[->] (B) edge (D);
\path[->] (UA) edge (A);
\path[->] (UB) edge (B);
\path[->] (UC) edge (C);
\path[->] (UD) edge (D);
\path[->] (UA) edge (UC);
\path[->] (UC) edge (UB);
\path[->] (UA) edge (UD);
\end{tikzpicture}
&
\begin{tikzpicture}[inner sep=1mm]
\node at (0,0) (A) {$A$};
\node at (1.5,0) (B) {$B$};
\node at (0.75,1) (C) {$C$};
\node at (2.5,0) (D) {$D$};
\path[->] (A) edge (B);
\path[->] (B) edge (D);
\path[<->] (A) edge [bend left] (B);
\path[<->] (A) edge [bend right] (D);
\path[<->] (A) edge (C);
\path[<->] (B) edge (C);
\path[<->] (B) edge [bend left] (D);
\path[<->] (C) edge [bend left] (D);
\end{tikzpicture}
&
\begin{tikzpicture}[inner sep=1mm]
\node at (0,0) (A) {$A$};
\node at (1.5,0) (B) {$B$};
\node at (0.75,1) (C) {$C$};
\node at (2.5,0) (D) {$D$};
\path[->] (A) edge (B);
\path[->] (B) edge (D);
\path[-] (A) edge (C);
\path[-] (B) edge (C);
\path[-] (A) edge [bend right] (D);
\end{tikzpicture}\\
\hline
\end{tabular}
\end{center}\caption{Example where $p(D | \wh{A}, C)$ is identifiable from the aADMG but not from the oADMG.}\label{fig:example3}
\end{figure}
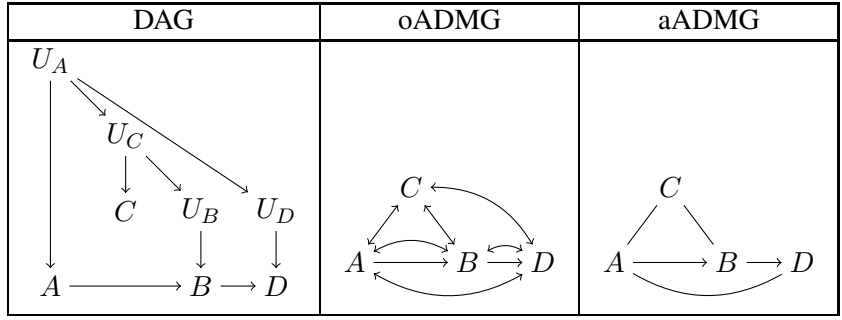

\begin{theorem}
If $p(Y | \wh{X}, W)$ is identifiable from an aADMG $G$ and $W$ contains no descendant of $X$ in $G$, then
\[
p(Y | \wh{X}) = \sum_w p(Y | \wh{X}, w) p(w).
\]
\end{theorem}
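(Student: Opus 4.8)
The plan is to start from the law of total probability applied to the interventional distribution $p(\cdot \mid \wh{X})$, namely
\[
p(Y \mid \wh{X}) = \sum_w p(Y \mid \wh{X}, w)\, p(w \mid \wh{X}),
\]
and then to show that the intervention on $X$ leaves the distribution of $W$ unchanged, i.e.\ $p(w \mid \wh{X}) = p(w)$. Substituting this into the display immediately yields the claimed formula. The identifiability hypothesis on $p(Y \mid \wh{X}, W)$ then serves to make the result useful: every factor on the right-hand side is computable from observed quantities ($p(w)$ being purely observational and each $p(Y \mid \wh{X}, w)$ identifiable by assumption), so $p(Y \mid \wh{X})$ is identifiable as well.

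The heart of the argument is the equality $p(w \mid \wh{X}) = p(w)$, which I would obtain from rule 3 of Theorem \ref{the:calculus}. Concretely, I would instantiate rule 3 with empty persistent intervention and empty observed set, taking the role of the removed intervention to be $X$ and the role of the target to be $W$. Since no node is an ancestor of the empty set, the set $Z(W)$ appearing in rule 3 equals all of $X$, so the required antecedent collapses to the single separation statement $W \ci_{G_{\overline{\overrightarrow{X}}}} X$, where $G_{\overline{\overrightarrow{X}}}$ is $G$ with all directed and undirected edges into $X$ deleted.

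It then remains to verify this separation, and this is the step that uses the hypothesis that $W$ contains no descendant of $X$. In $G_{\overline{\overrightarrow{X}}}$ every node of $X$ retains only outgoing directed edges, so any path from $X$ to $W$ must begin with an edge $X_i \ra V_1$. Because the conditioning set is empty, a connecting path can contain no collider (a collider would have to be an ancestor of the empty set). Avoiding a collider at $V_1$ rules out the continuations $V_1 \la V_2$ and $V_1 - V_2$, since $X_i \ra V_1 \la V_2$ and $X_i \ra V_1 - V_2$ are both colliders in an aADMG; hence the path must continue as $X_i \ra V_1 \ra V_2$. Iterating, any connecting path from $X$ to $W$ is forced to be a fully directed path $X_i \ra \cdots \ra W_j$. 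As deleting edges into $X$ destroys none of the outgoing directed edges of such a path, $W_j$ would then be a descendant of $X$ in $G$, contradicting the hypothesis. Therefore no connecting path exists, the separation holds, and rule 3 delivers $p(w \mid \wh{X}) = p(w)$.

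I expect the main obstacle to be the bookkeeping of the second paragraph: correctly identifying which instance of rule 3 to apply and checking that $Z(W) = X$ precisely because the relevant conditioning and persistent-intervention sets are empty. Once the antecedent has been reduced to $W \ci_{G_{\overline{\overrightarrow{X}}}} X$, the graphical verification in the third paragraph is routine given the no-descendant assumption, and the final substitution is immediate.
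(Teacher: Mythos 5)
Your proof is correct and takes essentially the same route as the paper's: the paper likewise expands $p(Y | \wh{X})$ by total probability and invokes rule 3 to conclude $p(W | \wh{X}) = p(W)$, noting that given the empty conditioning set the only unblocked paths between $X$ and $W$ are those reaching $X$ through its parents or neighbours (any path leaving $X$ through a child would be forced to be fully directed and would contradict the no-descendant hypothesis). Your write-up merely makes explicit the instantiation of rule 3 and the collider bookkeeping that the paper leaves implicit.
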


\begin{proof}
The only paths between $X$ and $W$ that are not blocked given the empty set are those that reach $X$ through its parents or neighbors. Then, $p(W | \wh{X}) = p(W)$ by rule 3.
\end{proof}

Thanks to the previous theorem, $p(D | \wh{A})$ is identifiable in the example in Figure \ref{fig:example3} ($X=A$, $Y=D$ and $W=C$).

\subsection{Decomposition-Based Causal Effect Identification}

In this section, we present some new graphical criteria for causal effect identification from aADMGs, which add to the back-door and front-door criteria presented in the previous section. First, note that the system of structural equations corresponding to the causal model represented by an aADMG $G$ (recall Equation \ref{eq:equation}) induces a probability distribution $p$ over $V$, namely
\begin{equation}\label{eq:noisy}
p(V) = \sum_u [ \prod_i p(V_i | Pa_G(V_i), u_i) ] p(u)
\end{equation}
where $U_i$ represents the unobserved causes of $V_i$, i.e. the error term $\epsilon_i$ since we are just considering aADMGs, and $U$ represents the set of all $U_i$. Moreover, the distribution induced by the post-interventional system of structural equations can be obtained from the previous equation by simply removing the terms for the variables intervened upon, that is
\begin{equation}\label{eq:postinterventional}
p(V \setminus X | \wh{X}) = \sum_u [ \prod_{V_i \notin X} p(V_i | Pa_G(V_i), u_i) ] p(u) = \sum_{u_{V \setminus X}} [ \prod_{V_i \notin X} p(V_i | Pa_G(V_i), u_i) ] p(u_{V \setminus X}).
\end{equation}

Let $W' \subseteq V$ be an ancestral set, and let $W = V \setminus W'$. We say that two nodes belong to the same component if and only if they are connected by an undirected path in $G_W$, i.e. if and only if they are connected by an undirected path in $G$ that is not blocked by $W'$. Assume that $W$ is partitioned into $k$ components $S_1, \ldots, S_k$. We define the factor
\[
Q(S_j | W') = \sum_{u_{S_j}} [ \prod_{V_i \in S_j} p(V_i | Pa_G(V_i), u_i) ] p(u_{S_j} | U_{W'}).
\]
Note that
\begin{equation}\label{eq:product}
p(W | W') = \prod_j Q(S_j | W').
\end{equation}
To see it, recall that $W'$ is ancestral and, thus, it determines $U_{W'}$. Then, Equation \ref{eq:noisy} implies that
\[
p(W, w') = [ \sum_{u_W} [ \prod_{V_i \in W} p(V_i | Pa_G(V_i), u_i) ] p(u_W | u_{W'}) ] [ \prod_{V_i \in W'} p(v_i | pa_G(V_i), u_i) ] p(u_{W'})
\]
and thus
\begin{equation}\label{eq:conditional}
p(W | w') = \sum_{u_W} [ \prod_{V_i \in W} p(V_i | Pa_G(V_i), u_i) ] p(u_W | u_{W'}) = \prod_j Q(S_j | w')
\end{equation}
because $U_{S_j} \ci_{G'} U_{S_l} | U_{W'}$ for any $j$ and $l$, and where $G'$ denotes the magnification of $G$. Note also that $Q(S_j | W')$ is the distribution of $S_j$ given that the variables in $W'$ are observed and the rest of the variables are intervened upon. To see it, recall again that $W'$ is ancestral and, thus, it determines $U_{W'}$. Therefore, given the observation $W'=w'$ and the intervention $\wh{W \setminus S_j} = \wh{t}$, we have that
\[
Q(S_j | w') = \sum_{u_{S_j}} [ \prod_{V_i \in S_j} p(V_i | Pa_G(V_i), u_i) ] p(u_{S_j} | u_{W'}) \frac{\prod_{V_i \in W'} p(v_i | pa_G(V_i), u_i) ] p(u_{W'})}{\prod_{V_i \in W'} p(v_i | pa_G(V_i), u_i) ] p(u_{W'})}
\]
\[
= \frac{p(S_j, w', \wh{t})}{p(w', \wh{t})} = p(S_j | w', \wh{t})
\]
by Equation \ref{eq:postinterventional}. Moreover, $Q(S_j | W')$ is identifiable as the next theorem shows.

\begin{theorem}
Given an aADMG $G$, $Q(S_j | W')$ is identifiable and is given by
\[
Q(S_j | W') = \prod_{V_i \in S_j} p(V_i | V^{(i-1)}, W')
\]
where $V_1 < \ldots < V_n$ is topological order of $W$ with respect to $G$, and $V^{(i)}=\{V_1, \ldots, V_i\}$.
\end{theorem}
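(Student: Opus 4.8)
The plan is to prove the identity by induction on $|W|$, peeling off the topologically last node of $W$. Write $V_1 < \ldots < V_n$ for the fixed topological order of $W$, and let $V_n$ be its maximal element. Since $W'$ is ancestral, $V_n$ has no child in $V$: any child in $W$ would have to follow $V_n$ in the order, and any child in $W'$ would force $V_n \in An_G(W') = W'$, a contradiction. Thus $V_n$ is a childless leaf of $G$. The base case $|W| \leq 1$ is immediate, since then $W$ has a single (possibly empty) component $S_j$ and $Q(S_j | W') = p(S_j | W')$ matches the claimed product.

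For the inductive step I would first marginalize $V_n$. Because $V_n$ is childless, summing $v_n$ out of the structural representation (Equations \ref{eq:noisy} and \ref{eq:conditional}) removes the factor $p(V_n | Pa_G(V_n), u_n)$ and the error $\epsilon_{V_n}$, leaving exactly the distribution induced by the aADMG $\tilde G$ obtained from $G$ by deleting $V_n$ and endowing the remaining errors with the marginal Gaussian $p(\epsilon \setminus \epsilon_{V_n})$. Here $\tilde G$ is a genuine smaller instance: $W'$ is still ancestral in it, the structural equations of the other nodes are untouched, and the marginal of a Gaussian error vector is again Gaussian with some undirected correlation pattern. Moreover, for every $i < n$ the conditional $p(V_i | V^{(i-1)}, W')$ is unchanged, since $V^{(i)} \subseteq W \setminus V_n$ and marginalizing $V_n$ commutes with the marginalizations defining these conditionals. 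Applying Equation \ref{eq:product} to $\tilde G$ therefore factorizes $p(W \setminus V_n | W')$ over the components of $W \setminus V_n$ read off $\tilde G$, and the induction hypothesis identifies each such factor $Q^{new}(\cdot | W')$ with the corresponding product of conditionals.

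Next I would compare the two factorizations by division. Let $S_j$ be the component of $V_n$. Every undirected neighbor of $V_n$ inside $W$ lies in $S_j$, so eliminating $\epsilon_{V_n}$ only fills in edges among errors of $S_j$ and of $W'$; hence the other components $S_l$ ($l \neq j$) are exactly the components of $\tilde G$ away from $S_j$, while $S_j \setminus V_n$ reappears as a single component of $\tilde G$ (any fragmentation caused by deleting $V_n$'s undirected edges is undone by the fill-in). For $l \neq j$ the cross-component error independence $U_{S_l} \ci_{G'} U_{S_j} | U_{W'}$ used to justify Equation \ref{eq:conditional} gives $\epsilon_{V_n} \ci_{G'} U_{S_l} | U_{W'}$, so $p(u_{S_l} | U_{W'})$ is unaffected by the marginalization and $Q^{new}(S_l | W') = Q(S_l | W')$. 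Since $W \setminus V_n = V^{(n-1)}$, the chain rule gives $p(V_n | V^{(n-1)}, W') = p(W | W') / p(W \setminus V_n | W')$, and dividing $\prod_l Q(S_l | W')$ by $[\prod_{l \neq j} Q(S_l | W')]\, Q^{new}(S_j \setminus V_n | W')$ collapses all common factors to yield
\[
p(V_n | V^{(n-1)}, W') = \frac{Q(S_j | W')}{Q^{new}(S_j \setminus V_n | W')},
\]
that is $Q(S_j | W') = p(V_n | V^{(n-1)}, W')\, Q^{new}(S_j \setminus V_n | W')$. Substituting the inductive expression for $Q^{new}(S_j \setminus V_n | W')$ (read as $1$ when $S_j = \{V_n\}$) gives the claim for $S_j$, while for the untouched components $S_l$ the equality $Q^{new}(S_l | W') = Q(S_l | W')$ together with the induction hypothesis gives the claim directly.

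The main obstacle I anticipate is the bookkeeping around marginalizing $V_n$: making precise that (i) summing out a childless leaf really returns a structurally identical, smaller aADMG instance rather than merely a marginal distribution, and (ii) the component $S_j$ may visibly fragment when $V_n$'s undirected edges are deleted, yet must be treated as a single component of $\tilde G$ because the fill-in produced by eliminating $\epsilon_{V_n}$ reconnects it. Both points are what make the division telescope cleanly, and both rest on the structural facts already established above, namely that $V_n$ is childless and that the error terms of distinct components are independent given $U_{W'}$. Everything else is routine rewriting of Equations \ref{eq:noisy}, \ref{eq:product} and \ref{eq:conditional}.
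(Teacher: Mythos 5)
Your proposal is correct and follows essentially the same route as the paper: induction on $|W|$, marginalizing out the topologically last node, applying the induction hypothesis to the components of the resulting marginal graph $(G_W)^{V^{(n-1)}}$, and recovering the factor of the component containing the last node by division against the chain-rule expansion of $p(W \mid W')$. The extra bookkeeping you supply (that $V_n$ is childless because $W'$ is ancestral, that the fill-in from eliminating $\epsilon_{V_n}$ keeps $S_j \setminus V_n$ a single component, and that the factors of the untouched components are unchanged) is exactly the detail the paper's proof leaves implicit, so it is a welcome refinement rather than a different argument.
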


\begin{proof}
We prove the theorem by induction over the number of variables in $W$. Clearly, the theorem holds when $W$ contains a single variable. Assume as induction hypothesis that the theorem holds for up to $n$ variables. When there are $n+1$ variables, these can be divided into components $S_1, \ldots, S_k, S'$ with factors $Q(S_1 | W'), \ldots, Q(S_k | W'), Q(S' | W')$ such that $V_{n+1} \in S'$. By Equation \ref{eq:product}, we have that
\[
p(W | W') = Q(S' | W') \prod_i Q(S_i | W')
\]
which implies that
\[
p(W \setminus V_{n+1} | W') = [ \sum_{v_{n+1}} Q(S' | W') ] \prod_i Q(S_i | W').
\]
Note that $p(W \setminus V_{n+1} | W')$ factorizes according to $(G_W)^{V^{(n)}}$ (i.e. it can be expressed as in Equation \ref{eq:noisy}), and $S_i$ is a component of $(G_W)^{V^{(n)}}$. Therefore,
\[
Q(S_j | W') = \prod_{V_i \in S_j} p(V_i | V^{(i-1)}, W')
\]
by the induction hypothesis and the fact that $V_1 < \ldots < V_n$ is also a topological order of $V^{(n)}$ with respect to $(G_W)^{V^{(n)}}$. Then, $Q(S' | W')$ is also identifiable and is given by
\[
Q(S' | W') = \frac{p(W | W')}{\prod_i Q(S_i | W')} = \frac{\prod_i p(V_i | V^{(i-1)}, W')}{\prod_i Q(S_i | W')} = \prod_{V_i \in S'} p(V_i | V^{(i-1)}, W').
\]
\end{proof}

The following theorem gives a necessary and sufficient graphical criterion for the identification of the causal effect of a single variable on the rest of the variables. 

\begin{theorem}\label{the:children}
Given an aADMG $G$, let $X \in W$ belong to the component $S^X$. Then, $p(W \setminus X | \wh{X}, W')$ is identifiable if and only if there is no undirected path between $X$ and its children in $G_W$. When $p(W \setminus X | \wh{X}, W')$ is identifiable, it is given by
\[
p(W \setminus X | \wh{X}, W') = [ \sum_x Q(S^X | W') ] \prod_i Q(S_i | W').
\]
\end{theorem}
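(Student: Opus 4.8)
The plan is to prove both directions by working directly with the structural-equation / error representation behind the $Q$-factors, and then to invoke the preceding theorem for the identifiability of the surviving factors. The first step I would take is to record the interventional analogue of Equation \ref{eq:conditional}. Since $W'$ is ancestral and $X \in W$, no node of $W'$ is a descendant of $X$, so intervening on $X$ leaves $p(w')$ unchanged and $w'$ still determines $U_{W'}$; removing the term for $X$ from Equation \ref{eq:noisy} in the spirit of Equation \ref{eq:postinterventional} and conditioning on $W' = w'$ yields
\[
p(W\setminus X \mid \wh{X}, w') = \sum_{u_{W\setminus X}} \left[\prod_{V_i \in W\setminus X} p(V_i \mid Pa_G(V_i), u_i)\right] p(u_{W\setminus X}\mid u_{W'}).
\]
Because the error blocks of distinct components are independent given $U_{W'}$ (the same fact used to derive Equation \ref{eq:conditional}) and deleting $X$ only strips $\epsilon_X$ out of $S^X$, this product splits into a factor over $S^X \setminus X$ times $\prod_i Q(S_i \mid W')$ over the components $S_i \neq S^X$. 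This identity holds with no assumption on $X$.

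For the \emph{if} direction I would note that ``no undirected path between $X$ and its children in $G_W$'' is equivalent to $Ch_G(X) \cap S^X = \emptyset$, i.e.\ no variable of $S^X \setminus X$ has $X$ as a parent (all children of $X$ lie in $W$ because $W'$ is ancestral). Under this hypothesis $\prod_{V_i \in S^X\setminus X} p(V_i \mid Pa_G(V_i), u_i)$ does not depend on the value $x$ of $X$, so in $\sum_x Q(S^X \mid W')$ the sum over $x$ acts only on $p(X \mid Pa_G(X), u_X)$, which integrates to $1$; the leftover sum over $u_X$ then collapses $p(u_{S^X} \mid U_{W'})$ to $p(u_{S^X\setminus X}\mid U_{W'})$. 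Hence $\sum_x Q(S^X \mid W')$ equals exactly the $S^X \setminus X$ factor isolated above, and combining this with the preceding theorem (which renders each $Q(S_i \mid W')$ and $Q(S^X \mid W')$ identifiable) gives both the stated formula and the identifiability claim.

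For the \emph{only if} direction I would argue by contradiction. If some child $Y$ of $X$ lies in $S^X$, then $X$ and $Y$ are joined by an undirected path in $G_W$, so $\epsilon_X$ and $\epsilon_Y$ are joined by an undirected path in the error graph of the magnification $G'$, and thus belong to the same connected error component; consequently $covariance(\epsilon_X, \epsilon_Y)$ can be made nonzero while respecting the zero pattern that $G'$ imposes on $\Sigma^{-1}$. I would then exhibit two Gaussian models of the form of Equations \ref{eq:equation}--\ref{eq:equation2} that induce the same observational $p(V)$ but different $p(W\setminus X \mid \wh{X}, W')$: perturb the coefficient $\alpha_{YX}$ of the edge $X \ra Y$ and compensate by adjusting the error covariances along the path so that every observational moment is preserved, while $p(Y \mid \wh{X})$ changes because, once the intervention removes $\epsilon_X$, the conditional mean of $Y$ depends on $\alpha_{YX}$ directly (this is exactly the two-variable ``bow'' phenomenon $X \ra Y$, $X - Y$).

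The step I expect to be the main obstacle is this last construction. When the undirected path is the single edge $X - Y$ the compensating trade-off between $\alpha_{YX}$ and $covariance(\epsilon_X,\epsilon_Y)$ is immediate, but for a longer path $X - Z_1 - \cdots - Z_m - Y$ one must verify that the error covariances can be readjusted so that all joint observational moments among $X, Z_1, \ldots, Z_m, Y$ (and the remaining variables of $V$) are matched exactly. I would handle this by setting the structural coefficients along the path to values that reduce the problem to the two-variable core, and then checking that both constructed covariance matrices stay positive definite with the inverse-zero pattern prescribed by $G'$, so that they are genuine parameterizations of $G$ realizing the same $p(V)$.
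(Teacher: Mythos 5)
Your sufficiency argument is essentially the paper's own: the paper likewise writes $p(W \setminus X \mid \wh{X}, W') = \wh{Q}(S^X \mid W') \prod_i Q(S_i \mid W')$, where $\wh{Q}(S^X \mid W')$ is $Q(S^X \mid W')$ with the term $p(X \mid Pa_G(X), U_X)$ deleted, observes that the hypothesis forces $Ch_G(X) \cap S^X = \emptyset$, concludes $\wh{Q}(S^X \mid W') = \sum_x Q(S^X \mid W')$, and invokes the preceding theorem for identifiability of each factor. That half is fine.

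The necessity direction has a genuine gap, and it sits exactly where you flagged it. You insist that both witnessing parameterizations have \emph{positive definite} error covariance matrices respecting the zero pattern that $G'$ imposes on $\Sigma^{-1}$. For the single edge $X - Y$ the trade-off between $\alpha_{YX}$ and $covariance(\epsilon_X,\epsilon_Y)$ indeed works. But for a path $X - Z_1 - \cdots - Z_m - Y$ with $m \geq 1$, no such pair of parameterizations exists, because in the positive definite regime the effect is actually identifiable: the prescribed zero pattern forces $\epsilon_X \ci \epsilon_Y \mid \epsilon_{Z_1}, \ldots, \epsilon_{Z_m}$, the $Z_i$ determine their own errors, hence $E[Y \mid X, Z] = \alpha_{YX} X + c^\top Z$ with $\alpha_{YX}$ recoverable by regressing $Y$ on $(X,Z)$, and then $p(Y \mid \wh{x}, z) = p(Y \mid x, z)\big|_{x = \wh{x}}$ together with $p(Z \mid \wh{X}) = p(Z)$ pins down the whole post-interventional law. (Equivalently, rules 2 and 3 of the calculus apply, since the $Z_i$ have no parents or spouses and therefore block the undirected path in $G_{\underrightarrow{X}}$.) Concretely, matching all observational moments after perturbing $\alpha_{YX}$ forces the replacement $\epsilon_Y \mapsto \epsilon_Y + (\alpha^{(1)}_{YX} - \alpha^{(2)}_{YX})\epsilon_X$, which generically creates a nonzero entry $(\Sigma^{-1})_{\epsilon_X \epsilon_Y}$ and so violates the pattern you require. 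There is thus no ``two-variable core'' to reduce to inside the class you allow.

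The paper avoids this by constructing the counterexample on the boundary of the model: it realizes the aADMG by a latent DAG with $U_X = U_{Z_1} = \cdots = U_{Z_m}$ and $U_A = \alpha U_{Z_m}$, so that observationally $X = Z_1 = \cdots = Z_m$ and only the sum $\alpha + \beta$ of the confounding and causal coefficients is visible in $p(V)$, while $p(A \mid \wh{X}, Z)$ depends on $\beta$ alone; two parameterizations with $\alpha_1 + \beta_1 = \alpha_2 + \beta_2$ but $\beta_1 \neq \beta_2$ then agree observationally and disagree interventionally. In other words, for long undirected paths the non-identifiability claim is witnessed only by degenerate (perfectly correlated, singular) error distributions, which are precisely the ones your construction rules out. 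To repair your proof you would have to either adopt the paper's degenerate construction or argue separately about what class of distributions the identifiability claim is meant to quantify over.
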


\begin{proof}
To prove the sufficiency part, let $\wh{Q}(S^X | W')$ denote the factor $Q(S^X | W')$ with the term $p(X | Pa_G(X), U_X)$ removed. Note that
\[
p(W \setminus X | \wh{X}, W') = \wh{Q}(S^X | W') \prod_i Q(S_i | W').
\]
As shown before, each $Q(S_i | W')$ is identifiable. Therefore, $p(W \setminus X | \wh{X}, W')$ is identifiable if and only if $\wh{Q}(S^X | W')$ is so. Moreover, since there is no undirected path between $X$ and its children in $G_W$, this implies that no child of $X$ is in $S^X$, which implies that
\[
\wh{Q}(S^X | W') = \sum_x Q(S^X | W').
\]

To prove the necessity part, note that if a causal effect is not identifiable from an aADMG then it is not identifiable if additional edges are added to the aADMG. To see it, note that the additional edges can be made ineffective through the parameters of the corresponding system of structural equations. Therefore, to prove the necessity part of the theorem, it suffices to consider any subgraph of $G_W$ that is of the form of the aADMG in Figure \ref{fig:example5}. The figure also shows the corresponding magnified aADMG and a causal DAG that may have induced the aADMG. We define the following system of structural equations for the DAG:
\begin{align*}
U_X &\sim N(\mu,\sigma)\\
U_{Z_1} &= U_X\\
U_{Z_i} &= U_{Z_{i-1}} \text{ for all $i > 1$}\\
U_A &= \alpha_1 U_{Z_m}\\
X &= U_X\\
Z_i &= U_{Z_i} \text{ for all $i$}\\
A &= \beta_1 X + U_A.
\end{align*} 
Let $p_1(X, Z_1, \ldots, Z_m, A, U_X, U_{Z_1}, \ldots, U_{Z_m}, U_A)$ denote the probability distribution induced by the equations above. We create a second system of structural equations by replacing $\alpha_1$ and $\beta_1$ with $\alpha_2$ and $\beta_2$ such that $\alpha_1 + \beta_1 = \alpha_2 + \beta_2$. Let $p_2(X, Z_1, \ldots, Z_m, A, U_X, U_{Z_1}, \ldots, U_{Z_m}, U_A)$ denote the probability distribution induced by the second system of equations. Finally, note that 
\[
p_i(X, Z_1, \ldots, Z_m, A) = p_i(A | X, Z_1, \ldots, Z_m) p_i(X, Z_1, \ldots, Z_m)
\]
with $i \in \{1,2\}$. Note also that $p_i(X, Z_1, \ldots, Z_m) = 0$ unless $X = Z_1 = \ldots = Z_m$, in which case $p_i(X, Z_1, \ldots, Z_m) = p_i(X)$ with $X \sim N(\mu,\sigma)$. Note also that $p_i(A | X, Z_1, \ldots, Z_m) = 1$ if and only if $A = (\alpha_i + \beta_i) Z_m$, because $X=Z_m$. Then, $p_1(X, Z_1, \ldots, Z_m, A) = p_2(X, Z_1, \ldots, Z_m, A)$. However, 
\begin{align*}
p_i(Z_1, \ldots, Z_m, A | \wh{X}) &= p_i(A | \wh{X}, Z_1, \ldots, Z_m) p_i(Z_1, \ldots, Z_m | \wh{X})\\
&=p_i(A | \wh{X}, Z_1, \ldots, Z_m) p_i(Z_1, \ldots, Z_m)
\end{align*}
by rule 3 in Section \ref{sec:calculus}. Recall that $p_1(Z_1, \ldots, Z_m) = p_2(Z_1, \ldots, Z_m)$. Moreover, note that $p_i(A | \wh{X}, Z_1, \ldots, Z_m) = 1$ if and only if $A = \beta_i \wh{X} + \alpha_i Z_m$. Therefore, $p_1(Z_1, \ldots, Z_m, A | \wh{X}) \neq p_2(Z_1, \ldots, Z_m, A | \wh{X})$. In other words, the causal effect in the theorem cannot be computed uniquely from observed quantities.
\end{proof}

\begin{figure}
\begin{center}
\begin{tabular}{|c|c|c|}
\hline
aADMG & Magnified aADMG & DAG\\
\hline
\begin{tikzpicture}[inner sep=1mm]
\node at (0,0) (X) {$X$};
\node at (4,0) (Y) {$A$};
\node at (1,1) (Z1) {$Z_1$};
\node at (2,1) (d) {$\ldots$};
\node at (3,1) (Zm) {$Z_m$};
\path[->] (X) edge (Y);
\path[-] (Z1) edge (d);
\path[-] (Zm) edge (d);
\path[-] (Zm) edge (Y);
\path[-] (X) edge (Z1);
\end{tikzpicture}
&
\begin{tikzpicture}[inner sep=1mm]
\node at (0,0) (X) {$X$};
\node at (4,0) (Y) {$A$};
\node at (1,1) (Z1) {$Z_1$};
\node at (2,2) (d) {$\ldots$};
\node at (3,1) (Zm) {$Z_m$};
\node at (0,2) (UX) {$U_X$};
\node at (4,2) (UY) {$U_A$};
\node at (1,2) (UZ1) {$U_{Z_1}$};
\node at (3,2) (UZm) {$U_{Z_m}$};
\path[->] (X) edge (Y);
\path[->] (UX) edge (X);
\path[->] (UY) edge (Y);
\path[->] (UZ1) edge (Z1);
\path[->] (UZm) edge (Zm);
\path[-] (UZ1) edge (d);
\path[-] (UZm) edge (d);
\path[-] (UZm) edge (UY);
\path[-] (UX) edge (UZ1);
\end{tikzpicture}
&
\begin{tikzpicture}[inner sep=1mm]
\node at (0,0) (X) {$X$};
\node at (4,0) (Y) {$A$};
\node at (1,1) (Z1) {$Z_1$};
\node at (2,2) (d) {$\ldots$};
\node at (3,1) (Zm) {$Z_m$};
\node at (0,2) (UX) {$U_X$};
\node at (4,2) (UY) {$U_A$};
\node at (1,2) (UZ1) {$U_{Z_1}$};
\node at (3,2) (UZm) {$U_{Z_m}$};
\path[->] (X) edge (Y);
\path[->] (UX) edge (X);
\path[->] (UY) edge (Y);
\path[->] (UZ1) edge (Z1);
\path[->] (UZm) edge (Zm);
\path[->] (UZ1) edge (d);
\path[<-] (UZm) edge (d);
\path[->] (UZm) edge (UY);
\path[->] (UX) edge (UZ1);
\end{tikzpicture}\\
\hline
\end{tabular}
\end{center}\caption{Example in the proof of Theorem \ref{the:children}.}\label{fig:example5}
\end{figure}
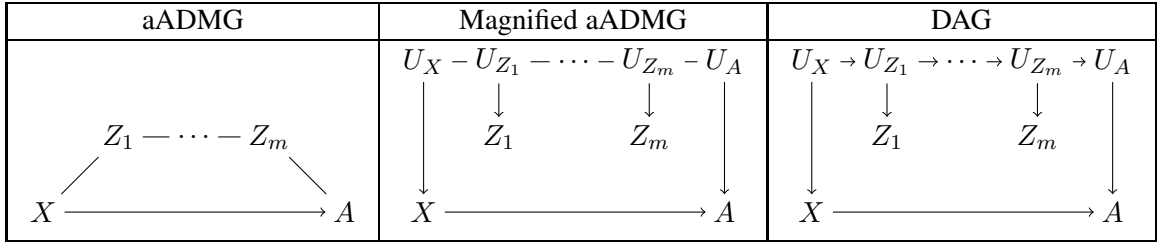

For instance, $p(B, C, D | \wh{A})$ is not identifiable by the previous theorem from the aADMG in Figure \ref{fig:example3} ($X=A$, $W=\{ A, B, C, D \}$ and $W'= \emptyset$). However, $p(B, D | \wh{A}, C)$ is identifiable ($X=A$, $W=\{ A, B, D \}$ and $W'= C$).

The following theorem strengthens the sufficient condition for unidentifiability in the previous theorem.

\begin{theorem}\label{the:children2}
Given an aADMG $G$, let $X \in W'' \subseteq W$. Then, $p(W'' \setminus X | \wh{X}, W')$ is not identifiable if there is an undirected path between $X$ and its children in $G_{W''}$.
\end{theorem}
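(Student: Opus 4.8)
The plan is to obtain the result as a marginalized version of the necessity part of Theorem \ref{the:children}. That proof already manufactures, for the canonical subgraph of Figure \ref{fig:example5}, two systems of structural equations that are observationally indistinguishable yet disagree on the post-interventional distribution of the child $A$ and the intermediate nodes $Z_1, \ldots, Z_m$. Since the hypothesis here places the undirected path between $X$ and a child inside $G_{W''}$, all of these nodes live in $W''$, so the same construction should expose non-identifiability of the marginal $p(W'' \setminus X | \wh{X}, W')$ as well. The only genuinely new work is to confirm that the engineered discrepancy is neither removed by marginalizing out $W \setminus W''$ nor hidden by conditioning on $W'$.

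First I would extract the canonical subgraph. By assumption there is an undirected path $X - Z_1 - \ldots - Z_m - A$ lying entirely in $G_{W''}$, where $A$ is a child of $X$; thus $X, Z_1, \ldots, Z_m, A \in W''$. Because $G_{W''}$ is an induced subgraph of $G$, this path together with the edge $X \ra A$ is a subgraph of $G$ of precisely the form in Figure \ref{fig:example5}. As in the proof of Theorem \ref{the:children}, I would then appeal to the fact that non-identifiability is preserved under the addition of edges -- superfluous directed edges can be nullified by setting their coefficients to zero and superfluous undirected edges by setting the corresponding error covariances to zero -- so it suffices to produce two models for $G$ in which everything outside this canonical subgraph is inactive.

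Next I would take the two systems $p_1$ and $p_2$ from the proof of Theorem \ref{the:children} verbatim on $\{X, Z_1, \ldots, Z_m, A\}$ (with $\alpha_1 + \beta_1 = \alpha_2 + \beta_2$), and let every remaining variable -- in particular all of $W'$ and all of $W \setminus W''$ -- be an independent standard Gaussian carrying no edge to the subgraph. That proof yields $p_1(X, Z_1, \ldots, Z_m, A) = p_2(X, Z_1, \ldots, Z_m, A)$, whence $p_1(V) = p_2(V)$ since the active and inactive blocks are independent and the inactive block is identical in the two models, while $p_1(Z_1, \ldots, Z_m, A | \wh{X}) \neq p_2(Z_1, \ldots, Z_m, A | \wh{X})$.

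Finally I would propagate this discrepancy to the target. Because the inactive variables are mutually independent and identically distributed in the two models and are disconnected from the active subgraph, conditioning on the ancestral set $W'$ is vacuous, so $p_i(W'' \setminus X | \wh{X}, W') = p_i(W'' \setminus X | \wh{X})$, and this marginal factorizes into the active part over $\{Z_1, \ldots, Z_m, A\}$ times identical factors for the inactive members of $W'' \setminus X$. As $\{Z_1, \ldots, Z_m, A\} \subseteq W'' \setminus X$ and the active factor differs, the whole marginal differs between $p_1$ and $p_2$; hence $p(W'' \setminus X | \wh{X}, W')$ is not determined by the observational distribution and is not identifiable. I expect the main obstacle to be exactly this last bookkeeping step: ensuring that restricting attention to $W''$ and conditioning on $W'$ keep the fabricated difference confined to variables that are never marginalized away, which is what guarantees it cannot cancel.
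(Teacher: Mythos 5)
Your proposal is correct and follows essentially the same route as the paper: the paper simply applies the necessity part of Theorem \ref{the:children} to the induced subgraph $G_{W' \cup W''}$ (where $W''$ plays the role of $W$) and then lifts non-identifiability to $G$ by noting that the additional edges can be made ineffective through the parameters. You unroll that reduction by re-running the two-SEM counterexample of Figure \ref{fig:example5} explicitly and checking that the discrepancy survives marginalization over $W \setminus W''$ and conditioning on $W'$, which is the same argument with the bookkeeping made explicit.
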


\begin{proof}
Note that $p(W'' \setminus X | \wh{X}, W')$ is not identifiable from the aADMG $G_{W' \cup W''}$ by Theorem \ref{the:children}. Then, it is not identifiable from $G$ either because the additional edges can be made ineffective through the parameters of the corresponding system of structural equations.
\end{proof}
 
Clearly, whenever $p(W \setminus X | \wh{X}, W')$ is identifiable by Theorem \ref{the:children}, so is $p(Y \setminus X | \wh{X}, W')$ with $Y \subseteq W$ by marginalization. However, as the following theorem shows, there are cases where the latter is identifiable despite the fact that the former is not. Let $W''= An_G(Y) \setminus W'$.

\begin{theorem}\label{the:marginalchildren}
Given an aADMG $G$, if there is no undirected path between $X \in W''$ and its children in $(G_W)^{W''}$, then $p(W'' \setminus X | W', \wh{X})$ is identifiable and is given by
\[
p(W'' \setminus X | \wh{X}, W') = [ \sum_x Q(T^X | W') ] \prod_i Q(T_i | W')
\]
where $T_1, \ldots, T_k, T^X$ are the components in which $W''$ is partitioned in $(G_W)^{W''}$, and $T^X$ is the component that contains $X$.
\end{theorem}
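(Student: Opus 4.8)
The plan is to reduce the statement to the sufficiency part of Theorem \ref{the:children}, applied not to $G_W$ but to the marginal graph $(G_W)^{W''}$. The bridge between the two is the observation that $W''$ is an ancestral set inside the induced subgraph $G_W$, which lets me marginalize the noisy model over $W$ down to a noisy model over $W''$ whose structure is exactly $(G_W)^{W''}$.

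First I would verify that $W''$ is ancestral in $G_W$: if $A \ra \ldots \ra B$ is a directed path in $G_W$ with $B \in W''$, then $A \in An_G(Y)$ and $A \in W$, hence $A \in An_G(Y) \cap W = W''$. Equivalently, $W \setminus W''$ is closed under descendants in $G_W$, so the variables of $W$ admit a topological order in which all of $W''$ precedes all of $W \setminus W''$. Next, summing $W \setminus W''$ out one sink at a time and reusing the marginalization step from the proof of the $Q$-factor identification theorem (the one showing that $p(W \setminus V_{n+1} | W')$ factorizes according to $(G_W)^{V^{(n)}}$ with each $S_i$ a component of $(G_W)^{V^{(n)}}$), I would conclude that $p(W'' | W')$ factorizes according to $(G_W)^{W''}$ in the form of Equation \ref{eq:noisy}, its components being exactly $T_1,\ldots,T_k,T^X$, and that each component factor $Q(T_i | W')$ and $Q(T^X | W')$ is identifiable and equals $\prod_{V_i} p(V_i | V^{(i-1)}, W')$.

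In other words, conditioned on the ancestral set $W'$, the pair consisting of $(G_W)^{W''}$ and $p(W'' | W')$ is again an instance of the noisy-additive setting, so I may invoke Theorem \ref{the:children} on it with $X$ in the role of the intervened variable. Applying its sufficiency direction: the hypothesis that there is no undirected path between $X$ and its children in $(G_W)^{W''}$ guarantees that no child of $X$ lies in $T^X$, so the factor $\wh{Q}(T^X | W')$ obtained by deleting the term $p(X | Pa(X), U_X)$ from $Q(T^X | W')$ collapses to $\sum_x Q(T^X | W')$, since $X$ then occurs as an argument of none of the remaining factors and $\sum_x p(X | Pa(X), U_X) = 1$. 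Combined with $p(W'' \setminus X | \wh{X}, W') = \wh{Q}(T^X | W') \prod_i Q(T_i | W')$, this yields the claimed formula.

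I expect the main obstacle to be the middle step: rigorously showing that marginalizing the descendant-closed set $W \setminus W''$ out of the noisy model over $W$ produces a noisy model over $W''$ whose graph is precisely $(G_W)^{W''}$, i.e. that the $G^X$ construction correctly tracks how the error-correlation (undirected) structure behaves under marginalization, and that the resulting component factors are the identifiable $Q$-factors. Once this factorization is secured, the intervention formula is essentially a verbatim repetition of the sufficiency argument in Theorem \ref{the:children}, and, unlike Theorem \ref{the:children2}, no new unidentifiability analysis is required here.
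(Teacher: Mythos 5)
Your proposal is correct and follows essentially the same route as the paper: marginalize $W \setminus W''$ out of the conditional factorization (using that $W''$ is closed under parents within $G_W$, which the paper leaves implicit but you rightly verify) to obtain a noisy model over $W''$ whose graph is $(G_W)^{W''}$, and then rerun the sufficiency argument of Theorem \ref{the:children} on that marginal model. The paper performs the marginalization in one step by summing Equation \ref{eq:conditional} over $W \setminus W''$ rather than sink by sink, but this is only a cosmetic difference.
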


\begin{proof}
By summing over $W \setminus W''$ on both sides of Equation \ref{eq:conditional}, we have that
\[
p(W'' | W') = \sum_{u_{W}} [ \prod_{V_i \in W''} p(V_i | Pa_G(V_i), u_i) ] p(u_{W} | u_{W'})
\]
\[
= \sum_{u_{W''}} [ \prod_{V_i \in W''} p(V_i | Pa_G(V_i), u_i) ] p(u_{W''} | u_{W'}).
\]
Note that the previous expression is of the same form as Equation \ref{eq:noisy}, when $G$ is replaced with $(G_W)^{W''}$. Then, we can repeat the reasoning in the proof of Theorem \ref{the:children}.
\end{proof}

\begin{theorem}\label{the:marginalchildren2}
Given an aADMG $G$, if there is no undirected path between $X \in W''$ and its children in $(G_W)^{W''}$, then $p(W'' \setminus X, W' | \wh{X})$ and $p(W'' \setminus X | \wh{X})$ are identifiable and are given by
\[
p(W'' \setminus X, W' | \wh{X}) = p(W') [ \sum_x Q(T^X | W') ] \prod_i Q(T_i | W')
\]
and
\[
p(W'' \setminus X | \wh{X}) = \sum_{w'} p(w') [ \sum_x Q(T^X | w') ] \prod_i Q(T_i | w')
\]
where $T_1, \ldots, T_k, T^X$ are as in Theorem \ref{the:marginalchildren}.
\end{theorem}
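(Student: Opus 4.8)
The plan is to factor the target joint through the chain rule and identify the two resulting factors using results already in hand. Writing
\[
p(W'' \setminus X, W' | \wh{X}) = p(W'' \setminus X | W', \wh{X}) \, p(W' | \wh{X}),
\]
the first factor is handled immediately by Theorem~\ref{the:marginalchildren}: its hypothesis is exactly the present assumption that there is no undirected path between $X$ and its children in $(G_W)^{W''}$, so $p(W'' \setminus X | W', \wh{X}) = [ \sum_x Q(T^X | W') ] \prod_i Q(T_i | W')$, and each factor $Q(\cdot | W')$ is identifiable by the earlier identifiability result for $Q(S_j | W')$. Everything therefore reduces to identifying the single remaining factor $p(W' | \wh{X})$.

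The second step is to show $p(W' | \wh{X}) = p(W')$ by rule 3 of Theorem~\ref{the:calculus}. The crucial observation is that $W'$ is an ancestral set with $X \in W = V \setminus W'$, so $X \notin W' = An_G(W')$; this forces $W'$ to contain no descendant of $X$, since a descendant $D \in W'$ of $X$ would make $X$ an ancestor of a member of the ancestral set $W'$ and hence put $X \in W'$, a contradiction. Consequently every path between $X$ and $W'$ that is unblocked given the empty set must reach $X$ through a parent or a neighbor rather than through a child: a path leaving $X$ through a child starts with $X \ra V_1$, and to avoid a collider at $V_1$ it must continue $V_1 \ra V_2 \ra \cdots$, so it can only reach descendants of $X$, of which $W'$ has none. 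Deleting the edges into $X$, as rule 3 prescribes, therefore destroys all such paths and the rule applies. This is the same one-line argument used to prove the last theorem of Section~\ref{sec:calculus}, the only difference being that the absence of descendants of $X$ in $W'$ now comes for free from ancestrality rather than from a separate hypothesis. Since $p(W')$ is observational, it is trivially identifiable.

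Combining the two factors gives the first displayed identity, and summing it over the values $w'$ of $W'$ gives the second,
\[
p(W'' \setminus X | \wh{X}) = \sum_{w'} p(w') \, [ \sum_x Q(T^X | w') ] \prod_i Q(T_i | w'),
\]
which completes the argument. I expect the only genuinely substantive point to be the justification that $p(W' | \wh{X}) = p(W')$, and within that the short combinatorial observation that ancestrality of $W'$ prevents any unblocked path from leaving $X$ toward $W'$ except through a parent or a neighbor; the rest is a mechanical appeal to Theorem~\ref{the:marginalchildren}, the identifiability of the $Q$ factors, and marginalization. Because that observation is brief and essentially identical to one already used in Section~\ref{sec:calculus}, I anticipate a correspondingly short proof.
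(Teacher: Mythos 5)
Your proposal is correct and follows essentially the same route as the paper's proof: apply Theorem \ref{the:marginalchildren} to the conditional factor and observe that $p(W' | \wh{X}) = p(W')$ because $W'$ is an ancestral set. The paper states this last equality without elaboration, whereas you spell out the rule-3 justification; the substance is the same.
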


\begin{proof}
The theorem follows from Theorem \ref{the:marginalchildren} by noting that $p(W' | \wh{X}) = p(W')$, because $W'$ is an ancestral set.
\end{proof}

For instance, $p(B, D | \wh{A}, C)$ is not identifiable by Theorem \ref{the:marginalchildren} from the aADMG in Figure \ref{fig:example4} ($X=A$, $W''=\{ A, B, D \}$ and $W'=C$). However, $p(B | \wh{A}, C)$ is identifiable ($X=A$, $W''=\{ A, B \}$ and $W'=C$). Moreover, $p(B, C | \wh{A})$ and $p(B | \wh{A})$ are identifiable by Theorem \ref{the:marginalchildren2} ($X=A$, $W''=\{ A, B \}$ and $W'=C$). Note that the latter two effects are not identifiable by Theorem \ref{the:marginalchildren}. Note also that neither $p(B | \wh{A}, C)$, $p(B, C | \wh{A})$ nor $p(B | \wh{A})$ are identifiable from the oADMG in the figure \citep[p. 94]{Pearl2009}.

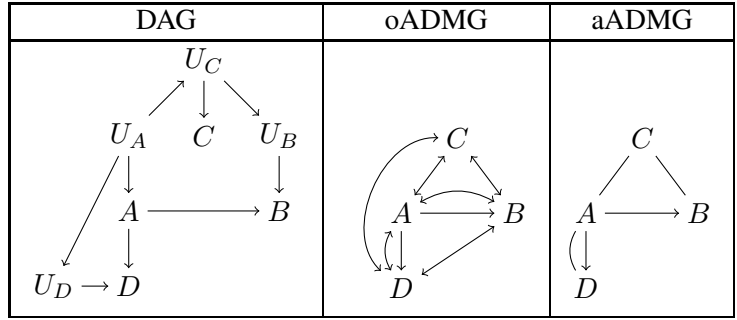
\begin{figure}
\begin{center}
\begin{tabular}{|c|c|c|}
\hline
DAG&oADMG&aADMG\\
\hline
\begin{tikzpicture}[inner sep=1mm]
\node at (0,0) (A) {$A$};
\node at (2,0) (B) {$B$};
\node at (1,1) (C) {$C$};
\node at (0,1) (UA) {$U_A$};
\node at (2,1) (UB) {$U_B$};
\node at (1,2) (UC) {$U_C$};
\node at (0,-1) (D) {$D$};
\node at (-1,-1) (UD) {$U_D$};
\path[->] (A) edge (B);
\path[->] (UA) edge (A);
\path[->] (UB) edge (B);
\path[->] (UC) edge (C);
\path[->] (UA) edge (UC);
\path[->] (UC) edge (UB);
\path[->] (UD) edge (D);
\path[->] (A) edge (D);
\path[->] (UA) edge (UD);
\end{tikzpicture}
&
\begin{tikzpicture}[inner sep=1mm]
\node at (0,0) (A) {$A$};
\node at (1.5,0) (B) {$B$};
\node at (0.75,1) (C) {$C$};
\node at (0,-1) (D) {$D$};
\path[->] (A) edge (B);
\path[<->] (A) edge [bend left] (B);
\path[<->] (A) edge (C);
\path[<->] (B) edge (C);
\path[->] (A) edge (D);
\path[<->] (A) edge [bend right] (D);
\path[<->] (B) edge (D);
\path[<->] (C) edge [bend right=70] (D);
\end{tikzpicture}
&
\begin{tikzpicture}[inner sep=1mm]
\node at (0,0) (A) {$A$};
\node at (1.5,0) (B) {$B$};
\node at (0.75,1) (C) {$C$};
\node at (0,-1) (D) {$D$};
\path[->] (A) edge (B);
\path[-] (A) edge (C);
\path[-] (B) edge (C);
\path[->] (A) edge (D);
\path[-] (A) edge [bend right] (D);
\end{tikzpicture}\\
\hline
\end{tabular}
\end{center}\caption{Example where $p(B | \wh{A}, C)$ and $p(B | \wh{A})$ are identifiable from the aADMG but not from the oADMG.}\label{fig:example4}
\end{figure}

Note that $X$ is a singleton in Theorem \ref{the:marginalchildren}. The algorithm in Table \ref{tab:multiple} shows how Theorem \ref{the:marginalchildren} can be used for causal effect identification when $X$ is a subset of $V$. For each ordering $\sigma$ of the variables in $X$, the algorithm tries to identify the causal effect of $X_{\sigma(1)}$ from the available information, i.e. $p(W'' | W')$ and $(G_W)^{W''}$. If the identification succeeds, then the algorithm tries to identify the causal effect of $X_{\sigma(2)}$ from the available information, i.e. the post-interventional distribution identified in the previous iteration and the corresponding aADMG $(G')_{\wh{X_{\sigma(1)}}}$ where $G'=(G_W)^{W''}$. The process continues until the causal effects of all variables in $X$ are identified in which case the last expression constitutes the answer to the original query, or some effect is not identifiable in which case the algorithm tries a new ordering $\sigma$. If all orderings are tried without success, then the algorithm declares the causal effect unidentifiable. Whether the effect is truly unidentifiable is still an open problem.

We illustrate the use of the algorithm in Table \ref{tab:multiple} with an example. In particular, consider again the example in Figure \ref{fig:example1}. Then, $p(B | \wh{A}, \wh{C})$ is identifiable from the aADMG $G$ with the ordering $\sigma=(C, A)$: First, $p(A, B | \wh{C})$ is identifiable from $G$ by Theorem \ref{the:marginalchildren} and is given by a function of $p(A, B, C)$ and, then, $p(B | \wh{C}, \wh{A})$ is identifiable from $G_{\wh{C}} = \{ A \ra B, C \}$ and is given by a function of $p(A, B | \wh{C})$. Note that the effect is not identifiable with the ordering $\sigma=(A, C)$.

Note that Theorem \ref{the:marginalchildren2} can also be generalized to the case where $X$ is a subset of $V$: If $p(W'' \setminus X | \wh{X}, W')$ is identifiable by the algorithm in Table \ref{tab:multiple}, then $p(W'' \setminus X, W' | \wh{X})$ and $p(W'' \setminus X | \wh{X})$ are also identifiable and are given by
\[
p(W'' \setminus X, W' | \wh{X}) = p(W') p(W'' \setminus X | \wh{X}, W')
\]
and
\[
p(W'' \setminus X | \wh{X}) = \sum_{w'} p(w') p(W'' \setminus X | \wh{X}, w')
\]
because $p(W' | \wh{X}) = p(W')$ since $W'$ is an ancestral set. Finally, we have the following result.

\begin{table}
\caption{Causal effect identification for multiple interventions.}\label{tab:multiple}
\begin{center}
\begin{tabular}{|rl|}
\hline
1 & $V^1=W''$\\
2 & $p^1(V^1)=p(W'' | W')$\\
3 & $G^1=(G_W)^{W''}$\\
4 & for each ordering $\sigma$ of the variables in $X$ do\\
5 & \hspace{0.3cm} for $i=1, \ldots, |X|$ do\\
6 & \hspace{0.6cm} if $p^i(V^i \setminus X_{\sigma(i)} | \wh{X_{\sigma(i)}})$ is identifiable from $G^i$ by Theorem \ref{the:marginalchildren} then\\
7 & \hspace{0.9cm} $V^{i+1}=V^i \setminus X_{\sigma(i)}$\\
8 & \hspace{0.9cm} $p^{i+1}(V^{i+1})=p^i(V^i \setminus X_{\sigma(i)} | \wh{X_{\sigma(i)}})$\\
9 & \hspace{0.9cm} $G^{i+1}=(G^i)_{\wh{X_{\sigma(i)}}}$\\
10 & \hspace{0.6cm} else go to line 4\\
11 & \hspace{0.3cm} return IDENTIFIABLE\\
12 & return UNIDENTIFIABLE\\
\hline
\end{tabular}
\end{center}
\end{table}

\begin{theorem}
Given an aADMG $G$, $p(Y | \wh{X}, W')$ is identifiable if every undirected path between $X$ and those children of $X$ that are ancestor of $Y$ includes some node that is in $W'$ or that is neither a descendant of $X$ nor of $Y$.
\end{theorem}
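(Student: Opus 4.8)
The plan is to deduce this from Theorem~\ref{the:marginalchildren} (together with the generalization of Theorem~\ref{the:marginalchildren2} mentioned just above), the trick being to choose the ancestral conditioning set cleverly: instead of working directly with $W'$, I would first condition on the largest ancestral set that removes every node irrelevant to the effect. Concretely, let $R = De_G(X) \cup De_G(Y)$ and let $N = V \setminus R$ be the set of nodes that are neither descendants of $X$ nor of $Y$. First I would record three facts. (i) $N$ is an ancestral set, since any ancestor of a non-descendant of $X$ (resp. $Y$) is again a non-descendant, so $An_G(N) \subseteq N$. (ii) $W' \subseteq N$: as $W'$ is ancestral and disjoint from $X$ and $Y$, it can contain no descendant of $X$ nor of $Y$, for otherwise $X$ or some $y \in Y$ would lie in $An_G(W') = W'$. (iii) $N$ is disjoint from $X$, from $Y$, and from every child of $X$ that is an ancestor of $Y$, since all of these lie in $R$.

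Next I would apply Theorem~\ref{the:marginalchildren} with the ancestral set $N$ in place of $W'$. Writing $\tilde W = V \setminus N = R$ and $\tilde W'' = An_G(Y) \setminus N$, we have $Y \subseteq \tilde W'' \setminus X$, so it suffices to verify the hypothesis of Theorem~\ref{the:marginalchildren}, namely that there is no undirected path between $X$ and its children in $(G_{\tilde W})^{\tilde W''}$. The key graph-theoretic observation is that two nodes of $\tilde W''$ are joined by an undirected path in $(G_{\tilde W})^{\tilde W''}$ if and only if they are joined by an undirected path in $G$ that avoids $N$; this follows from the definition of the marginal graph $(\cdot)^{\tilde W''}$, which collapses each chain of undirected edges running through nodes outside $\tilde W''$ into a single undirected edge. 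Moreover the children of $X$ in $(G_{\tilde W})^{\tilde W''}$ are precisely the children of $X$ in $G$ that are ancestors of $Y$. The hypothesis of the theorem says that every undirected path in $G$ between $X$ and such a child contains a node in $W'$ or a node that is neither a descendant of $X$ nor of $Y$; since $W' \subseteq N$ and the latter nodes lie in $N$ by definition, this says exactly that every such path meets $N$, i.e. no such path avoids $N$. Hence the condition of Theorem~\ref{the:marginalchildren} holds, and it yields that $p(\tilde W'' \setminus X \mid \wh{X}, N)$, and by marginalization $p(Y \mid \wh{X}, N)$, is identifiable.

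It remains to convert the conditioning set from $N$ back to $W'$. Here I would invoke the generalization of Theorem~\ref{the:marginalchildren2}: since $N$ is ancestral, $p(Y, N \mid \wh{X}) = p(N)\, p(Y \mid \wh{X}, N)$ is identifiable, whence $p(Y, W' \mid \wh{X})$ is obtained by summing out $N \setminus W'$, and finally
\[
p(Y \mid \wh{X}, W') = \frac{p(Y, W' \mid \wh{X})}{p(W')},
\]
using that $W'$ is ancestral and contains no descendant of $X$, so that $p(W' \mid \wh{X}) = p(W')$ by rule~3. The degenerate case in which $X$ is not an ancestor of $Y$ (so that no child of $X$ is an ancestor of $Y$ and the hypothesis is vacuous) is disposed of directly, since then the intervention does not reach $Y$ and $p(Y \mid \wh{X}, W') = p(Y \mid W')$.

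I expect the main obstacle to be the graph-theoretic heart of the second paragraph: one must establish both the equivalence between undirected connectivity in the marginal graph $(G_{\tilde W})^{\tilde W''}$ and $N$-avoiding undirected connectivity in $G$, and the fact that enlarging the conditioning set from $W'$ to the full ancestral set $N$ exactly neutralizes the ``neither a descendant of $X$ nor of $Y$'' escape clause in the hypothesis. By contrast, the subsequent bookkeeping that restores the conditioning set $W'$ should be routine, given the ancestrality of both $W'$ and $N$.
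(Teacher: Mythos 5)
Your proposal is correct and follows essentially the same route as the paper: the paper's proof takes $T$ to be the nodes that are neither descendants of $X$ nor of $Y$, enlarges the ancestral conditioning set to $W' \cup T$ (which, since $W' \subseteq T$, is exactly your $N$), applies Theorem~\ref{the:marginalchildren2} with $W'' \setminus T$ and $W \setminus T$ in place of $W''$ and $W$, and recovers $p(Y \mid \wh{X}, W')$ by conditioning and marginalization. Your write-up merely makes explicit the verification of the hypothesis of Theorem~\ref{the:marginalchildren} in the marginal graph and the final bookkeeping, which the paper leaves implicit.
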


\begin{proof}
Let $T$ denote the nodes that are neither descendants of $X$ nor of $Y$. Note that $W' \cup T$ is an ancestral set. Then, $p((W'' \setminus T ) \setminus X, W', T | \wh{X})$ is identifiable by Theorem \ref{the:marginalchildren2} with $W' \cup T$ instead of $W'$, $W'' \setminus T$ instead of $W''$, and $W \setminus T$ instead of $W$. Then, the theorem follows by conditioning and marginalization.
\end{proof}


\section{Gated Models for Causal Effect Identification}\label{sec:csi}

The gambling example in Section~\ref{sec:introduction} showed how the context may decide which of the oADMG or the aADMG is more appropriate in terms of being able to identify the causal effect of interest. We used a gated model, i.e. a graphical model which combines multiple graphical models using gates, to explicitly state the criteria used to decide which model is more appropriate. In this section we shall generalise this idea, using gated models to accommodate so called context specific independences (CSIs), in such a way that we can exploit these independences to identify more causal effects than would be possible using a single model. The gated model can also be used to model certain causal phenomena that may occur due to CSIs, such as unstable and non-deterministic effects of interventions. It should be noted that gated models are just a formalism to represent contexts explicitly, so that the {\it do}-calculi introduced in the previous sections and in other articles can be fully deployed. So, gated models can build on any existing family of graphical models, e.g. oADMGs, aADMGs, ADMGs, etc.

\begin{figure}[t]
	\centering
	\begin{minipage}[t]{0.49\textwidth}
		\begin{subfigure}[t]{\textwidth}
		\centering
		\resizebox{!}{2.5cm} {

\begin{tikzpicture}

	\begin{scope}
		\node[var] (X) {X};
		\node[var] (Z)[right=of X] {Z};
		\node[var] (Y)[below= of Z] {Y};
		\path (X) edge [connect] (Y);
		\path (X) edge [connect] (Z);
		\path (Z) edge [connect] (Y);
		
		\node[box, fit= (X) (Y) (Z), label=above:$$] (O1) {};
	\end{scope}
	
\end{tikzpicture}
		}
		\caption{}
		\label{fig:introduction-example1-bn}
		\end{subfigure}
	\end{minipage}
	\hfill
	\begin{minipage}[t]{0.49\textwidth}
		\begin{subfigure}[t]{\textwidth}
		\centering
		\resizebox{!}{2.5cm} {

\begin{tikzpicture}

	\begin{scope}
		\node[var] (X) {X};
		\node[var] (Z)[right=of X] {Z};
		\node[var] (Y)[below= of Z] {Y};
		\path (X) edge [connect] (Y);
		\path (X) edge [connect] (Z);
		\path (Z) edge [connect] (Y);
		\node[box, fit= (X) (Y) (Z), label=above:$$] (O1) {};
	\end{scope}
	
	\begin{scope}[xshift=4.0cm]
		\node[var] (X) {X};
		\node[var] (Z)[right=of X] {Z};
		\node[var] (Y)[below= of Z] {Y};
		\path (X) edge [connect] (Y);
		\path (Z) edge [connect] (Y);
		\node[box, fit= (X) (Y) (Z), label=left:$$] (O2) {};
	\end{scope}
			
	\node[gate,right = of O1,xshift=-0.5cm,yshift=0.7cm] (G1) {$X  > 0$};
	\path (O1) edge [connect] (G1);
	\path (G1) edge [connect] (O2);
	
	\node[gate,left=of O2,xshift=0.5cm,yshift=-0.7cm] (G2) {$X \leq 0$};
	\path (O2) edge [connect] (G2);
	\path (G2) edge [connect] (O1);

\end{tikzpicture}
		}
		\caption{}
		\label{fig:introduction-example1-gated}
		\end{subfigure}
	\end{minipage}
	
	\vspace{0.5cm}
	
	\centering
	\begin{minipage}[t]{0.47\textwidth}
		\begin{subfigure}[t]{\textwidth}
		\centering
		\resizebox{!}{2.5cm} {

\begin{tikzpicture}

	\begin{scope}
		\node[var] (X) {X};
		\node[var] (Z)[right=of X] {Z};
		\node[var] (U1)[right=of Z] {$U_1$};
		\node[var] (U2)[right=of U1] {$U_2$};		
		\node[var] (Y)[below= of U1] {Y};
		\path (X) edge [connect] (Y);
		\path (X) edge [connect] (Z);
		\path (Z) edge [connect] (U1);
		\path (U1) edge [connect] (Y);
		\path (U2) edge [connect] (U1);
		\node[box, fit= (X) (Y) (Z) (U1) (U2), label=above:$$] (O1) {};
	\end{scope}
	
\end{tikzpicture}
		}
		\caption{}
		\label{fig:introduction-example2-bn}
		\end{subfigure}
	\end{minipage}
	\hfill
	\begin{minipage}[t]{0.52\textwidth}
		\begin{subfigure}[t]{\textwidth}
		\centering
		\resizebox{!}{3.3cm} {

\begin{tikzpicture}

	\begin{scope}
		\node[var] (X) {X};
		\node[var] (Z)[right=of X] {Z};
		\node[var] (Y)[below= of Z] {Y};
		\path (X) edge [connect] (Y);
		\path (X) edge [connect] (Z);
		\path (Z) edge [connect] (Y);
		\node[box, fit= (X) (Y) (Z), label=left:$R_1$] (O1) {};
	\end{scope}
	
	\begin{scope}[xshift=4.0cm]
		\node[var] (X) {X};
		\node[var] (Z)[right=of X] {Z};
		\node[var] (Y)[below= of Z] {Y};
		\path (X) edge [connect] (Y);
		\path (X) edge [connect] (Z);
		\node[box, fit= (X) (Y) (Z), label=right:$R_2$] (O2) {};
	\end{scope}
			
	\node[gate,right = of O1,xshift=-1.5cm,yshift=1.3cm] (G1) {\scriptsize $L(D | R_2) / L(D | R_1) > \theta$};
	\path (O1) edge [connect] (G1);
	\path (G1) edge [connect] (O2);
	
	\node[gate,left=of O2,xshift=1.5cm,yshift=-1.3cm] (G2) {\scriptsize $L(D | R_1) / L(D | R_2) > \theta$};
	\path (O2) edge [connect] (G2);
	\path (G2) edge [connect] (O1);

\end{tikzpicture}
		}
		\caption{}
		\label{fig:introduction-example2-gated}
		\end{subfigure}
	\end{minipage}
	\caption{In (a), a single DAG that does not convey that $X \rightarrow Z$ can be removed if $X > 0$, a gated model representing this extra knowledge is given in (b). When the CSI is dependent on unobserved variables, $U_1$ and $U_2$ in (c), we cannot discern context based on variables taking specific values. The gated model in (d) uses threshold gates to decide which DAG is appropriate.}
\end{figure}
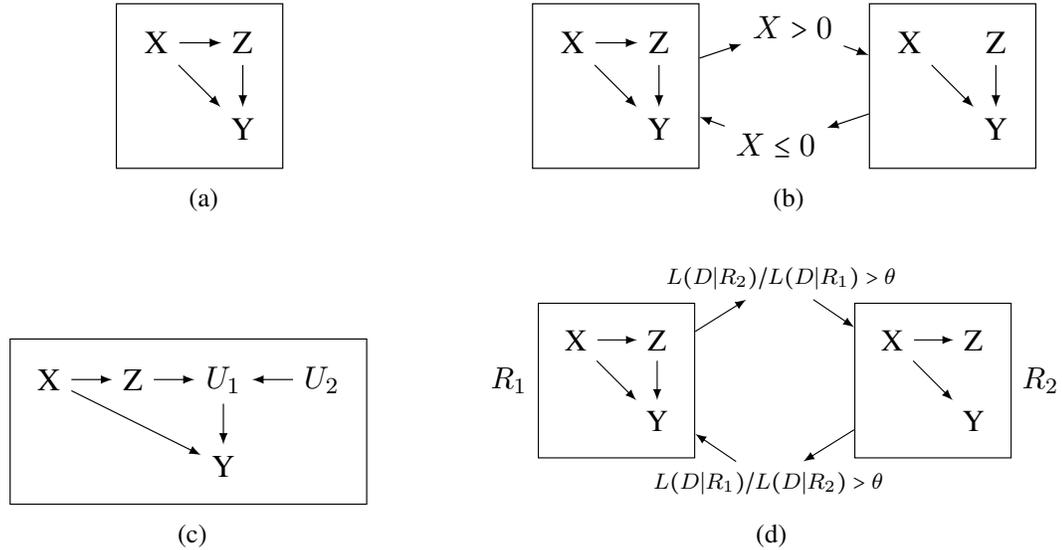

\subsection{Context Specific Independence}

In a graphical model, an edge between two nodes should be present if there is a potential direct association between the two variables that the nodes represent, even if this association is only present for a certain range of the values the variables may take. Studying the conditional probability distributions that are associated with the model may however reveal that some edges can be removed when some variables take on certain values. Exploiting such CSIs within the domain of DAGs has been done since at least the mid-nineties \citep{boutilier-1996}, where representing discrete distributions which contained such CSIs in form of trees allowed for more efficient representation. CSIs were studied a few years earlier in the domain of influence diagrams \citep{smith-1993}, and later within the domain of event trees \citep{smith-2008}. \citet{poole-2003} give several examples of cases where CSIs may materialize, and they extend the variable elimination algorithm with the ability to utilize CSIs within the DAG, resulting in more efficient inference. Enforcing CSIs has also been used to facilitate more efficient inference and parameter estimation, for instance in the form of noisy OR-models \citep{pearl-1988,zagorecki-2004}. To further illustrate CSIs, consider the DAG in \figurename~\ref{fig:introduction-example1-bn}, and the additional knowledge that the edge $X \rightarrow Z$ is unnecessary when $X > 0$. This CSI is obviously not revealed by the DAG in the figure. An alternative way of representing this DAG is to use a gated model, an example of which is depicted in \figurename~\ref{fig:introduction-example1-gated}. In \figurename~\ref{fig:introduction-example1-gated} we have two DAGs connected with gates, when $X>0$ we shall use the DAG to the right, where there is no edge between $X$ and $Z$, and when $X \leq 0$ then we shall use the DAG to the left (where the edge is present). The gated model representation allows us to not only reduce the number of edges in some circumstances  (which in turn may imply less computation), but also graphically state the CSIs that exist among the variables. 

In some cases we may have CSIs that are due to unobserved variables. Consider the DAG in \figurename~\ref{fig:introduction-example2-bn}, where we assume that $U_1$ and $U_2$ are unobserved. If the edge between $Z$ and $U_1$ disappears when $U_2 > 0$, then in a dataset over $X$, $Y$ and $Z$ there will be data points for which the edge $Z \rightarrow Y$ has to be present, and data points for which it is not required, however this may not be evident from the conditional distributions estimated from the dataset. We can therefore not create gates which specify for which values of any of the observed variables we can remove $Z \rightarrow Y$. In such cases we use a gate which judges which DAG is more appropriate given some data $D$. In our previous work, e.g. \citep{bendtsen-regime-baseball, bendtsen-online}, appropriate has been equated with likelihood, which we shall denote with $L$. \figurename~\ref{fig:introduction-example2-gated} shows an example, where we have labeled the DAGs $R_1$ and $R_2$. The model makes it clear that it is necessary to continually assess whether or not the other DAG is more appropriate (above some threshold $\theta$).

\begin{figure}[t]
	\centering
	\begin{minipage}[t]{0.39\textwidth}
		\begin{subfigure}[t]{\textwidth}
		\centering
		\resizebox{!}{2.1cm} {

\begin{tikzpicture}

	\node[var] (T) {T};
	\node[var, right=of T] (C) {C};
	\node[var, right=of C] (A) {A};
	\node[var, below=of C] (W) {W};
	
	\path (T.north) edge [-, bend left= 50] (C.north);
	\path (T) edge [connect] (C);
	\path (C) edge [connect] (A);
	\path (W) edge [connect] (C);
	
	\path [draw, -latex] (T) to[bend right= 45] (W.south) to[bend right = 45] (A);
	
	\node[box, inner sep=15pt, fit= (T) (C) (W) (A), label=left:$$] (O1) {};
	
\end{tikzpicture}
		}
		\caption{}
		\label{fig:admg-csi-w}
		\end{subfigure}
	\end{minipage}
	\hfill
	\begin{minipage}[t]{0.60\textwidth}
		\begin{subfigure}[t]{\textwidth}
		\centering
		\resizebox{!}{2.1cm} {

\begin{tikzpicture}

	\begin{scope}
		\node[var] (T) {T};
		\node[var, right=of T] (C) {C};
		\node[var, right=of C] (A) {A};
		\node[var, below=of C] (W) {W};
	
		\path (T.north) edge [-, bend left= 50] (C.north);
		\path (T) edge [connect] (C);
		\path (C) edge [connect] (A);
		\path (W) edge [connect] (C);
	
		\path [draw, -latex] (T) to[bend right= 45] (W.south) to[bend right = 45] (A);

		\node[box, inner sep=15pt, fit= (T) (C) (W) (A), label=left:$$] (O1) {};
	\end{scope}
	
	\begin{scope}[xshift=6.5cm]
		\node[var] (T) {T};
		\node[var, right=of T] (C) {C};
		\node[var, right=of C] (A) {A};
		\node[var, below=of C] (W) {W};
	
		\path (T.north) edge [-, bend left= 50] (C.north);

		\path (C) edge [connect] (A);
		\path (W) edge [connect] (C);
	
		\path [draw, -latex] (T) to[bend right= 45] (W.south) to[bend right = 45] (A);

		\node[box,  inner sep=15pt, fit= (T) (C) (W) (A), label=left:$$] (O2) {};	
	\end{scope}
			
	\node[gate,right = of O1,xshift=-0.5cm,yshift=0.7cm] (G1) {$\wh{W}  > 100$};
	\path (O1) edge [connect] (G1);
	\path (G1) edge [connect] (O2);
	
	\node[gate,left=of O2,xshift=0.5cm,yshift=-0.7cm] (G2) {$\wh{W}  \leq  100$};
	\path (O2) edge [connect] (G2);
	\path (G2) edge [connect] (O1);	
	
\end{tikzpicture}
		}
		\caption{}
		\label{fig:admg-csi-w-gbn}
		\end{subfigure}
	\end{minipage}
	\caption{In (a), $p(A, C | \wh{T})$ is not identifiable. In (b), in the context where $\wh{W} > 100$, $p(A, C | \wh{T})$ is identifiable.}
\end{figure}
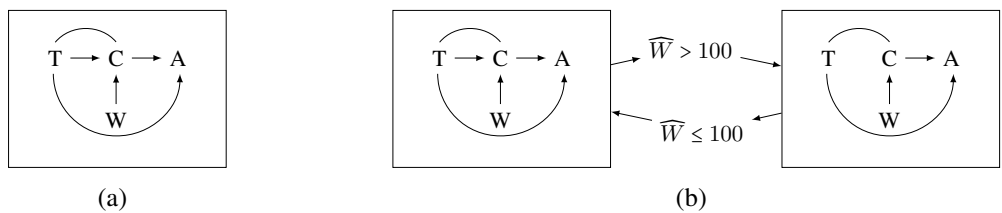

While being able to remove an edge in a graphical model may result in faster computation and easier estimation of parameters, removing an edge in a causal model may lead to causal effects being identifiable that previously were not. Consider the aADMG in \figurename~\ref{fig:admg-csi-w}. Due to Theorem \ref{the:children2}, we cannot identify $p(A, C | \wh{T})$. However, if we could identify a CSI, for instance when $\wh{W} > 100$ the edge $T \rightarrow C$ can be removed, then $p(A, C | \wh{T})$ would be identifiable in this specific context. The aADMG in \figurename~\ref{fig:admg-csi-w} however lacks the ability to tell us that $p(A, C | \wh{T})$ is identifiable in the context of $\wh{W} > 100$. A gated model such as the one in \figurename~\ref{fig:admg-csi-w-gbn} solves this problem by making the contexts and the context specific causal models explicit.

\subsection{Modeling Causal Scenarios Using Gated Models}
\label{sec:examples}

Gated models are interesting for causal reasoning not only because they make explicit the contexts where causal effects are identifiable, but also because they allow modeling some causal phenomena that occur due to CSIs. Without having the possibility to explicitly represent CSIs, these causal phenomena would pass unmodeled. We describe three such causal phenomena below. For simplicity, we use DAGs in some of the examples below. However, as mentioned above, gated models can build on any other family of graphical models.

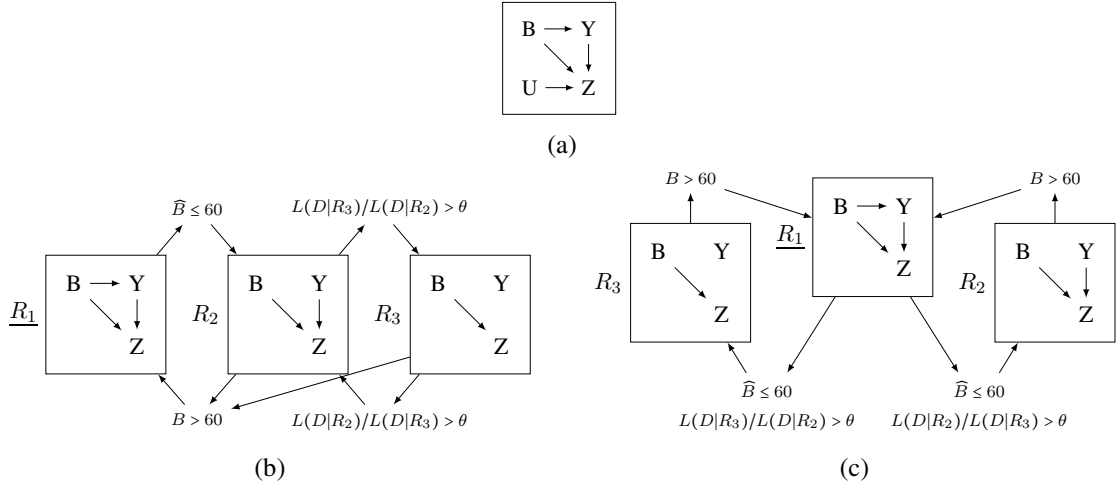
\begin{figure}[t]
	\centering
	\begin{minipage}[t]{\textwidth}
		\begin{subfigure}[t]{\textwidth}
		\centering
		\resizebox{!}{1.5cm} {

\begin{tikzpicture}
	\begin{scope}
		\node[var] (B) {B};
		\node[var] (Y)[right=of B] {Y};
		\node[var] (Z)[below= of Y] {Z};
		\node[var] (U)[left= of Z] {U};
		\path (B) edge [connect] (Y);
		\path (B) edge [connect] (Z);
		\path (Y) edge [connect] (Z);
		\path (U) edge [connect] (Z);
		\node[box, fit= (B) (Y) (Z) (U)] (O1) {};
	\end{scope}
\end{tikzpicture}
		}
		\caption{}
		\label{fig:examples-unstable-truth}
		\end{subfigure}
	\end{minipage}
	\begin{minipage}[t]{0.49\textwidth}
		\begin{subfigure}[t]{\textwidth}
		\centering
		\resizebox{\linewidth}{!} {

\resizebox{1.0\linewidth}{!} {
\centering
		
\begin{tikzpicture}

	\begin{scope}
		\node[var] (B) {B};
		\node[var] (Y)[right=of B] {Y};
		\node[var] (Z)[below= of Y] {Z};
		\path (B) edge [connect] (Y);
		\path (B) edge [connect] (Z);
		\path (Y) edge [connect] (Z);
		\node[box, fit= (B) (Y) (Z), label=left:$\underline{R_1}$] (O1) {};
		
		\node[gate,above=of O1,xshift=1.5cm,yshift=-0.5cm] (G1) {\scriptsize $\wh{B} \leq 60$};
		\path (O1) edge [connect] (G1);
	\end{scope}
	
	\begin{scope}[xshift=3.0cm]
		\node[var] (B) {B};
		\node[var] (Y)[right=of B] {Y};
		\node[var] (Z)[below= of Y] {Z};
		\path (B) edge [connect] (Z);
		\path (Y) edge [connect] (Z);
		\node[box, fit= (B) (Y) (Z), label=left:$R_2$] (I1) {};
		
		\node[gate,above=of I1,xshift=1.5cm,yshift=-0.5cm] (G2) {\scriptsize $L(D|R_3) / L(D|R_2) > \theta$};
		\node[gate,below=of I1,xshift=-1.5cm,yshift=0.5cm] (G4) {\scriptsize $B > 60$};
		\path (I1) edge [connect] (G2);
		\path (I1) edge [connect] (G4);
	\end{scope}
	
	\begin{scope}[xshift=6.0cm]
		\node[var] (B) {B};
		\node[var] (Y)[right=of B] {Y};
		\node[var] (Z)[below= of Y] {Z};
		\path (B) edge [connect] (Z);
		\node[box, fit= (B) (Y) (Z), label=left:$R_3$] (I2) {};
		
		\node[gate,below=of I2,xshift=-1.5cm,yshift=0.5cm] (G3) {\scriptsize $L(D|R_2) / L(D|R_3) > \theta$};
		\path (I2) edge [connect] (G3);
		\path (I2.west) edge [yshift=-0.7cm,connect] (G4);
	\end{scope}
		
	\path (G1) edge [connect] (I1);
	\path (G2) edge [connect] (I2);
	\path (G3) edge [connect] (I1);
	\path (G4) edge [connect] (O1);

\end{tikzpicture}

}
		}
		\caption{}
		\label{fig:examples-unstable-gated}
		\end{subfigure}
	\end{minipage}
	\hfill
	\begin{minipage}[t]{0.49\textwidth}
		\begin{subfigure}[t]{\textwidth}
		\centering
		\resizebox{\linewidth}{!} {

\resizebox{1.0\linewidth}{!} {
\centering
		
\begin{tikzpicture}

	\begin{scope}
		\node[var] (B) {B};
		\node[var] (Y)[right=of B] {Y};
		\node[var] (Z)[below= of Y] {Z};
		\path (B) edge [connect] (Y);
		\path (B) edge [connect] (Z);
		\path (Y) edge [connect] (Z);
		\node[box, fit= (B) (Y) (Z), label=left:$\underline{R_1}$] (O1) {};
		
		\node[gate,below=of O1,xshift=1.75cm,yshift=-0.25cm] (G1) {\scriptsize $\wh{B} \leq 60$ \\ \scriptsize $L(D|R_2) / L(D|R_3) > \theta$};
		\node[gate,below=of O1,xshift=-1.75cm,yshift=-0.25cm] (G2) {\scriptsize $\wh{B} \leq 60$ \\ \scriptsize $L(D|R_3) / L(D|R_2) > \theta$};
		\path (O1) edge [connect] (G1);
		\path (O1) edge [connect] (G2);
	\end{scope}
	
	\begin{scope}[xshift=3.0cm,yshift=-0.75cm]
		\node[var] (B) {B};
		\node[var] (Y)[right=of B] {Y};
		\node[var] (Z)[below= of Y] {Z};
		\path (B) edge [connect] (Z);
		\path (Y) edge [connect] (Z);
		\node[box, fit= (B) (Y) (Z), label=left:$R_2$] (I1) {};
		
		\node[gate,above=of I1,yshift=-0.5cm] (G3) {\scriptsize $B > 60$};
		\path (I1) edge [connect] (G3);
	\end{scope}
	
	\begin{scope}[xshift=-3.0cm,yshift=-0.75cm]
		\node[var] (B) {B};
		\node[var] (Y)[right=of B] {Y};
		\node[var] (Z)[below= of Y] {Z};
		\path (B) edge [connect] (Z);
		\node[box, fit= (B) (Y) (Z), label=left:$R_3$] (I2) {};
		
		\node[gate,above=of I2,yshift=-0.5cm] (G4) {\scriptsize $B > 60$};
		\path (I2) edge [connect] (G4);
	\end{scope}
		
	\path (G1) edge [connect] (I1);
	\path (G2) edge [connect] (I2);
	\path (G3) edge [connect] (O1);
	\path (G4) edge [connect] (O1);

\end{tikzpicture}

}
		}
		\caption{}
		\label{fig:examples-nondeterministic-gated}
		\end{subfigure}
	\end{minipage}
	\caption{In (a), a single causal model cannot capture the extra knowledge regarding CSIs. In (b), intervening to lower the blood pressure implies moving to a physiological stable state, but may lead to transitions back and forth with a crisis state. In (c), the outcome of the intervention is unknown, it may either lead to the stable or the crisis state.}
\end{figure}

\subsubsection{Unstable Effect and Non-Deterministic Outcome of Interventions}

Assume that our domain of interest consists of the following variables: Blood pressure ($B$), dosage of medication ($Y$), dizziness ($Z$), and anxiety ($U$). Moreover, $B$, $Y$ and $Z$ are observed but $U$ is unobserved. Assume that performing an intervention to lower the blood pressure puts the human body in a context where it goes back and forth between a physiological stable state and an unstable crisis state. After performing the intervention we know that $\wh{B} \leq 60$ and the body enters the physiological stable state, but may then go back and forth between the stable and crisis state. We want to model these states or contexts separately due to the existence of the CSIs below.

In \figurename~\ref{fig:examples-unstable-truth}, a single causal model for this scenario is depicted. This model tells us very little about the scenario just narrated. Knowing that the edge $B \rightarrow Y$ is lost in the context of $\wh{B} \leq 60$ (i.e. the dosage of medication is not a function of the blood pressure when the pressure is low), and $Y \rightarrow Z$ is lost when $\wh{B} \leq 60$ and $U > 0$ (i.e. dizziness is not a function of the dosage of medication when the blood pressure is low and the patient experiences an anxiety attack) does not help in making the model more informative. However, using a gated model we can incorporate this extra knowledge. The resulting gated model is depicted in \figurename~\ref{fig:examples-unstable-gated}. As we can see, once $\wh{B} \leq 60$, we change from $R_1$ to $R_2$, the physiological stable state under low blood pressure. Since $U$ is unobserved, we cannot use a context gate to describe when we should switch between $R_2$ and $R_3$ (the crisis state), thus we need to use a threshold gate that can identify which of the two contexts is most appropriate. The gated model also makes it clear that increasing the blood pressure while either in the crisis state or the stable state will take the body back to the initial context. 

To demonstrate non-deterministic outcomes, we shall change the previous scenario slightly. This time, assume that we do not know if we will enter the stable state or the crisis state after the intervention to lower the blood pressure. Furthermore, the stable and crisis states cannot transition into each other, i.e. once the blood pressure turns low, the body enters a stable or crisis state and does not change between them. The single causal model stays the same (\figurename~\ref{fig:examples-unstable-truth}), and again tells us very little about the scenario. Using a gated model we can model this causal scenario, but this time using a combination of threshold and context gates, as depicted in \figurename~\ref{fig:examples-nondeterministic-gated}. The gated model tells us that when the intervention lowers the blood pressure, we must assess which of the states is most appropriate, and it is the data $D$ that determines the appropriate causal model (recall that this is due to the CSIs that are created due to the unobserved variable $U$).

\subsubsection{Mechanism Dependent Outcome of Interventions}
\label{sec:examples-mechanism-dependent}

Consider the monthly physical activity ($W$) measured in hours, and the monthly alcohol consumption ($C$) of a person in a population. Assume that there exists a leaflet which contains information about the negative consequences of overconsumption of alcohol, and that the person can read the leaflet or not ($T$). In general, if the leaflet is read it reduces the amount of alcohol consumed. Assume further that we can intervene on the person to increase her physical activity through two different mechanisms: Either sending her to a bootcamp or motivating her to increase the physical activity on her own. In this scenario, a CSI is introduced such that $T \rightarrow C$ is lost if the person achieves a high physical activity due to the intervention mechanism of sending her to a bootcamp (no alcohol available at the bootcamp, thus reading the leaflet has no effect on alcohol consumption). However, the same is not true for the alternative intervention mechanism (motivating to increase the physical activity does not reduce access to alcohol, thus reading the leaflet does have some effect). 

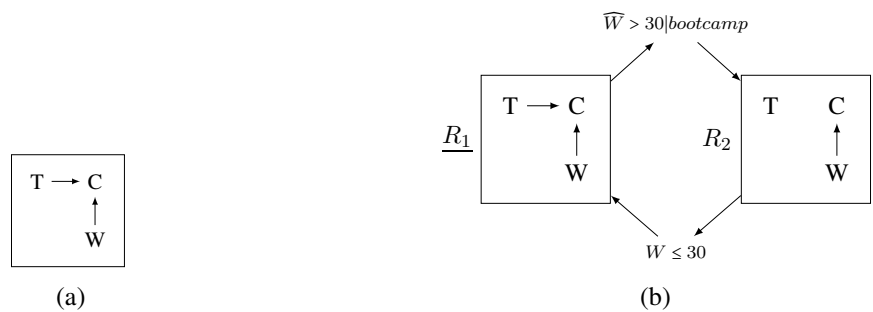
\begin{figure}[t]
	\centering
	\begin{minipage}[t]{0.49\textwidth}
		\begin{subfigure}[t]{\textwidth}
		\centering
		\resizebox{!}{1.5cm} {

\begin{tikzpicture}
	\begin{scope}
		\node[var] (T) {T};
		\node[var] (C)[right=of T] {C};
		\node[var] (W)[below= of C] {W};
		\path (T) edge [connect] (C);
		\path (W) edge [connect] (C);
		\node[box, fit= (T) (C) (W)] (O1) {};
	\end{scope}
\end{tikzpicture}
		}
		\caption{}
		\label{fig:examples-mechanism-truth}
		\end{subfigure}
	\end{minipage}
	\hfill
	\begin{minipage}[t]{0.49\textwidth}
		\begin{subfigure}[t]{\textwidth}
		\centering
		\resizebox{!}{3.5cm} {

\resizebox{1.0\linewidth}{!} {
\centering
		
\begin{tikzpicture}

	\begin{scope}
		\node[var] (T) {T};
		\node[var] (C)[right=of T] {C};
		\node[var] (W)[below= of C] {W};
		\path (T) edge [connect] (C);
		\path (W) edge [connect] (C);
		\node[box, fit= (T) (C) (W), label=left:$\underline{R_1}$] (O1) {};
		
		\node[gate,above=of O1,xshift=2.0cm,yshift=-0.5cm] (G1) {\scriptsize $\wh{W} > 30 | bootcamp $};
		\path (O1) edge [connect] (G1);
	\end{scope}
	
	\begin{scope}[xshift=4.0cm,yshift=0.0cm]
		\node[var] (T) {T};
		\node[var] (C)[right=of T] {C};
		\node[var] (W)[below= of C] {W};
		\path (W) edge [connect] (C);
		\node[box, fit= (T) (C) (W), label=left:$R_2$] (I1) {};
		
		\node[gate,below=of I1,xshift=-2.0cm,yshift=0.5cm] (G3) {\scriptsize $W \leq 30$};
		\path (I1) edge [connect] (G3);
	\end{scope}
	
	\path (G1) edge [connect] (I1);
	\path (G3) edge [connect] (O1);

\end{tikzpicture}

}
		}
		\caption{}
		\label{fig:examples-mechanism-gated}
		\end{subfigure}
	\end{minipage}
	\caption{In (a), the single causal model does not encode the mechanism dependent context, however in (b) the mechanism used to set the context is part of the context itself, thus different outcomes are achieved depending on the mechanism used.}
\end{figure}

As before, using a single causal model (\figurename~\ref{fig:examples-mechanism-truth}) to represent this scenario results in a less informative model than what we can express with a gated model. In \figurename~\ref{fig:examples-mechanism-gated} we have depicted a gated model which can represent such a scenario, where we have used $\wh{W} > 30 | bootcamp$ to mean that the bootcamp mechanism was used.

\subsubsection{Using Gated Models to Identify Causal Effects}

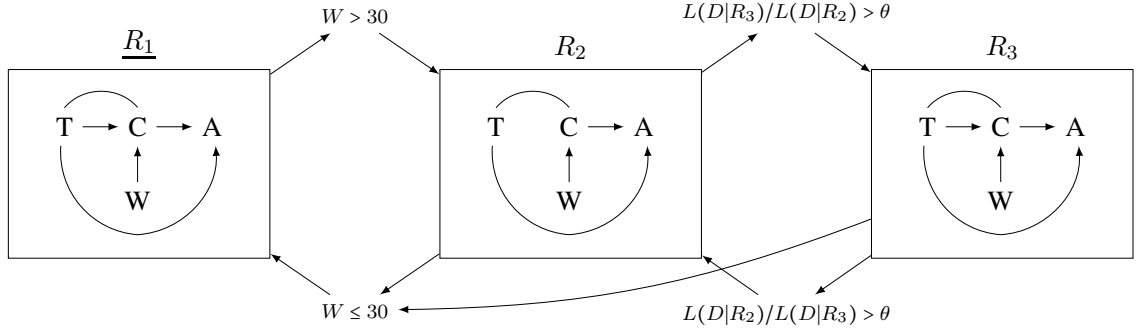
\begin{figure}[t]
	\centering
	\begin{minipage}[t]{1.0\textwidth}
		\centering
		\resizebox{\linewidth}{!} {

\begin{tikzpicture}

	\begin{scope}
		\node[var] (T) {T};
		\node[var, right=of T] (C) {C};
		\node[var, right=of C] (A) {A};
		\node[var, below=of C] (W) {W};
	
		\path (T.north) edge [-, bend left= 50] (C.north);
		\path (T) edge [connect] (C);
		\path (C) edge [connect] (A);
		\path (W) edge [connect] (C);
	
		\path [draw, -latex] (T) to[bend right= 45]  ([yshift=-0.2cm]W.south) to[bend right = 45] (A);

		\node[box, inner sep=15pt, fit= (T) (C) (W) (A), label=above:\underline{$R_1$}] (O1) {};
	\end{scope}
	
	\begin{scope}[xshift=6.0cm,yshift=0.0cm]
		\node[var] (T) {T};
		\node[var, right=of T] (C) {C};
		\node[var, right=of C] (A) {A};
		\node[var, below=of C] (W) {W};
	
		\path (T.north) edge [-, bend left= 50] (C.north);

		\path (C) edge [connect] (A);
		\path (W) edge [connect] (C);
	
		\path [draw, -latex] (T) to[bend right= 45]  ([yshift=-0.2cm]W.south) to[bend right = 45] (A);

		\node[box,  inner sep=15pt, fit= (T) (C) (W) (A), label=above:$R_2$] (I1) {};	
	\end{scope}
	
	\begin{scope}[xshift=12.0cm,yshift=0.0cm]
		\node[var] (T) {T};
		\node[var, right=of T] (C) {C};
		\node[var, right=of C] (A) {A};
		\node[var, below=of C] (W) {W};
	
		\path (T.north) edge [-, bend left= 50] (C.north);

		\path (T) edge [connect] (C);
		\path (C) edge [connect] (A);
		\path (W) edge [connect] (C);
	
		\path [draw, -latex] (T) to[bend right= 45]  ([yshift=-0.2cm]W.south) to[bend right = 45] (A);

		\node[box,  inner sep=15pt, fit= (T) (C) (W) (A), label=above:$R_3$] (O2) {};	
	\end{scope}

	\node[gate,above = of O1,xshift=3.0cm,yshift=-0.5cm] (G1) {\scriptsize $W  > 30$};
	\path (O1) edge [connect] (G1);
	\path (G1) edge [connect] (I1);

	\node[gate,below = of O1,xshift=3.0cm,yshift=0.5cm] (G4) {\scriptsize $W  \leq 30$};
	\path (I1) edge [connect] (G4);
	\path (G4) edge [connect] (O1);
	
	\node[gate,above=of I1,xshift=3.0cm,yshift=-0.5cm] (G2) {\scriptsize $L(D | R_3) / L(D | R_2) > \theta$};
	\path (I1) edge [connect] (G2);
	\path (G2) edge [connect] (O2);	

	\node[gate,below=of I1,xshift=3.0cm,yshift=0.5cm] (G3) {\scriptsize $L(D | R_2) / L(D | R_3) > \theta$};
	\path (O2) edge [connect] (G3);
	\path (G3) edge [connect] (I1);
	\path (O2) edge [connect, bend left=10] (G4);

\end{tikzpicture}
		}
		\caption{A gated model using aADMGs. The causal effect $p(A, C | \wh{T})$ cannot be identified in $R_1$. However, exploiting certain CSIs the effect is identifiable in the context $R_2$.}
		\label{fig:examples-alcohol-activity-admg-gated}
	\end{minipage}
\end{figure}

In this section we shall expand upon the alcohol and physical activity example given in Section~\ref{sec:examples-mechanism-dependent}. This time we shall focus on the identification of causal effects within the contexts, and show how gated models allow us to identify causal effects that cannot be identified using a single causal model even though, unlike in the scenarios described above, none of the gates refers to an intervention.

Consider the causal scenario depicted in \figurename~\ref{fig:examples-alcohol-activity-admg-gated}, which is now represented using a gated model with aADMGs. Here, $C$ represent the monthly consumption of alcoholic beverages for an individual, $W$ represents the individual's monthly level of physical activity measured in number of hours, $A$ represents the number of alcohol related accidents that the individual has been subjected to, and $T$ represents whether or not the individual has read a leaflet containing information about the negative effects of overconsumption of alcoholic beverages, and information about how to avoid accidents when consuming alcohol. From the initial context $R_1$, we can tell that there is confounding between $T$ and $C$, which implies that we cannot identify $p(A, C | \wh{T})$ by Theorem \ref{the:children2}.

We further assume that individuals who have a high level of physical activity do not change the number of alcoholic beverages they consume if they read the leaflet. By exploiting this CSI in $R_2$, $p(A, C | \wh{T})$ can in this context be identified by Theorem \ref{the:children}. Thus the investigator can collect observational data in this context, and use the {\it do}-calculus presented in Section \ref{sec:identification} to calculate $p(A, C | \wh{T})$. If the individual no longer maintains a high level of physical activity, then the gated model expresses that such effect no longer can be calculated from observational data. However, the gated model also expresses that the effect of high physical activity is not stable, and that under certain circumstances (due to unobserved variables, hence the use of threshold gates), the edge $T \rightarrow C$ might be recovered even when $W > 30$. The investigator must therefore be careful, as it may be the case that $R_3$ is more appropriate than $R_2$, thus the effect would again not be identifiable. Therefore, as the investigator collects observations in the context of $W > 30$ it is necessary to assess which of $R_2$ and $R_3$ is most appropriate.

\section{Conclusions}\label{sec:conclusions}

In this paper, we have introduced ADMGs as a combination of oADMGs and aADMGs. We have defined the global Markov property for ADMGs by introducing two equivalent separation criteria. We plan to define local and pairwise Markov properties for ADMGs, and study Markov equivalence between ADMGs.

We have also shown the suitability of ADMGs to represent causal models with additive error terms. We have presented sufficient graphical criteria for the identification of arbitrary causal effects from aADMGs. Some criteria are based on a calculus, while others are based on a decomposition. We plan to study the equivalence of these two sets of criteria, and extend them to ADMGs. We have also presented a necessary and sufficient graphical criterion for the identification of the causal effect of a single variable on the rest of the variables in aADMGs. In the future, we would like to extend this criterion to arbitrary causal effects, as it has been done for oADMGs \citep{HuangandValtorta2006,ShpitserandPearl2006}, and generalize it to ADMGs. 

It is worth mentioning that several of the graphical criteria presented in this paper aim to identify causal effects of the form $p(Y | \wh{X}, W)$, also called conditional causal effects. Such effects are important for the evaluation of conditional and stochastic plans, i.e. plans that intervene on $X$ to set it to a value that depends on the observed value of $W$ either deterministically or stochastically \citep[Chapter 4]{Pearl2009}.

Finally, we have presented gated models for causal effect identification, a new family of graphical models that exploits CSIs to identify additional causal effects. Gated models can be combined with any existing family of graphical models, e.g. oADMGs, aADMGs, ADMGs, etc. We have illustrated the benefits of gated models with examples of certain causal phenomena that many practitioners may encounter when dealing with real-world systems, for instance uncertainty in the outcome of interventions and unstable effects. A line of research that we are interested in pursuing is the extension of the exact learning algorithm presented in Section \ref{sec:learning} to learn gated models from observations and interventions.

\section*{Acknowledgments}

We thank the Reviewers for their comments, which helped us to improve our manuscript.

\bibliography{pena16aIJARv2}
\end{document}